\documentclass[lettersize,journal]{IEEEtran}

\usepackage[T1]{fontenc}
\usepackage{acronym}
\usepackage{amsmath,amssymb}
\usepackage{amsthm}
\usepackage{booktabs} 
\usepackage{cases}
\usepackage{cite}
\usepackage{CJK}
\usepackage{color}
\usepackage{float}
\usepackage{graphicx}
\usepackage[linesnumbered,ruled,vlined]{algorithm2e}
\usepackage{multirow}
\usepackage{optidef}
\usepackage{cleveref}
\usepackage{svg}
\usepackage{subcaption}
\usepackage{tabularx}
\usepackage{tcolorbox}
\usepackage{makecell}
\usepackage{pifont}

\SetKwInput{KwInput}{Input}              
\SetKwInput{KwOutput}{Output}

\newtheorem{assumption}{Assumption}

\newtheorem{definition}{Definition}
\newtheorem{lemma}{Lemma}

\newtheorem{problem}{Problem}
\newtheorem{proposition}{Proposition}

\newtheorem{theorem}{Theorem}
\newtheorem{observation}{Observation}

\acrodef{cnn}[CNN]{convolutional neural network}
\acrodef{dp}[DP]{differential privacy}
\acrodef{dpfl}[DP-FL]{differentially private federated learning}
\acrodef{fedavg}[FedAvg]{federated averaging}
\acrodef{fedsgd}[FedSGD]{federated stochastic gradient descent}
\acrodef{fl}[FL]{Federated learning}
\acrodef{ic}[IC]{incentive compatibility}
\acrodef{ir}[IR]{individual rationality}
\acrodef{iid}[IID]{independent and identically distributed}
\acrodef{jsam}[JSAM]{joint client selection and privacy compensation mechanism}
\acrodef{non-iid}[non-IID]{non-independent and identically distributed}

\acrodef{sgd}[SGD]{stochastic gradient descent}

\let\oldfrac\frac
\renewcommand{\frac}[2]{%
	\mathchoice
	{\oldfrac{#1}{#2}}
	{#1/#2}
	{#1/#2}
	{#1/#2}
}

\begin{document}
\title{JSAM: Privacy Straggler-Resilient Joint Client Selection and Incentive Mechanism Design in Differentially Private Federated Learning}

\author{Ruichen~Xu, Ying-Jun Angela Zhang, ~\IEEEmembership{Fellow, ~IEEE}, Jianwei Huang, ~\IEEEmembership{Fellow, ~IEEE} 
	\thanks{An earlier version of this paper was presented in part at the IEEE INFOCOM workshops 2024~\cite{10620900}.
		Ruichen Xu and Ying-Jun Angela Zhang are with the Department of
		Information Engineering, The Chinese University of Hong Kong, Hong Kong (e-mail: xr021@ie.cuhk.edu.hk; co-corresponding author, email: yjzhang@ie.cuhk.edu.hk).
		Jianwei Huang is with the School of Science and Engineering, Shenzhen Institute of Artificial Intelligence and Robotics for Society, Shenzhen Key Laboratory of Crowd Intelligence Empowered Low-Carbon Energy Network, and CSIJRI Joint Research Centre on Smart Energy Storage, The Chinese University of Hong Kong, Shenzhen, Guangdong, 518172, P.R. China. (co-corresponding author, email: jianweihuang@cuhk.edu.cn).}}
	
\maketitle
\begin{abstract}
	Differentially private federated learning faces a fundamental tension: privacy protection mechanisms that safeguard client data simultaneously create quantifiable privacy costs that discourage participation, undermining the collaborative training process. Existing incentive mechanisms rely on unbiased client selection, forcing servers to compensate even the most privacy-sensitive clients (``privacy stragglers''), leading to systemic inefficiency and suboptimal resource allocation. We introduce JSAM (Joint client Selection and privacy compensAtion Mechanism), a Bayesian-optimal framework that simultaneously optimizes client selection probabilities and privacy compensation to maximize training effectiveness under budget constraints. 
	Our approach transforms a complex 2N-dimensional optimization problem into an efficient three-dimensional formulation through novel theoretical characterization of optimal selection strategies. We prove that servers should preferentially select privacy-tolerant clients while excluding high-sensitivity participants, and uncover the counter-intuitive insight that clients with minimal privacy sensitivity may incur the highest cumulative costs due to frequent participation. Extensive evaluations on MNIST and CIFAR-10 demonstrate that JSAM achieves up to 15\% improvement in test accuracy compared to existing unbiased selection mechanisms while maintaining cost efficiency across varying data heterogeneity levels.
\end{abstract}

\begin{IEEEkeywords}
	Differential privacy, federated learning, incentive mechanism design, client selection
\end{IEEEkeywords}

\section{Introduction}
Federated learning (FL) is a widely studied learning paradigm that facilitates collaborative model training among distributed clients without directly sharing their local data \cite{mcmahan2017communication}.\acused{fl}
During the iterative \ac{fl} training process, each client computes model updates based on their local datasets and shares these updates, typically in the form of model parameter gradients, with a central server. 
The server then aggregates these model updates to iteratively refine the global model.
Despite its decentralized nature, \ac{fl} is susceptible to privacy leaks through various attacks, such as membership inference attacks \cite{melis2019exploiting,shokri2017membership} and data reconstruction attacks \cite{zhu2019deep}.

In recent years, differential privacy (DP) has emerged as a promising privacy protection mechanism in \ac{fl}, offering rigorous privacy guarantees \cite{mcmahan2017learning}, as demonstrated in Fig. \ref{fig:dpfl}.
Specifically, clients add noise to their gradients in accordance with their maximum tolerance to privacy leakage, known as privacy budgets, before transmitting them to the server.
While DP can provide provable privacy protection, it does not fully eliminate privacy leakage \cite{sun2022profit}.
Consequently, clients incur privacy costs that, without adequate compensation mechanisms, may discourage them from participating in \ac{fl}.
\begin{figure}
	\centering
	\includegraphics[width=1\linewidth]{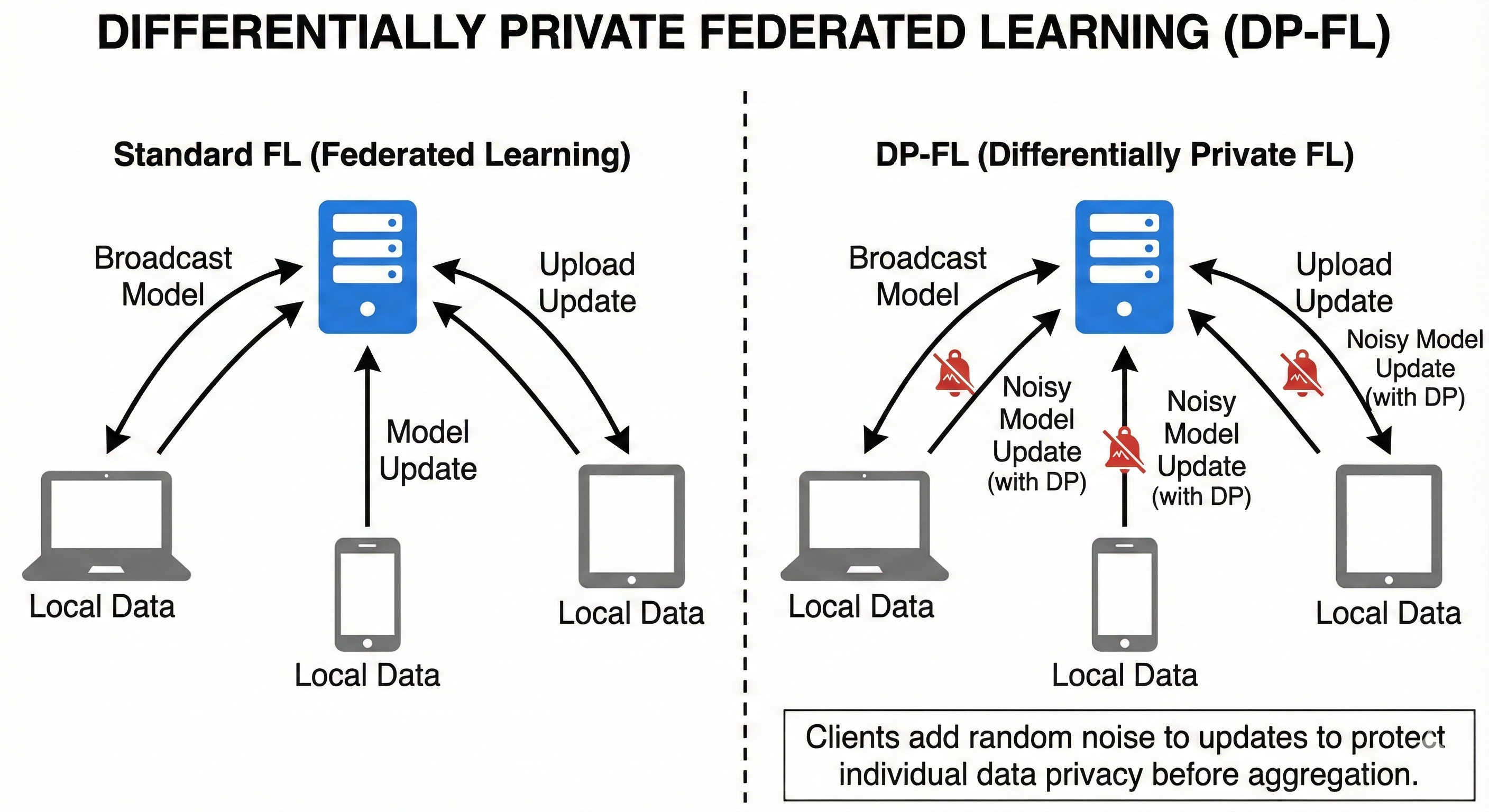}
	\caption{Differentially private federated learning employs noisy gradient updates to safeguard client privacy.}
	\label{fig:dpfl}
\end{figure}

This privacy-participation tension necessitates careful design of client selection and privacy compensation mechanisms in differentially private federated learning (DP-FL).\acused{dpfl} These mechanisms determine which clients to incentivize and how to compensate them for privacy leakage, with the server remunerating clients to secure specific amounts of privacy budgets tailored to their privacy sensitivities. These decisions are inherently interdependent as a client's participation frequency increases, their privacy leakage and associated costs also increase. 

However, most existing studies treat client selection and privacy compensation as disparate problems. 
This decoupled approach is suboptimal, particularly in the presence of \emph{``privacy stragglers''}—clients with stringent privacy requirements. 
In a disjointed design, the aggregation process is often constrained by these stragglers' low privacy budgets, forcing the injection of excessive noise. 
As illustrated in Fig. \ref{fig:straggler}, this privacy straggler effect can significantly degrade global model performance. This necessitates jointly designing client selection and privacy compensation to balance contributions with costs, thereby mitigating straggler effects.

Such a joint design presents several technical challenges. The first challenge lies in modeling how client selection strategies and privacy compensation mechanisms affect the server's payoff, particularly in the presence of non-independent and identically distributed (non-IID) data and heterogeneous privacy sensitivities. \acused{non-iid}
A client's contribution to the global model does not monotonically increase with participation frequency, as frequent participation may introduce bias into the model. Moreover, due to the iterative nature of FL, a client's privacy cost depends on both their sensitivity to privacy leakage and participation frequency. This leads to our first key research question:

\begin{figure}
	\centering
	\includegraphics[width=1\linewidth]{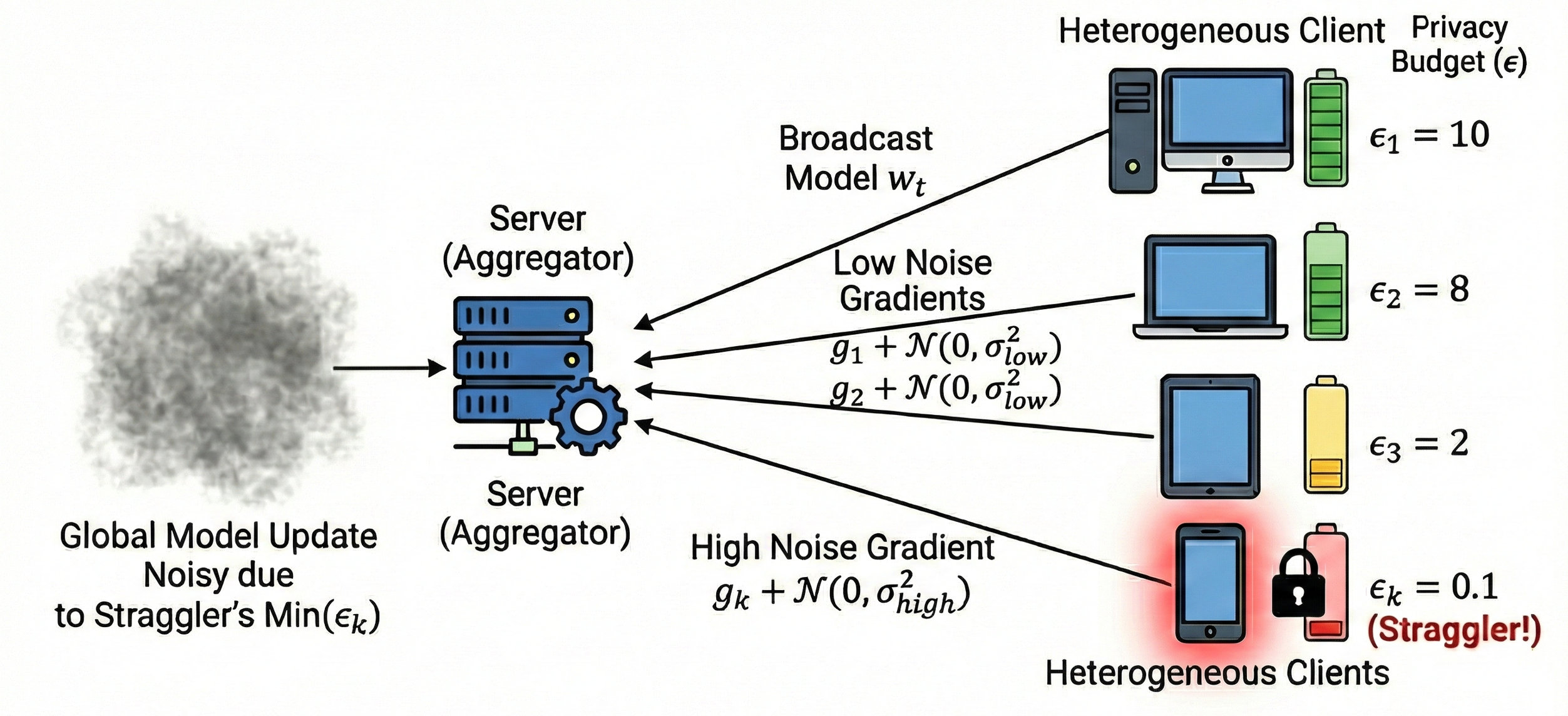}
	\caption{\textbf{The "privacy straggler" effect.} A single client with very low privacy budget ($\epsilon_k$) forces the server to add excessive noise, degrade the model utility.}
	\label{fig:straggler}
\end{figure}

\textbf{Question 1}: \textit{How can the server optimally balance the privacy cost compensations against the degradation in model convergence from biased client selections?}

The second major challenge involves identifying and incentivizing the clients who can contribute most effectively to \ac{fl}. Privacy sensitivities represent private information, creating opportunities for clients to misreport their sensitivities to obtain higher compensation. 
This strategic behavior motivates our second key research question:

\textbf{Question 2}: \textit{How can we design an optimal joint client selection and privacy compensation (incentive) mechanism when clients have private heterogeneous privacy sensitivities in \ac{dpfl}?}

\subsection{Contributions}
The main contributions of the paper are summarized as follows.

\begin{itemize}
\item\textit{Joint client selection and privacy compensation mechanism design}: We present JSAM (\underline{J}oint client \underline{S}election and privacy compens\underline{A}tion \underline{M}echanism), the first work for privacy straggler-resilient joint client selection and privacy compensation design in \ac{dpfl}, considering \ac{non-iid} data and heterogeneous privacy sensitivities.
JSAM addresses the fundamental optimization challenge of balancing clients' data contributions against privacy leakage risks to discard privacy stragglers, thereby boosting the overall privacy-utility tradeoff.

\item\textit{Optimal solution characterization}: Despite the inherent non-convexity of the optimization problem, we successfully characterize the optimal selection probabilities for clients with heterogeneous privacy sensitivities. 
This theoretical result significantly reduces the problem's dimensionality to just three dimensions, enabling efficient algorithmic implementation.

\item\textit{Mechanism design insights}:
Our analysis reveals that optimal selection favors clients with lower privacy sensitivities while potentially excluding high-sensitivity clients entirely. 
We identify an interesting phenomenon where frequently selected clients with low sensitivity may incur higher total privacy costs due to their frequent participation and consequently larger privacy budget allocations.

\item\textit{Comprehensive empirical validation}:
Through extensive numerical evaluations, we show that JSAM outperforms both widely studied unbiased client selection mechanisms and existing biased client selection approaches. 
The empirical results demonstrate that our method achieves up to 15\% improvement in test accuracy while maintaining reasonable monetary costs across different data heterogeneity scenarios.
\end{itemize}

\subsection{Related Work}
This section reviews three key research areas: general FL incentive mechanisms, DP-FL techniques, and incentive mechanisms for DP-FL. 
We identify critical gaps in jointly optimizing client selection and privacy compensation under heterogeneous privacy sensitivities with rigorous convergence analysis.
A detailed comparison is provided in TABLE \ref{tab:comparison}.
\begin{table*}[t]
	\centering
	\small
	\caption{Comparison of Incentive Mechanisms in DP-FL}
	\label{tab:comparison}
	\begin{tabular}{c c c c c}
		\toprule
		\textbf{Mechanism} & \makecell{Client selection} & \makecell{Performance characterization} & \makecell{Client heterogeneity} & \makecell{Is privacy straggler-resilient?} \\
		\midrule
		\cite{sun2021pain,yi2022stackelberg,lin2023heterogeneous,xu2023personalized,hu2020trading,mao2024game,lu2025piece,sun2024socially,sun2022profit} & Unbiased & Convergence analysis & Privacy budget & $\times$ \\
		\cite{wu2021incentivizing} & Unbiased & Convergence analysis & Local dataset size & $\times$ \\
		\cite{qin2024participation} & Unbiased & Convergence analysis & Privacy budget & $\times$ \\
		\cite{xu2021incentive} & Unbiased & Curve fitting & Privacy budget & $\times$ \\
		\cite{ying2020double} & Bid based & Training cost & Training cost & $\times$ \\
		\midrule
		\textbf{Our Method} & \makecell{Performance-aware\\(Flexible)} & Convergence analysis & Privacy budget & $\checkmark$ \\
		\bottomrule
	\end{tabular}
\end{table*}

\subsubsection{Incentive Mechanism Design for Federated Learning}
Encouraging high-quality participation in federated learning is critical, as clients incur costs for computation, communication, and data sharing. Consequently, a significant body of research has focused on designing effective incentive mechanisms. Auction theory is a prominent approach for efficiently allocating resources. 
For instance, some works propose truthful procurement auctions to minimize social cost by co-optimizing client selection, participation scheduling, and the number of training rounds \cite{pang2022incentive}. Others have developed online randomized auctions to manage long-term budget and energy constraints while achieving low regret \cite{yuan2021incentivizing}. 
To address concerns about both data quality and privacy in bidding, multi-dimensional reverse auctions have been used to select participants based on data size, privacy levels, and data distribution \cite{wang2022privaim}.
Recent advancements extend this to dynamic environments, such as incentive-aware federated bargaining for client selection, which integrates auction-based rewards with adaptive bargaining to handle volatile participation \cite{ljv2025incentive}.

Beyond auctions, other economic models have been explored. Contract theory is leveraged to handle information asymmetry, where the server offers a menu of reward options to incentivize clients with multi-dimensional private information \cite{ding2020optimal, liu2021privacy}. 
Additionally, Stackelberg game models have been employed to ensure incentive compatibility in heterogeneous settings, where the server acts as a leader adjusting decay factors to balance client contributions over time, promoting fairness without compromising convergence \cite{javaherian2025incentive}.
Quality-aware approaches employ Earth Mover's Distance for client selection and Stackelberg games for time-dependent rewards, optimizing privacy budgets and server costs in a Nash equilibrium framework \cite{yuan2024qi}.
Other approaches focus on fairness and optimal resource allocation. The Shapley value, for example, has been used to design mechanisms that reward clients commensurate with their contribution to the model's performance \cite{pene2023incentive}. To maximize efficiency under a fixed budget, some mechanisms derive a closed-form optimal sampling probability to minimize the convergence bound \cite{liao2024optimal}. 
These works provide a robust foundation for incentivizing participation but do not typically address the specific challenges introduced by formal privacy guarantees like differential privacy.

Emerging directions also include incentive mechanisms for dynamic client selection in resource-constrained settings, where rewards are tied to real-time contribution metrics to mitigate dropout risks \cite{lu2025piece}.
Personalized incentive mechanisms further integrate clustering preferences to boost participation in heterogeneous data settings, enhancing model appeal and accuracy \cite{khan2023ip}.

Our work considers incentive mechanisms to compensate the clients's privacy costs, which is orthogonal to works considering other aspects such as communication costs and computation costs.

\subsubsection{Differentially Private Federated Learning}
The integration of differential privacy (DP) into \ac{fl} is essential for providing formal privacy guarantees but introduces a fundamental privacy-utility trade-off. \acused{dp}
Much of the research in DP-FL aims to mitigate the negative impact of privacy-preserving noise on model accuracy. 
One line of work focuses on novel noise-injection techniques. 
For example, researchers have proposed using temporally correlated noise in online settings to improve utility \cite{zhang2025locally} or cascading and offsetting noise between training iterations to reduce its cumulative effect \cite{wang2025codp}. 
Adaptive mechanisms dynamically adjust noise levels based on gradient norms or historical updates, enabling layer-specific clipping thresholds to balance privacy and convergence while preserving model fidelity \cite{talaei2024adaptive, cui2025aldp}.
Another strategy involves modifying the communication protocol and aggregation rule, such as using a randomized interaction scheme where clients probabilistically transmit a neighbor's model to reduce their privacy exposure \cite{zhu2025randomized}. 
Tang et al. \cite{tang2025enforcing} developed ReFL, a reputation-based system designed to identify and mitigate the impact of ``rogue'' clients who selfishly inject excessive DP noise to enhance their own privacy, thereby degrading global model accuracy.

While these methods advance the state-of-the-art in privacy preservation, they do not consider the economic incentives required to persuade clients to accept the associated privacy costs.

\subsubsection{Incentive Mechanism Design in Differentially Private Federated Learning}
Combining incentive design with DP-FL presents unique challenges, as compensation must account for clients' heterogeneous privacy sensitivities. Existing research in this area can be broadly categorized into two approaches.

The first approach relies on unbiased client selection, where all clients are potential participants \cite{sun2021pain,yi2022stackelberg,lin2023heterogeneous,xu2023personalized,wu2021incentivizing,hu2020trading}. 
More recently, game-theoretic models have been used to analyze the selfish privacy-preserving behaviors of clients in cross-silo FL, with incentives designed to steer clients from inefficient Nash equilibria toward a socially optimal strategy \cite{mao2024game,qin2024participation}.
In the unbiased client selection paradigm, the mechanism must be designed to compensate even the privacy stragglers. This often leads to systemic inefficiency, as the high rewards required to engage these stragglers may not be justified by their data contribution, straining the server's budget.

To improve efficiency, a second research thrust has emerged based on biased (or strategic) client selection. 
These mechanisms aim to exclude clients who present a poor trade-off between contribution and cost. For example, mechanisms have been developed using the training cost itself as a selection criterion \cite{ying2020double}. 

Current mechanisms fail to jointly optimize client selection and privacy compensation under non-IID data with heterogeneous privacy sensitivities. 
The literature lacks convergence analysis that accounts for the interaction between privacy noise and client selection. 
Unlike existing approaches that treat selection and compensation as separate problems, JSAM integrates rigorous convergence analysis to simultaneously determine optimal selection probabilities and privacy compensation, addressing the fundamental challenge of balancing privacy costs against data utility.

\section{Preliminary}
In this section, we first introduce the basics of \ac{fl} and \ac{dp} in Sections \ref{sec: fl} and \ref{sec: dp}, respectively.
Then, we describe the details of \ac{dp}-\ac{fl} in Section \ref{sec: dp fl}.

\subsection{Federated Learning}\label{sec: fl}
\ac{fl} is a collaborative machine learning paradigm where a central server aims to train a global model using data distributed across $N$ privacy-aware clients, denoted as $\mathcal{N} = \{1,2 ,\cdots, N\}$.
Each client $k\in\mathcal{N}$ possesses a local \ac{non-iid} dataset $\mathcal{M}_k = \{(x_{k,i}, y_{k,i})\mid i \in 1, \cdots, N_k\}$.
For simplicity, we assume that all clients have equal dataset sizes, \textit{i.e.}, $|\mathcal{M}_1| = |\mathcal{M}_2| = \cdots = |\mathcal{M}_N| = M$ .
Our results can be extended to scenarios with imbalanced local datasets.

When all clients participate in the learning, the objective is to minimize the global loss function \cite{li2020federated}:
\begin{align}\label{equ: unbiased obj}
	F(\boldsymbol{w}) = \sum_{k=1}^N \frac{|\mathcal{M}_k|}{\sum_{i=1}^{N}|\mathcal{M}_i|}F_k(\boldsymbol{w}),
\end{align}
where $F_k(\boldsymbol{w})$ is the local objective function for client $k$, defined as:
\begin{align}
	F_k(\boldsymbol{w}) = \frac{1}{|\mathcal{M}_k|} \sum_{(x,y)\in \mathcal{M}_k}\mathcal{L}(x,y, \boldsymbol{w}),
\end{align}
and $\mathcal{L}(x,y,\boldsymbol{w})$ represents the loss of prediction from model $\boldsymbol{w}$ on example $(x,y)$.

However, in practical scenarios, only a subset of clients participate in each training round.
When the server employs a general client selection strategy that selects clients with replacement based on the selection probability vector $\boldsymbol{p}^\textnormal{s} = [p^\textnormal{s}_k]_{k=1}^{N}$, the global objective function becomes:
\begin{align}
	F^\textnormal{b}(\boldsymbol{w}) = \sum_{k=1}^Np^\textnormal{s}_kF_k(\boldsymbol{w}).
\end{align}

Most existing studies (\textit{e.g.}, \cite{sun2021pain},\cite{mcmahan2017communication}) consider an unbiased client selection strategy where $\boldsymbol{p}^\textnormal{s}=\boldsymbol{p}^\textnormal{u} = [\frac{|\mathcal{M}_1|}{\sum_{i=1}^{N}|\mathcal{M}_i|},\cdots, \frac{|\mathcal{M}_N|}{\sum_{i=1}^{N}|\mathcal{M}_i|}]$, ensuring that the objective function aligns with the global objective in equation (\ref{equ: unbiased obj}).
However, this unbiased selection becomes suboptimal in scenarios where clients exhibit heterogeneous privacy sensitivities. 

Our proposed mechanism is applicable to both cross-device and cross-silo FL settings, as the underlying algorithmic principles are consistent across these scenarios.

\subsection{Differential Privacy}\label{sec: dp}
\ac{dp} is a privacy notion that ensures no adversary can distinguish two neighboring datasets from the algorithm's output \cite{dwork2006differential}. 
\ac{dp} possesses desirable properties, such as composability and robustness to post-processing \cite{dwork2014algorithmic}. 
The definition of \ac{dp} is given as follows.

\begin{definition}[Differential privacy]
	A randomized algorithm $\mathcal{M} : \mathcal{X} \rightarrow \mathcal{R}$ is said to satisfy ($\epsilon,\delta$)-\ac{dp} if for any neighboring datasets $X, X' \in \mathcal{X}$  that differ in at most one entry, and for all subsets $\mathcal{S}\subseteq \mathcal{R}$, the following holds:
	\begin{align}
		\Pr\left[\mathcal{M}(X)\in\mathcal{S}\right]\le e^\epsilon \Pr\left[\mathcal{M}(X')\in\mathcal{S}\right]+\delta.
	\end{align}
\end{definition}
A prevalent approach to achieve \ac{dp} is adding Gaussian noise to sensitive information before its release.
Specifically, when a private dataset is utilized iteratively $T_0$ times, the noise variance of the Gaussian noise is determined as outlined in Lemma \ref{lemma: MA} \cite{abadi2016deep}.

\begin{lemma}[Noise variance\cite{abadi2016deep}]\label{lemma: MA}
	There exist constants $c_1$ and $c_2$ so that for any $\epsilon < c_1T_0$, adding $\mathcal{N}(0,\sigma^2)$ noise on the output in each iteration is $(\epsilon, \delta)$-differentially
	private for any $\delta > 0$ if
	\begin{align}
		\sigma \ge c_2 \frac{\sqrt{T_0\log(1/\delta)}}{\epsilon},
	\end{align}
where $T_0$ is the number of iterations.
\end{lemma}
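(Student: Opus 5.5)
This is the moments accountant result of Abadi et al., so the plan is to follow their argument. The idea is to bound the log-moment generating function of the privacy loss random variable for one Gaussian mechanism step, add these bounds across the $T_0$ iterations, and convert the total into an $(\epsilon,\delta)$ guarantee with a tail bound.

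Throughout, normalize the output so its $\ell_2$-sensitivity is at most $1$ (for example by clipping). Fix neighboring datasets $X,X'$ and an output $o$. Define the privacy loss
\[
c(o)=\log\frac{\Pr[\mathcal{M}(X)=o]}{\Pr[\mathcal{M}(X')=o]}
\]
and the moment $\alpha(\lambda)=\max_{X,X'}\log \mathbb{E}_{o\sim\mathcal{M}(X)}[e^{\lambda c(o)}]$.

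\textbf{Step 1: one iteration.} For a single Gaussian step with noise $\mathcal{N}(0,\sigma^2)$ and unit sensitivity, the privacy loss is Gaussian. Its mean is $1/(2\sigma^2)$ and its variance is $1/\sigma^2$. Computing the Gaussian moment generating function gives
\[
\alpha_{\text{step}}(\lambda)=\frac{\lambda(\lambda+1)}{2\sigma^2}.
\]
If subsampling is used, as in the original setting, the corresponding bound is $O(q^2\lambda^2/\sigma^2)$, valid for $\lambda\le\sigma^2\log(1/(q\sigma))$. In either case the bound has the form $C\lambda^2/\sigma^2$ for $\lambda$ in a bounded range.

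\textbf{Step 2: composition.} The per-step mechanisms may be chosen adaptively. Conditioned on the past outputs, each step's loss moment is bounded by $\alpha_{\text{step}}$, so the log-moments add and
\[
\alpha(\lambda)\le T_0\,\alpha_{\text{step}}(\lambda)\le \frac{C T_0\lambda^2}{\sigma^2}.
\]

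\textbf{Step 3: tail bound.} Applying Markov's inequality to $e^{\lambda c}$ gives
\[
\Pr[c\ge\epsilon]\le \exp\bigl(\alpha(\lambda)-\lambda\epsilon\bigr).
\]
By the standard argument, this yields $(\epsilon,\delta)$-DP provided $\exp(\alpha(\lambda)-\lambda\epsilon)\le\delta$. It therefore suffices that
\[
\frac{C T_0\lambda^2}{\sigma^2}\le\frac{\lambda\epsilon}{2}
\quad\text{and}\quad
\frac{\lambda\epsilon}{2}\ge\log(1/\delta).
\]
Choose $\lambda=2\log(1/\delta)/\epsilon$, which satisfies the second condition with equality. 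The first condition then reduces to $\sigma^2\ge 4C\,T_0\log(1/\delta)/\epsilon^2$, which is exactly $\sigma\ge c_2\sqrt{T_0\log(1/\delta)}/\epsilon$.

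\textbf{Main obstacle.} The chosen $\lambda$ has to lie in the range where the per-step moment bound of Step 1 holds. This is the source of the constraint $\epsilon<c_1T_0$: the condition $\lambda\le\sigma^2\log(\cdot)$, once the chosen $\sigma$ is substituted, becomes an upper bound on $\epsilon$ that is linear in $T_0$. In the plain Gaussian case this is mild. In the subsampled case it requires the careful binomial expansion of the mixture moment from Abadi et al. The remaining work is tracking constants, which I would absorb into $c_1$ and $c_2$.
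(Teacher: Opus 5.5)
Your route matches the source the paper relies on. The paper gives no proof of its own and cites Theorem~1 of Abadi et al., whose proof is exactly your moments-accountant argument: per-step log-moment bound, additive composition, and a Markov tail bound with $\lambda=2\log(1/\delta)/\epsilon$. The gap is in your ``main obstacle'' paragraph, which is supposed to produce the hypothesis $\epsilon<c_1T_0$. It does not.

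In the setting the lemma describes, Gaussian noise is added to the full output in every iteration with no subsampling. The paper also uses it this way, with $\sigma_k^2=c_2^2T_k\log(1/\delta)/\epsilon_k^2$. There your Step~1 formula $\lambda(\lambda+1)/(2\sigma^2)$ is exact for every $\lambda>0$, so no condition of the form $\lambda\le\sigma^2\log(\cdot)$ exists. Step~3 actually requires $T_0(\lambda+1)/\sigma^2\le\epsilon$. With $\lambda=2\log(1/\delta)/\epsilon$ and $\sigma^2=c_2^2T_0\log(1/\delta)/\epsilon^2$, this reads $2/c_2^2+\epsilon/(c_2^2\log(1/\delta))\le 1$, i.e.\ $\epsilon\le(c_2^2-2)\log(1/\delta)$. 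Put differently, replacing $\lambda(\lambda+1)$ by $2C\lambda^2$ needs $\lambda$ bounded \emph{below}, not above. The resulting restriction is in terms of $\log(1/\delta)$, and no choice of constants turns it into $\epsilon<c_1T_0$.

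In fact the statement as transcribed cannot be proved without subsampling. Take an output whose sensitivity is exactly $1$ at every step. The $T_0$-fold composition then has privacy loss $c\sim\mathcal{N}(\mu,2\mu)$ with $\mu=T_0/(2\sigma^2)$, which at the threshold noise equals $\epsilon^2/(2c_2^2\log(1/\delta))$. For $\epsilon\ge 4c_2^2\log(1/\delta)$ this gives $\mu\ge 2\epsilon$; the hypothesis $\epsilon<c_1T_0$ permits such $\epsilon$ once $T_0$ is large. As $\epsilon$ grows, $\Pr[c>\epsilon+1]\to 1$. Since any valid $\delta$ satisfies $\delta\ge(1-e^{-1})\Pr[c>\epsilon+1]$, $\delta$ cannot be small.

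A constraint linear in $T$ appears only in Abadi et al.'s subsampled theorem: $\epsilon<c_1q^2T$ with $\sigma\ge c_2q\sqrt{T\log(1/\delta)}/\epsilon$. It comes from their Lemma~3 hypotheses $\sigma\ge1$, $q<1/(16\sigma)$ and $\lambda\le\sigma^2\log(1/(q\sigma))$. These are incompatible with $q=1$, so you cannot recover the paper's form by setting $q=1$ and absorbing constants.

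To turn your outline into a proof, either keep the sampling rate $q$ and its hypotheses in the statement, or, for the plain Gaussian mechanism, replace $\epsilon<c_1T_0$ by $\epsilon\le c_1\log(1/\delta)$.
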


\subsection{Differentially Private Federated Learning}\label{sec: dp fl}
In this subsection, we introduce the workflow of \ac{dp}-\ac{fl}.
Before training, the server determines the client selection strategy $\boldsymbol{p}^\text{s}$ and the noise variance $\boldsymbol{\sigma}^2$.
Subsequently, clients and server iteratively exchange noisy gradients and perform model aggregations.

The \ac{dp}-\ac{fl} training process operates as follows:

\begin{enumerate}
	\item Client selection and model Distribution: 
	The server determines the subset of clients $\mathcal{S}_t$ to be selected in each iteration $t$ based on the selection probabilities $\boldsymbol{p}^\textnormal{s}$ and dispatches the current global model $\boldsymbol{w}_t$ to the selected clients.
	\item Local gradient computation and perturbation: Upon receiving $\boldsymbol{w}_t$, each selected client $k \in \mathcal{S}_t$ performs the following steps:
	\begin{itemize}
		\item Gradient computation: Computes the gradient of the local loss function with respect to the model parameters, $\nabla \mathcal{L}_{\boldsymbol{w}_t}(x, y, \boldsymbol{w}_t)$.
		\item Gradient Clipping: Applies an L2-norm clipping function to the computed gradient to bound its sensitivity:
		\begin{align}
			\boldsymbol{g}_{k,t}^\text{clip} = \frac{1}{|\mathcal{M}_k|}\!\!\sum_{(x,y)\in \mathcal{M}_k}\!\!\!\! \text{Clip}\!\left(\nabla \mathcal{L}_{\boldsymbol{w}_t}(x,y,\boldsymbol{w}_t),C\right),
		\end{align}
		where $\text{Clip}(\boldsymbol{z},C)$ is a clipping function that clips the L2-norm of $\boldsymbol{z}$ to $C$, \textit{i.e.}, $\text{Clip}(\boldsymbol{z},C) = \frac{\boldsymbol{z}}{\max\{1,\|\boldsymbol{z}\|_2/C\}}$.
		\item Noise Addition: Adds Gaussian noise to the clipped gradient to ensure DP:
		\begin{equation}
			\begin{aligned}
			\boldsymbol{g}_{k,t} \!=& \boldsymbol{g}_{k,t}^\text{clip}+ \mathcal{N}(0, \sigma_k^2C^2\boldsymbol{I}),
			\end{aligned}
		\end{equation}
		
		According to Lemma \ref{lemma: MA}, the noise variance for client $k$ is given by: \begin{align} \sigma_k^2 = c_2^2 T_k \frac{\log(1/\delta)}{\epsilon_k^2}, \end{align} where $T_k$ denotes the total number of times client $k$ has been selected during FL, and $\epsilon_k$ represents the privacy budget of client $k$.
	\end{itemize}
	\item Model aggregation and update: The server aggregates the noisy gradients received from the selected clients using a learning rate $\alpha$: 
\begin{align}
	\boldsymbol{w}_{t+1} = \boldsymbol{w}_t - \alpha \frac{1}{|\mathcal{S}_t|}\sum_{k\in\mathcal{S}_t}\boldsymbol{g}_{k,t},
\end{align}
	 The updated global model $\boldsymbol{w}_{t+1}$ is then disseminated to clients in the subsequent iteration.
	 \item Termination: After $T$ iterations, the server finalizes the trained model as $\boldsymbol{w}_T$.
\end{enumerate}
Pre-determined client selection is a fundamental requirement in DP-FL due to the privacy composition theorem. Under differential privacy, each access to a client's data consumes part of the privacy budget, and the total privacy cost accumulates across all training rounds. Consequently, the noise variance required to achieve a target privacy level $(\epsilon,\delta)$ must be calibrated based on the total number of times each client participates throughout the entire training process. This necessitates fixing all client selection probabilities $\{p_i\}_{i=1}^N$ before training begins, as any adaptive selection based on intermediate results would violate the privacy composition bounds.

\section{System Model}\label{sec: model}
In this section, we model the payoff functions for the clients and the server taking into account privacy sensitivities, payments, and model accuracy.

\subsection{Client Payoff}
For any client $k \in \mathcal{N}$, their privacy cost is directly proportional to their allocated privacy budget \cite{ghosh2011selling}, following the established privacy-cost model:
\begin{align}
	\textnormal{cost}_k^\textnormal{p} = c_k\epsilon_k,
\end{align} 
where $c_k$ represents the privacy sensitivity type of client $k$ (the per unit cost of privacy leakage) and $\epsilon_k$ denotes the privacy budget allocated to client $k$.
To offset these privacy costs, the server implements a compensation mechanism through a payment vector $\boldsymbol{\pi} = [\pi_k]_{k\in\mathcal{N}}$. 
This leads to a straightforward utility function for each client $k$:
\begin{align}
	U^\textnormal{c}_k = \pi_k - c_k\epsilon_k.
\end{align}
\subsection{Server Payoff}
The server's objective encompasses two primary goals: minimizing the model training loss and optimizing the total monetary compensation allocated to participating clients. 
The server's objective can be formally expressed as:
\begin{equation}\label{opt: server}
	\begin{aligned} 
		\min& \text{ } C^\textnormal{s} = \eta\textnormal{Training loss} + \sum_{k \in \mathcal{N}} \pi_k,
	\end{aligned}
\end{equation}
where $\eta$ represents a weighting coefficient that balances the server's priorities between training performance and cost efficiency.

Given the inherent complexity of \ac{fl}, precisely determining individual client contributions to the global model's training loss a priori is computationally intractable. 
To address this challenge, we employ an upper bound approximation of the training loss, as proposed in \cite{xu}.
\begin{theorem}[Convergence analysis\cite{xu}]
	Suppose the objective function is $L$-smooth and the gradient norm is bounded by $R$.
	When employing stepsize of $\alpha = \oldfrac{R}{C}\oldfrac{1}{\sqrt{T}}$, the expected average gradient norm satisfies
	\begin{equation}\label{equ: convergence_smooth}
		\begin{aligned}
			&\sqrt{\frac{1}{T}\sum_{t=1}^{T}\mathbb{E}\left[\left\|\nabla F(\boldsymbol{w}_t)\right\|_2^2\right]}
			\le \frac{R}{2}\underbrace{\left(\left\| \boldsymbol{p}^\textnormal{s}\!-\!\boldsymbol{p}^\textnormal{u}\right\|_1+G^\textnormal{clip}\right)}_\textnormal{Non-vanishing training loss}\\
			&\!+\!\frac{R}{2}\!\underbrace{\left[\sqrt{(\left\| \boldsymbol{p}^\textnormal{s}\!-\!\boldsymbol{p}^\textnormal{u}\right\|_1\!+\!G^\textnormal{clip})^2 \!+\! 2\sum_{k=1}^{N}(p^\textnormal{s}_{k})^2D\sqrt{T}L\frac{\sigma_k^2}{T_k}}\right]}_\textnormal{Non-vanishing training loss}\\
			&+\sqrt{\frac{F(\boldsymbol{w}_1)-F(\boldsymbol{w}^*)}{\sqrt{T}}+\frac{1}{2} \oldfrac{1}{\sqrt{T}}LR^2},
		\end{aligned}
	\end{equation}
	where $G^\textnormal{clip}$ is a term related to the clipping error, term $D$ is the gradient dimension, and $T$ is the total number of training iterations.
\end{theorem}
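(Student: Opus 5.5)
We prove the bound with the standard one-step descent analysis of DP-SGD under biased client sampling, then average over $T$ rounds and solve a quadratic inequality in the root-mean-square gradient norm.

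\textbf{Step 1: one-step descent.} Since $F$ is $L$-smooth,
\begin{align*}
F(\boldsymbol{w}_{t+1})\le F(\boldsymbol{w}_t)-\alpha\langle\nabla F(\boldsymbol{w}_t),\bar{\boldsymbol{g}}_t\rangle+\frac{L\alpha^2}{2}\|\bar{\boldsymbol{g}}_t\|_2^2,
\end{align*}
where $\bar{\boldsymbol{g}}_t=\frac{1}{|\mathcal{S}_t|}\sum_{k\in\mathcal{S}_t}\boldsymbol{g}_{k,t}$. Take the conditional expectation over the sampling with replacement (per-slot law $\boldsymbol{p}^\textnormal{s}$) and over the Gaussian noise.

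\textbf{Step 2: the inner-product term.} The noise is zero-mean, so $\mathbb{E}[\bar{\boldsymbol{g}}_t]=\sum_k p^\textnormal{s}_k\boldsymbol{g}^\textnormal{clip}_{k,t}$. Write this as
\begin{align*}
\nabla F(\boldsymbol{w}_t)+\sum_k (p^\textnormal{s}_k-p^\textnormal{u}_k)\nabla F_k(\boldsymbol{w}_t)+\sum_k p^\textnormal{s}_k\bigl(\boldsymbol{g}^\textnormal{clip}_{k,t}-\nabla F_k(\boldsymbol{w}_t)\bigr).
\end{align*}
\begin{itemize}
\item The first term yields $-\alpha\|\nabla F(\boldsymbol{w}_t)\|_2^2$.
\item The selection-bias term is bounded via Cauchy--Schwarz and $\|\nabla F_k\|\le R$ by $\alpha\|\nabla F(\boldsymbol{w}_t)\|_2\,R\,\|\boldsymbol{p}^\textnormal{s}-\boldsymbol{p}^\textnormal{u}\|_1$.
\item The clipping term is absorbed into $\alpha\|\nabla F(\boldsymbol{w}_t)\|_2\,R\,G^\textnormal{clip}$, taking $G^\textnormal{clip}$ as a normalized clipping bias defined in \cite{xu}.
\end{itemize}

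\textbf{Step 3: the second-moment term.} Since clipped gradients have norm at most $C$, the signal part of $\mathbb{E}\|\bar{\boldsymbol{g}}_t\|_2^2$ is at most $C^2$. The noise contributes $D C^2$ times a weighted sum of the variances $\sigma_k^2$. To reach the stated form, convert the per-participation variance into a per-round quantity $\sigma_k^2/T_k$ and weight it by $(p_k^\textnormal{s})^2$, using $\mathbb{E}[T_k]=p^\textnormal{s}_kT$ and the selection law. This gives a term of order $C^2 D\sum_k (p^\textnormal{s}_k)^2\sigma_k^2 T/T_k$.

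\textbf{Step 4: telescoping and the quadratic inequality.} Sum over $t=1,\dots,T$, divide by $\alpha T$, and substitute $\alpha=\frac{R}{C\sqrt{T}}$. With $X=\sqrt{\frac1T\sum_t\mathbb{E}\|\nabla F(\boldsymbol{w}_t)\|^2}$, Jensen's inequality gives $\frac1T\sum_t\mathbb{E}\|\nabla F(\boldsymbol{w}_t)\|\le X$, so
\begin{align*}
X^2\le R\,b\,X+c,\qquad b=\|\boldsymbol{p}^\textnormal{s}-\boldsymbol{p}^\textnormal{u}\|_1+G^\textnormal{clip},
\end{align*}
where $c$ collects three terms:
\begin{itemize}
\item $\frac{F(\boldsymbol{w}_1)-F^*}{\sqrt{T}}$;
\item $\frac{LR^2}{2\sqrt{T}}$ from the signal part of the second moment;
\item the noise part, of order $R^2D\sqrt{T}L\sum_k(p^\textnormal{s}_k)^2\sigma_k^2/T_k$ after substitution.
\end{itemize}
Solving the quadratic gives $X\le \frac{Rb}{2}+\frac12\sqrt{R^2b^2+4c}$. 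Splitting $\sqrt{R^2b^2+4c}\le\sqrt{R^2b^2+4c_\textnormal{noise}}+2\sqrt{c_\textnormal{rest}}$ then produces exactly the three displayed terms, with constants matched to $R/2$.

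\textbf{Main obstacle.} The hardest step is Step 3. Getting the noise term precisely as $2\sum_k (p_k^\textnormal{s})^2 D\sqrt{T}L\,\sigma_k^2/T_k$ requires careful bookkeeping of how sampling with replacement and the averaging by $|\mathcal{S}_t|$ interact with the calibrated variances $\sigma_k^2$ from Lemma~\ref{lemma: MA}, including the normalization by $T_k$. I would first try treating $|\mathcal{S}_t|$ as fixed and bounding the cross terms by independence of the noise across clients.
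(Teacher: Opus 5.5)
The paper gives no proof of this theorem; it imports it from \cite{xu}, so there is no in-paper argument to compare against. Your route is the standard one: descent lemma, splitting $\mathbb{E}[\bar{\boldsymbol{g}}_t]$ into $\nabla F(\boldsymbol{w}_t)$ plus a selection-bias term and a clipping term, bounding the second moment, telescoping, then solving the quadratic in $X$ and using subadditivity of the square root. Up to the point raised in the next paragraph, it reproduces the displayed bound, including the factor $2$ under the square root.

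Your ``main obstacle'' resolves cleanly. With $m=|\mathcal{S}_t|$ slots drawn i.i.d.\ from $\boldsymbol{p}^\textnormal{s}$ and independent noise, $\mathbb{E}\|\bar{\boldsymbol{g}}_t\|_2^2\le C^2+(DC^2/m)\sum_k p^\textnormal{s}_k\sigma_k^2$. The ratio $\sigma_k^2/T_k=c_2^2\log(1/\delta)/\epsilon_k^2$ does not depend on $T_k$. So writing $\sigma_k^2=(\sigma_k^2/T_k)\,T_k$ with $T_k=m\,p^\textnormal{s}_kT$ gives exactly $DC^2T\sum_k(p^\textnormal{s}_k)^2\sigma_k^2/T_k$, with constant one. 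Two cautions apply:
\begin{itemize}
\item Your $\mathbb{E}[T_k]=p^\textnormal{s}_kT$ is right only for one client per round. In general the factor $m$ in $T_k$ cancels the $1/m$ from averaging, and using $p^\textnormal{s}_kT$ with $m>1$ would undercount the noise by a factor $m$.
\item $T_k$ has to be the fixed count to which the noise was calibrated before training, as the paper's predetermined-selection remark requires. If it were the realized random count, you would get $\mathbb{E}[T_k^2]$ and an extra variance term that is not in the display.
\end{itemize}

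The claim that is not right as written is that substituting $\alpha=R/(C\sqrt{T})$ ``produces exactly the three displayed terms.'' Dividing by $\alpha T$ actually gives
\[
X^2\le RbX+\frac{C}{R}\Bigl[\frac{F(\boldsymbol{w}_1)-F(\boldsymbol{w}^*)}{\sqrt{T}}+\frac{LR^2}{2\sqrt{T}}+\frac{LR^2D\sqrt{T}}{2}\sum_{k=1}^{N}(p^\textnormal{s}_k)^2\frac{\sigma_k^2}{T_k}\Bigr],
\]
so every term of your $c$ carries a factor $C/R$ that you silently set to one. The displayed inequality therefore follows from your argument only when $C\le R$ (in particular $C=R$). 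For $C>R$, a factor $C/R$ remains on the noise term under the second square root and on the radicand of the last term. You should state this assumption explicitly.

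Finally, since you cannot consult \cite{xu}, fix $G^\textnormal{clip}$ by definition, e.g.\ $G^\textnormal{clip}:=R^{-1}\sup_t\|\sum_k p^\textnormal{s}_k(\boldsymbol{g}^\textnormal{clip}_{k,t}-\nabla F_k(\boldsymbol{w}_t))\|_2$. This is exactly what your Step 2 needs, and it matches the theorem's description of $G^\textnormal{clip}$ as a clipping-error term.
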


Following \cite{xu}, assuming the clipping threshold is appropriately chosen and ignoring the clipping error $G^\textnormal{clip}$, we can simplify the non-vanishing training loss in (\ref{equ: convergence_smooth}) as 
\begin{equation}
\begin{aligned}
	& \left\| \boldsymbol{p}^\textnormal{s}-\boldsymbol{p}^\textnormal{u}\right\|_1 + \sqrt{\left\| \boldsymbol{p}^\textnormal{s}-\boldsymbol{p}^\textnormal{u}\right\|_1^2 + Q \sum_{k:p^\textnormal{s}_k\neq 0} \frac{(p^\textnormal{s}_{k})^2}{\epsilon_k^2}},
\end{aligned}
\end{equation}
where $Q = 2c_2^2\log(1/\delta)D\sqrt{T}L$. 

Combining both the training loss and the monetary costs, the server's objective becomes:
\begin{equation}\label{opt: server summarize}
	\begin{aligned} 
		\min \text{ } C^s \!=&\; \eta\left(\left\| \boldsymbol{p}^\textnormal{s}\!-\!\boldsymbol{p}^\textnormal{u}\right\|_1 \!+\!\! \sqrt{\!\left\| \boldsymbol{p}^\textnormal{s}\!-\!\boldsymbol{p}^\textnormal{u}\right\|_1^2 \!+\! Q\! \sum_{k:p^\textnormal{s}_k\neq 0}\! \frac{(p^\textnormal{s}_{k})^2}{\epsilon_k^2}}\right) \\
		&\;+ \sum_{k \in \mathcal{N}} \pi_k.
	\end{aligned}
\end{equation}
This formulation incorporates three key components:
\begin{itemize}
	\item The term $\left\| \boldsymbol{p}^\textnormal{s} - \boldsymbol{p}^\textnormal{u} \right\|_1$
	quantifies the deviation error from an unbiased client selection strategy, which directly contributes to the training loss in the presence of non-IID data. 
	Critically, existing DP-FL incentive mechanisms that rely on convergence analysis have been limited to unbiased selection, and thus have not characterized the error introduced by a more flexible client selection approach.
	\item The term $Q \sum_{k:p^\textnormal{s}_k\neq 0} \frac{(p^\textnormal{s}_{k})^2}{\epsilon_k^2}$ captures the training loss introduced by \ac{dp}.
	\item The term $\sum_{k \in \mathcal{N}} \pi_k$ represents the total monetary cost incurred by the server in compensating clients.
\end{itemize}

In practical implementations, the server typically lacks prior knowledge of clients' privacy sensitivities ($c_k$). To address this challenge, we introduce the Joint client Selection and privacy compensAtion Mechanism (JSAM), designed to effectively elicit these unknown privacy sensitivity parameters from participating clients.

\section{Joint Client Selection and Privacy Compensation Mechanism Design}\label{sec: incentive mechanism}
In this section, we begin by introducing the JSAM framework in Section \ref{subsec: jsam framework}. 
Next, we describe the JSAM problem formulation in Section \ref{sec: opt problem}, followed by its reformulation in Section \ref{sec: reformulation}.

\subsection{JSAM}\label{subsec: jsam framework}
\begin{figure}
	\centering
	\includegraphics[width=1\linewidth]{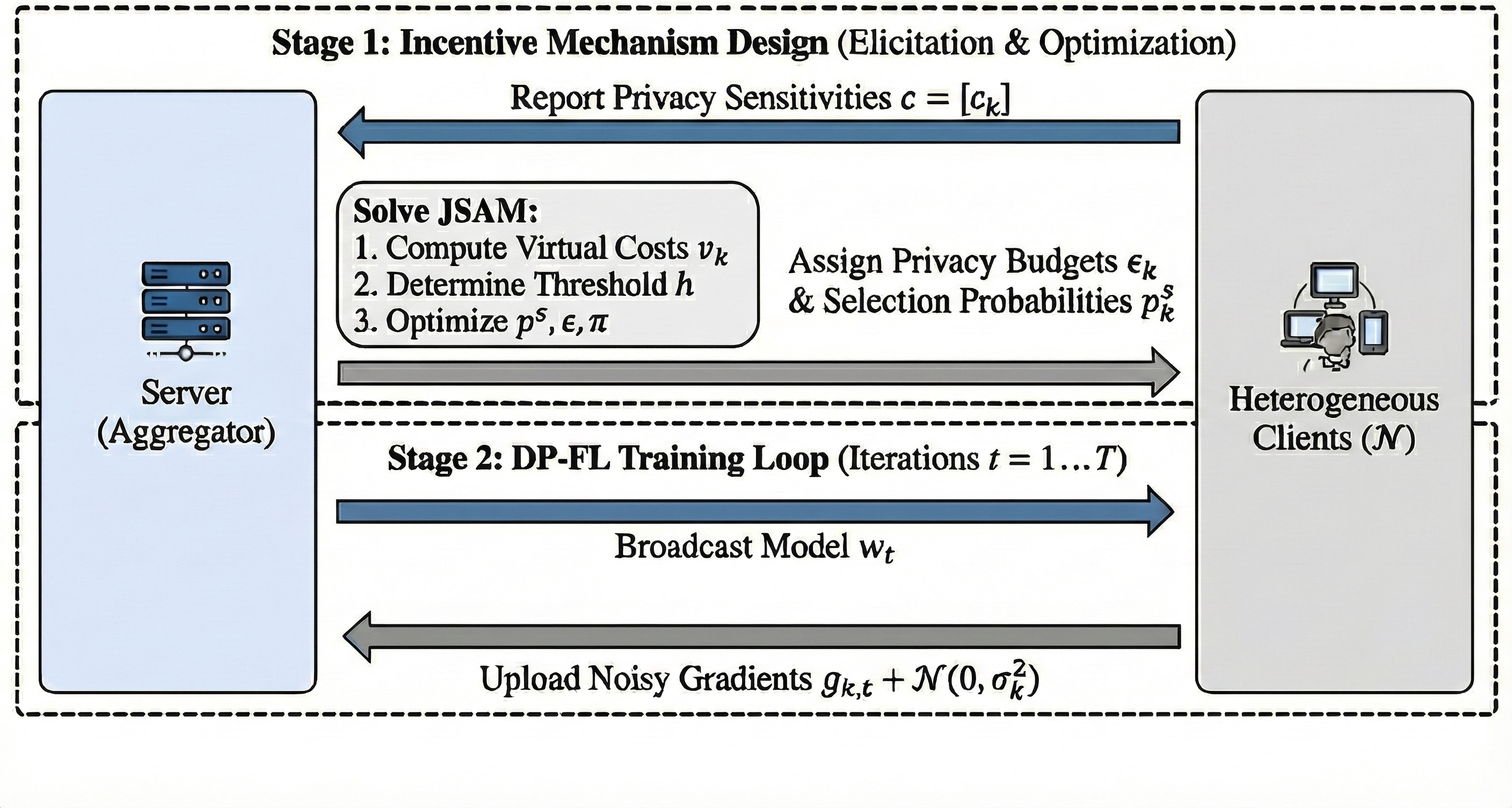}
	\caption{Workflow illustration of JSAM.}
	\label{fig:diagram}
\end{figure}

The proposed JSAM, illustrated in Fig. \ref{fig:diagram}, establishes a two-stage process where the server optimizes the triple $(\boldsymbol{p}^\textnormal{s}, \boldsymbol{\epsilon},\boldsymbol{\pi})$ based on clients' reported privacy types $\boldsymbol{c}$ to maximize its payoff.
The mechanism begins with an elicitation phase, where clients submit their privacy sensitivity reports $\boldsymbol{c} = [c_k]_{k\in\mathcal{N}}$. Subsequently, the server utilizes these reports to determine the optimal configuration for the privacy budgets $\boldsymbol{\epsilon} = [\epsilon_k]_{k\in\mathcal{N}}$, selection probabilities $\boldsymbol{p}^\textnormal{s} = [p^\textnormal{s}_k]_{k\in\mathcal{N}}$, and compensation payments $\boldsymbol{\pi} = [\pi_k]_{k\in\mathcal{N}}$.

Recalling from Section \ref{sec: dp fl} that the client selection plan should be pre-determined, the compensation model is simplified to a single payment. 
This payment is issued once to cover a client's privacy costs for the entire duration of the training.

The subsequent subsections provide a detailed analysis of the server-side optimization process for these parameters.
\subsection{Problem Formulation}\label{sec: opt problem}
In our mechanism, the server initially lacks knowledge of clients' privacy sensitivities $\boldsymbol{c}$ and must elicit this information directly from the clients. 
According to the revelation principle \cite{myerson1981optimal}, we can focus on direct mechanisms where truthful reporting of clients' types constitutes a Bayesian Nash equilibrium.
 
The mechanism must satisfy two fundamental constraints: incentive compatibility (IC) and individual rationality (IR). \acused{ic} \acused{ir}
\begin{definition}[Incentive compatibility - IC]
	A mechanism satisfies incentive compatibility when each client maximizes its payoff through truthful reporting of its privacy sensitivity.
	Formally, for all $k\in\mathcal{N}$ and for all $c_k,c_k' \!\in\! \mathbb{R}^+$,
	\begin{align}\nonumber
		\mathbb{E}_{\boldsymbol{c}_{-k}}\left[U_k^\textnormal{c} (c_k, \boldsymbol{c}_{-k})\right]\! \ge\! \mathbb{E}_{\boldsymbol{c}_{-k}}\left[U_k^\textnormal{c} (c_k', \boldsymbol{c}_{-k})\right].
	\end{align}
\end{definition}
The \ac{ir} constraint ensures that each participating client receives a non-negative payoff, \textit{i.e.}, 
\begin{definition}[Individual rationality - IR]
	A mechanism satisfies individual rationality when each participating client receives a non-negative expected payoff. 
	Formally, for all $k\in\mathcal{N}$ and for all $c_k \in \mathbb{R}^+$,
	\begin{align}\nonumber
		\mathbb{E}_{\boldsymbol{c}_{-k}}\left[U_k^\textnormal{c}(c_k,\boldsymbol{c}_{-k})\right] \ge 0.
	\end{align}
\end{definition}

\subsubsection{JSAM Design Problem}
The server's objective is to design optimal selection and payment rules $(\boldsymbol{p}^\textnormal{s}, \boldsymbol{\epsilon},\boldsymbol{\pi})$ that maximize its payoff while satisfying both IC and IR constraints. This can be formalized as:
\begin{tcolorbox}[left = 0.5pt,top=0.5pt,bottom=0.5pt]
\begin{problem}{JSAM design problem.}\label{prob: opt}

\begin{equation}\nonumber
	\begin{aligned}
		\min_{\boldsymbol{\epsilon(\boldsymbol{c})}, \boldsymbol{p}^\textnormal{s}(\boldsymbol{c}), \boldsymbol{\pi}(\boldsymbol{c})}&\text{ }\!\mathbb{E}_{\boldsymbol{c}}\left[\eta \left\| \boldsymbol{p}^\textnormal{s}(\boldsymbol{c})-\boldsymbol{p}^\textnormal{u}\right\|_1+ \sum_{k \in \mathcal{N}}\pi_k(\boldsymbol{c})\right]\\
		 &\!+ \mathbb{E}_{\boldsymbol{c}}\left[\eta\sqrt{\left\| \boldsymbol{p}^\textnormal{s}(\boldsymbol{c})-\boldsymbol{p}^\textnormal{u}\right\|_1^2 + Q\!\! \sum_{k:p^\textnormal{s}_k\neq 0}\!\! \frac{(p^\textnormal{s}_{k}(\boldsymbol{c}))^2}{\epsilon_k^2(\boldsymbol{c})}}\right]\\
		\textup{s.t. }\,\,\,\,\,\,\,&\! \sum_{k=1}^{N}p^\textnormal{s}_{k}(\boldsymbol{c}) = 1, p^\textnormal{s}_{k}(\boldsymbol{c}) \ge 0, \forall k \in \mathcal{N},\\
		&\!  \mathbb{E}_{\boldsymbol{c}_{-k}}\!\!\left[\pi_k(c_k, \boldsymbol{c}_{-k}\!)  \!-\!c_k\epsilon_k(c_k,\boldsymbol{c}_{-k})\right]\! \ge\! 0, \forall k,\!c_k,\!c_k',\\
		&\! \mathbb{E}_{\boldsymbol{c}_{-k}}\left[\pi_k(c_k,\boldsymbol{c}_{-k}) \!-\!c_k\epsilon_k(c_k,\boldsymbol{c}_{-k})\right]\\
		&\!\ge\mathbb{E}_{\boldsymbol{c}_{-k}}\left[\pi_k(c_k',\boldsymbol{c}_{-k}) -c_k\epsilon_k(c_k',\boldsymbol{c}_{-k})\right],\forall k,c_k.
	\end{aligned}
\end{equation}
\end{problem}
\end{tcolorbox}

\subsection{Reformulating the JSAM Design Problem}\label{sec: reformulation}

Directly solving Problem \ref{prob: opt} is challenging due to the complexity introduced by the payment function $\boldsymbol{\pi}(\boldsymbol{c})$, which depends on the distribution of privacy sensitivities $\boldsymbol{c}$.
To make the problem tractable, we use a series of reformulations to it, shown is Fig. \ref{fig:workflow}.
We begin by characterizing the set of payments that satisfy both \ac{ic} and \ac{ir} constraints.
\begin{lemma}\label{lemma: ir ic}
	For any client $k \in \mathcal{N}$, a payment $\pi_k$ satisfies the \ac{ir} and \ac{ic} constraints if and only if there exists a function $\epsilon_k(c_k, \boldsymbol{c}_{-k})$ that is weakly decreasing with $c_k$ and
	\begin{align}\label{equ: payment}
		\pi_k(c_k) = \int_{c_k}^{\infty}\mathbb{E}_{\boldsymbol{c}_{-k}}\epsilon_{k}(z,\boldsymbol{c}_{-k})dz + c_k\epsilon_{k}(c_k,\boldsymbol{c}_{-k}) + d_k,
	\end{align}
	where $d_k$ is a non-negative constant.
\end{lemma}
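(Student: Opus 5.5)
We treat this as a standard Myerson characterization of single-parameter Bayesian mechanisms, in the procurement (reverse-auction) form. Fix a client $k$ and define the interim quantities
\begin{align}\nonumber
\bar\epsilon_k(c_k)=\mathbb{E}_{\boldsymbol{c}_{-k}}\left[\epsilon_k(c_k,\boldsymbol{c}_{-k})\right],\qquad \bar\pi_k(c_k)=\mathbb{E}_{\boldsymbol{c}_{-k}}\left[\pi_k(c_k,\boldsymbol{c}_{-k})\right].
\end{align}
By linearity of the client payoff, both IC and IR involve only these interim functions. The interim utility of reporting $c_k'$ when the true type is $c_k$ is $\bar\pi_k(c_k')-c_k\bar\epsilon_k(c_k')$. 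Let $u_k(c_k)=\max_{c_k'}\{\bar\pi_k(c_k')-c_k\bar\epsilon_k(c_k')\}$. Under IC this maximum is attained at truth-telling.

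\textbf{Necessity.} Write the IC inequality for the pair $(c_k,c_k')$ and for the swapped pair $(c_k',c_k)$, then add them. This gives
\begin{align}\nonumber
(c_k'-c_k)\left(\bar\epsilon_k(c_k)-\bar\epsilon_k(c_k')\right)\ge 0,
\end{align}
so $\bar\epsilon_k$ is weakly decreasing. This is the monotonicity condition. We will read the lemma's ``weakly decreasing'' in this interim sense, or else require it pointwise, which is sufficient.

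Next, $u_k$ is a pointwise maximum of affine functions of $c_k$, hence convex. By the envelope theorem it is absolutely continuous with $u_k'(c_k)=-\bar\epsilon_k(c_k)$ almost everywhere. Integrating from $c_k$ to $\infty$ yields
\begin{align}\nonumber
u_k(c_k)=\int_{c_k}^{\infty}\bar\epsilon_k(z)\,dz+d_k,\qquad d_k=\lim_{c\to\infty}u_k(c).
\end{align}
Since $u_k$ is decreasing in $c_k$, IR for all types is equivalent to $d_k\ge 0$. Substituting $\bar\pi_k(c_k)=u_k(c_k)+c_k\bar\epsilon_k(c_k)$ gives the payment formula (\ref{equ: payment}), interpreted in expectation over $\boldsymbol{c}_{-k}$.

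\textbf{Sufficiency.} Assume $\bar\epsilon_k$ is decreasing and payments are given by (\ref{equ: payment}). The gain from misreporting $c_k'$ instead of $c_k$ simplifies to
\begin{align}\nonumber
\int_{c_k'}^{c_k}\left(\bar\epsilon_k(z)-\bar\epsilon_k(c_k')\right)dz.
\end{align}
We check the sign case by case. If $c_k'<c_k$, then $z\ge c_k'$ and monotonicity give $\bar\epsilon_k(z)\le\bar\epsilon_k(c_k')$, so the integrand is non-positive and the integral is non-positive. If $c_k'>c_k$, the integral runs backward, and $z\le c_k'$ gives $\bar\epsilon_k(z)\ge\bar\epsilon_k(c_k')$, so the gain is again non-positive. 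Hence IC holds. IR then follows because the interim utility equals $\int_{c_k}^\infty\bar\epsilon_k+d_k\ge 0$.

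\textbf{Main obstacle.} The delicate step is the envelope/integration argument on the unbounded type space $\mathbb{R}^+$. We need the limit $d_k$ to exist and $\int_{c_k}^\infty\bar\epsilon_k(z)\,dz$ to be finite. Our first attempt would assume bounded support $[\underline c,\bar c]$, or integrability of $\bar\epsilon_k$. Then $d_k$ is identified with the utility of the highest type, and the envelope formula follows from the standard derivation, in which the convex function $u_k$ is differentiable almost everywhere.
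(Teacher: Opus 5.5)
Your proposal is correct and follows essentially the same route as the paper. The shared steps are: monotonicity of the interim budget from adding the two swapped IC inequalities, the envelope theorem for the form of the interim utility, IR reducing to the limiting type $c_k\to\infty$, and the same misreporting-gain computation for sufficiency. The differences are minor. You integrate down from $\infty$ rather than up from $0$, so your $d_k=\lim_{c\to\infty}u_k(c)$ plays the role of the paper's $\pi_k(0)-\int_0^\infty\epsilon_k(z)\,dz$. You also justify the envelope step via convexity of $u_k$ instead of assuming differentiability, and you read monotonicity in the interim sense, as the paper's own proof implicitly does.

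The obstacle you flag is not actually one in the necessity direction. Since $\bar\epsilon_k\ge 0$, $u_k$ is nonincreasing, and IR gives $u_k\ge 0$. So the limit $d_k$ exists and $\int_{c_k}^\infty\bar\epsilon_k(z)\,dz=u_k(c_k)-d_k$ is automatically finite, with no need to assume bounded support.
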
 
We defer the proof of Lemma \ref{lemma: ir ic} in Appendix \ref{sec: mechanism design proof}.
Based on Lemma \ref{lemma: ir ic}, we substitute the payment function (\ref{equ: payment}) to the objective function of Problem \ref{prob: opt}.
This substitution introduces a dependence on the distribution of privacy sensitivities through the virtual cost, defined as follows:
\begin{definition}[Virtual cost]
	The virtual cost of client $k\in \mathcal{N}$ is defined as
	\begin{align}
		v_k = c_k + \frac{F_k(c_k)}{f_k(c_k)},
	\end{align}
	where $F_k(\cdot)$ and $f_k(\cdot)$ are the cumulative density function and the probability density function of client $k$'s privacy sensitivity $c_k$, respectively.
\end{definition}
This leads to our reformulated optimization problem:

\begin{tcolorbox}[left = 0.5pt,top=0.5pt,bottom=0.5pt]
\begin{problem}{Reformulated JSAM design problem.}\label{prob: reformulated}
	
	\begin{mini!}|s|[2]  
		{\scriptsize\boldsymbol{\epsilon},\boldsymbol{p}^\textnormal{s}}         
		{\!\!\!\eta\!\left(\!\left\| \boldsymbol{p}^\textnormal{s}\!-\!\boldsymbol{p}^\textnormal{u}\right\|_1\! \!+\!\! \sqrt{\!\left\| \boldsymbol{p}^\textnormal{s}\!-\!\boldsymbol{p}^\textnormal{u}\right\|_1^2 \!+\! Q\!\!\! \sum_{k:p^\textnormal{s}_k\neq 0}\!\!\! \frac{(p^\textnormal{s}_{k})^2}{\epsilon_k^2}}\right)\!\!+\!\!\sum_{k\in \mathcal
				N}\!\epsilon_kv_k } 
		{\label{equ: re obj}}           
		{}                                
		\addConstraint{\sum_{k\in \mathcal{N}}p^\textnormal{s}_{k} = 1, p^\textnormal{s}_{k}, \epsilon_k \ge 0, \forall k \in \mathcal{N}}{\label{equ: re con1}}  
		\addConstraint{\epsilon_k \textnormal{ weakly decreases with }c_k, \forall k \in \mathcal{N}.}{\label{equ: re con2}}
	\end{mini!}
\end{problem}
\end{tcolorbox}
\begin{figure}
	\centering
	\includegraphics[width=0.5\linewidth]{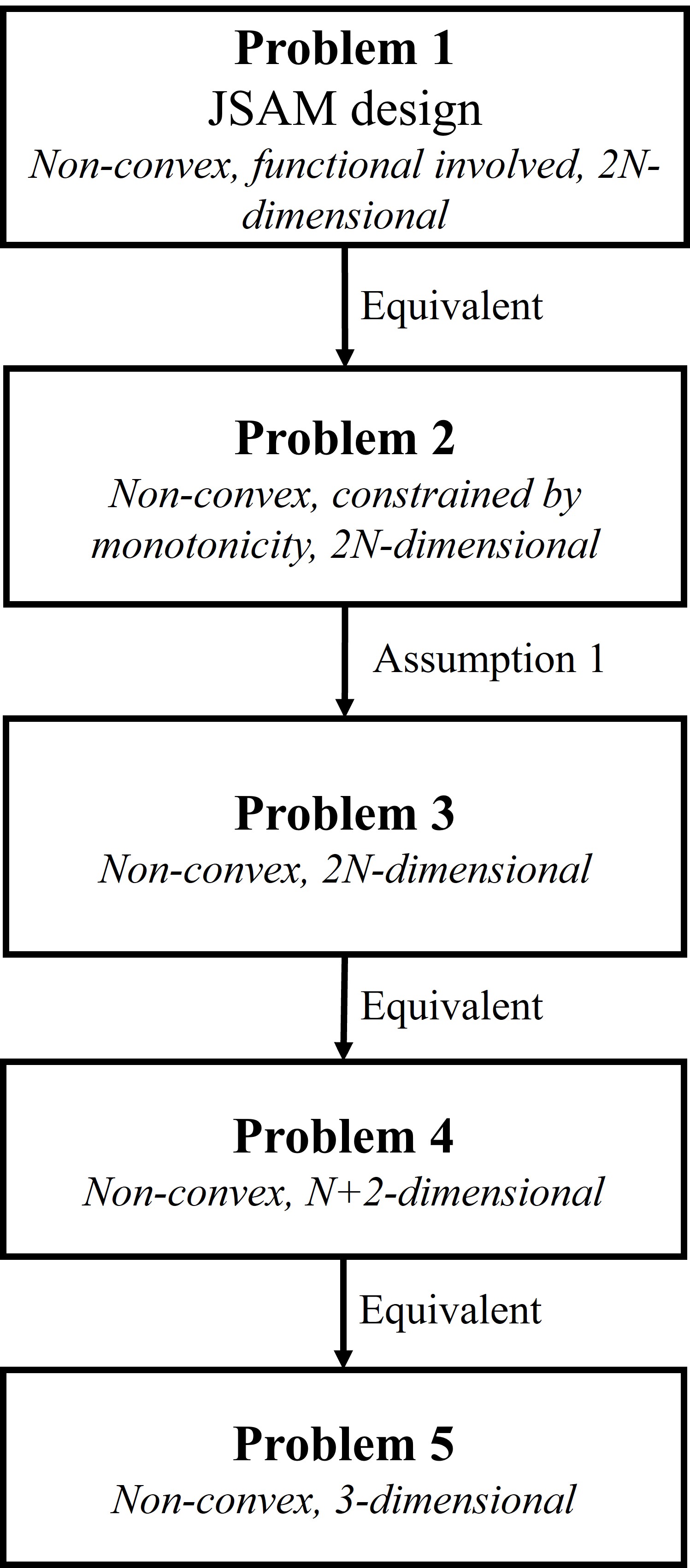}
	\caption{Flow chart of problem reformulations.}
	\label{fig:workflow}
\end{figure}
\begin{theorem}\label{theorem: reformulation}
Given privacy sensitivities $\boldsymbol{c}$, Problems \ref{prob: opt} and \ref{prob: reformulated} are equivalent, sharing the same optimal solutions. 
\end{theorem}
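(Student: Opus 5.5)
The plan is the standard Myerson-style reduction, done per client: use Lemma \ref{lemma: ir ic} to remove the payments, then rewrite their expected total as a virtual-cost term.

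\textbf{Step 1 (remove payments).} By Lemma \ref{lemma: ir ic}, a triple $(\boldsymbol{p}^\textnormal{s},\boldsymbol{\epsilon},\boldsymbol{\pi})$ is feasible for Problem \ref{prob: opt} if and only if each $\epsilon_k$ is weakly decreasing in $c_k$ and $\pi_k$ has the form (\ref{equ: payment}) with some $d_k\ge 0$. Each $d_k$ enters the objective only additively and with a positive sign. Hence any optimum has $d_k=0$, and we may substitute (\ref{equ: payment}) with $d_k=0$. This leaves an optimization over $(\boldsymbol{p}^\textnormal{s},\boldsymbol{\epsilon})$ only, subject to the simplex constraint and the monotonicity constraint (\ref{equ: re con2}).

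\textbf{Step 2 (expected payment equals expected virtual cost).} Write $\bar\epsilon_k(z)=\mathbb{E}_{\boldsymbol{c}_{-k}}\epsilon_k(z,\boldsymbol{c}_{-k})$. The expected payment to client $k$ is
\[
\mathbb{E}_{c_k}\!\left[\int_{c_k}^{\infty}\bar\epsilon_k(z)\,dz\right]+\mathbb{E}_{\boldsymbol{c}}[c_k\epsilon_k].
\]
Apply Fubini (or integrate by parts) to the first term:
\[
\int_0^\infty f_k(c)\int_c^\infty \bar\epsilon_k(z)\,dz\,dc=\int_0^\infty \bar\epsilon_k(z)F_k(z)\,dz=\mathbb{E}_{c_k}\!\left[\bar\epsilon_k(c_k)\tfrac{F_k(c_k)}{f_k(c_k)}\right].
\]
Combining the two terms gives $\mathbb{E}_{\boldsymbol{c}}[\epsilon_k(\boldsymbol{c})v_k]$. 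Summing over $k$, the objective of Problem \ref{prob: opt} becomes
\[
\mathbb{E}_{\boldsymbol{c}}\bigl[J(\boldsymbol{p}^\textnormal{s}(\boldsymbol{c}),\boldsymbol{\epsilon}(\boldsymbol{c});\boldsymbol{c})\bigr],
\]
where $J$ is exactly the objective (\ref{equ: re obj}).

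\textbf{Step 3 (pointwise minimization).} The expectation of $J$ is now an integral with no coupling across different profiles $\boldsymbol{c}$, apart from the monotonicity constraint. So minimizing the expectation is equivalent to minimizing $J$ pointwise for each reported $\boldsymbol{c}$, which is Problem \ref{prob: reformulated}. For the converse, I would take an optimal solution of Problem \ref{prob: reformulated} for each $\boldsymbol{c}$ and attach payments (\ref{equ: payment}) with $d_k=0$. By Lemma \ref{lemma: ir ic} this yields a feasible IC/IR mechanism whose objective matches, so the two problems share optimal solutions.

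\textbf{Main obstacle.} The hard step is justifying pointwise minimization when the monotonicity constraint links different $c_k$ values. Two cases need handling:
\begin{itemize}
\item If the pointwise minimizers are monotone in $c_k$, for example under a regularity assumption that $v_k$ is increasing in $c_k$, the interchange is immediate.
\item Otherwise, I would keep (\ref{equ: re con2}) explicitly as a constraint in Problem \ref{prob: reformulated}, as the statement does. Equivalence then holds over the constrained function class: both problems optimize the same functional over the same feasible set.
\end{itemize}
I also need the boundary terms in the integration by parts to vanish. This requires $\bar\epsilon_k(z)F_k(z)\to 0$ at the endpoints, which is guaranteed when $\epsilon_k$ is bounded and $c_k$ has bounded support or integrable tails.
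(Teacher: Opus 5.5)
Your proposal is correct and follows the paper's own route: Lemma~\ref{lemma: ir ic} to eliminate the payments, an interchange of integration turning $\mathbb{E}_{c_k}\bigl[\int_{c_k}^{\infty}\bar\epsilon_k(z)\,dz\bigr]$ into $\mathbb{E}_{\boldsymbol{c}}[\epsilon_k F_k(c_k)/f_k(c_k)]$, setting $d_k=0$, and passing to the pointwise form with (\ref{equ: re con2}) retained; your explicit handling of the converse direction and of the cross-profile coupling created by the monotonicity constraint is more careful than the paper, which simply asserts the pointwise reduction. One minor correction to your side remark: since $\epsilon_k\ge 0$, Tonelli gives the interchange with no boundary terms, and what the identity actually needs is that $\bar\epsilon_k$ vanish wherever $f_k=0<F_k$ (e.g.\ for reports above a bounded support) and be integrable so that payments are finite---boundedness alone is not enough, because a constant $\bar\epsilon_k>0$ makes $\int_{c_k}^{\infty}\bar\epsilon_k(z)\,dz$ infinite.
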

We show the proof of Theorem \ref{theorem: reformulation} in Appendix \ref{sec: mechanism design proof}.
The additional term $\frac{F_k(c_k)}{f_k(c_k)}$ in the virtual cost arises from the \ac{ic} constraint. 
This term reflects the information asymmetry between the server and clients. Since the server does not know the clients' privacy sensitivities, it must incur additional costs to incentivize truthful reporting.
The constraint (\ref{equ: re con2}) ensures that the server adjusts the privacy budget $\epsilon_k$ to penalize any misreporting by clients.

Problem \ref{prob: reformulated} remains a challenging non-convex optimization problem due to the coupling between the selection probabilities $\boldsymbol{p}^\textnormal{s}$ and the privacy budgets $\boldsymbol{\epsilon}$.
However, as we will demonstrate in Section \ref{sec: solve}, it is possible to characterize the optimal client selection strategy and develop an efficient algorithm to solve the problem despite its non-convexity.

\section{Solving the Reformulated JSAM Design Problem}\label{sec: solve}

This section analyzes Problem \ref{prob: reformulated}'s optimal solution by relaxing constraint (\ref{equ: re con2}) while maintaining solution equivalence through specific assumptions about virtual costs. This approach yields a more tractable optimization problem while preserving key insights about privacy sensitivity and budget allocation relationships.

\subsection{Problem Relaxation and Equivalence}
We begin by relaxing constraint (\ref{equ: re con2}) in Problem \ref{prob: reformulated} while preserving essential solution properties through the following fundamental assumption:
\begin{assumption}\label{assumption: increasing}[Regularity condition]
	For any $k \in \mathcal{N}$, the virtual cost $v_k$ increases with the privacy sensitivity $c_k$. 
\end{assumption}
Assumption \ref{assumption: increasing} is standard and widely adopted in the mechanism design literature \cite{fallah2022optimal,krishna2009auction}.
It encompasses distributions with log-concave functions, such as uniform and Gaussian distributions \cite{fallah2022optimal}.
Under this assumption, we formulate the relaxed optimization problem:
\begin{tcolorbox}[left = 0.5pt,top=0.5pt,bottom=0.5pt]
	\begin{problem}\label{prob: relaxed}
		\begin{equation}\label{opt:relaxed}\nonumber
			\begin{aligned}
				\min_{\boldsymbol{\epsilon}, \boldsymbol{p}^\textnormal{s}}& \text{ }\eta \left(\!\left\| \boldsymbol{p}^\textnormal{s}-\boldsymbol{p}^\textnormal{u}\right\|_1\! +\! \sqrt{\left\| \boldsymbol{p}^\textnormal{s}-\boldsymbol{p}^\textnormal{u}\right\|_1^2 \!+ Q\! \sum_{k:p^\textnormal{s}_k\neq 0} \frac{(p^\textnormal{s}_{k})^2}{\epsilon_k^2}}\right)\\
				&+ \sum_{k \in \mathcal{N}}\epsilon_kv_k\\
				\text{ }\textup{s.t.}& \sum_{k\in \mathcal{N}}p^\textnormal{s}_{k} = 1, p^\textnormal{s}_{k} , \epsilon_k \ge 0, \forall k \in \mathcal{N}.
			\end{aligned}
		\end{equation}
	\end{problem}
\end{tcolorbox}

\begin{proposition}\label{proposition: blue}
	Under Assumption \ref{assumption: increasing}, the optimal solution to Problem \ref{prob: relaxed} satisfies the condition that $\epsilon_k^*$ is weakly decreasing with $c_k$.
\end{proposition}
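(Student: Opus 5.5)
Since Assumption \ref{assumption: increasing} makes $v_k$ increasing in $c_k$, it suffices to show that at any optimum of Problem \ref{prob: relaxed}, $v_i < v_j$ implies $\epsilon_i^* \ge \epsilon_j^*$. The dependence of $\epsilon_k^*$ on $c_k$ enters only through $v_k$, and $\boldsymbol{c}_{-k}$ is held fixed, so this ordering gives the claim.

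The argument is an exchange. Take an optimal pair $(\boldsymbol{p}^{\textnormal{s}*},\boldsymbol{\epsilon}^*)$ and suppose, for contradiction, that $v_i<v_j$ but $\epsilon_i^*<\epsilon_j^*$. Build a new feasible point by swapping the roles of $i$ and $j$: interchange $(p_i^{\textnormal{s}},\epsilon_i)$ with $(p_j^{\textnormal{s}},\epsilon_j)$. Two properties of the objective make this work.
\begin{itemize}
\item The DP noise term $\sum_k (p^{\textnormal{s}}_k)^2/\epsilon_k^2$ is invariant under this simultaneous swap.
\item The bias term $\|\boldsymbol{p}^{\textnormal{s}}-\boldsymbol{p}^{\textnormal{u}}\|_1$ is also invariant. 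Equal dataset sizes give $p^{\textnormal{u}}_i=p^{\textnormal{u}}_j=1/N$, so permuting two entries of $\boldsymbol{p}^{\textnormal{s}}$ leaves the $\ell_1$ distance unchanged.
\end{itemize}
Hence the whole $\eta(\cdot)$ part is unchanged, and only the linear cost changes:
\begin{align*}
\Delta=(v_i\epsilon_j^*+v_j\epsilon_i^*)-(v_i\epsilon_i^*+v_j\epsilon_j^*)=(v_i-v_j)(\epsilon_j^*-\epsilon_i^*)<0.
\end{align*}
This strictly improves on the optimum, a contradiction. So at every optimum, $v_i<v_j$ implies $\epsilon_i^*\ge\epsilon_j^*$.

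Two details need care.

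The first is the restriction $k:p^{\textnormal{s}}_k\neq 0$ in the sum, which must be consistent after the swap. Because the support set is swapped along with the pairs, the term is preserved exactly. I would also note that if $p_k^{\textnormal{s}}=0$, optimality forces $\epsilon_k^*=0$, since a positive budget only adds cost $v_k\epsilon_k$ (assuming $v_k>0$). Zero budgets therefore sit consistently at the bottom of the ordering.

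The second, and the main obstacle, is ties and non-uniqueness. When $v_i=v_j$, the swap is cost-neutral, so some optima may violate monotonicity among equal types. Under a continuous type distribution with strictly increasing $v_k$, ties occur with probability zero. Otherwise I would state the result as: there exists an optimal solution with $\epsilon^*$ weakly decreasing, obtained by applying the swap to sort tied entries. If the paper intends unequal $p^{\textnormal{u}}$ (imbalanced datasets), the $\ell_1$ invariance fails. In that case I would fall back on first-order conditions in $\epsilon_k$ for fixed $\boldsymbol{p}^{\textnormal{s}}$, where $\epsilon_k^3 \propto (p_k^{\textnormal{s}})^2/v_k$, combined with showing that $p_k^{\textnormal{s}}$ is itself nonincreasing in $v_k$ via a similar exchange. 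Under the paper's equal-size assumption, however, the swap argument suffices.
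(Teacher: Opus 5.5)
Your swap argument is correct for what it proves: at one fixed profile $\boldsymbol{v}$, every optimum of Problem \ref{prob: relaxed} assigns budgets in reverse order of virtual costs across clients. That is not the property the proposition needs, so there is a genuine gap.

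\textbf{The gap.} The condition inherited from Lemma \ref{lemma: ir ic} and constraint (\ref{equ: re con2}) is monotonicity of the allocation rule in a client's \emph{own} report. For fixed $\boldsymbol{c}_{-k}$, the map $c_k\mapsto\epsilon_k^*(c_k,\boldsymbol{c}_{-k})$ must be non-increasing. Changing $c_k$ changes the problem instance, and the whole optimum can move: the selection probabilities, the threshold $h$, and the total spend $B$. The rank of $\epsilon_k^*$ within each profile says nothing about how its \emph{level} compares across two profiles.

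Your step ``the dependence of $\epsilon_k^*$ on $c_k$ enters only through $v_k$, and $\boldsymbol{c}_{-k}$ is held fixed, so this ordering gives the claim'' is exactly where the argument breaks. As a purely logical illustration, take the rule $\epsilon_k=(\sum_j v_j)^2/v_k$. It respects your cross-sectional ordering at every profile. Yet at $\boldsymbol{v}=(1,10)$, raising $v_2$ to $11$ raises $\epsilon_2$ from $12.1$ to about $13.1$. So some property of the optimization problem itself has to be used \emph{across} instances, not just within one.

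\textbf{The missing idea.} What you need is a revealed-preference comparison of two instances, which is the paper's argument.
\begin{enumerate}
\item Let $\boldsymbol{v}$ and $\boldsymbol{v}'$ differ only in entry $i$, with optima $(\boldsymbol{p}^*,\boldsymbol{\epsilon}^*)$ and $(\boldsymbol{p}'^*,\boldsymbol{\epsilon}'^*)$ respectively.
\item The $\eta$-term does not depend on $\boldsymbol{v}$, and $v_i$ enters the objective only through the linear term $v_i\epsilon_i$.
\item Write that each optimum is no worse than the other point under its own instance, and add the two inequalities. The $\eta$-terms and all $v_k\epsilon_k$ with $k\neq i$ cancel, leaving $(v_i'-v_i)(\epsilon_i'^*-\epsilon_i^*)\le 0$.
\item Assumption \ref{assumption: increasing} turns this into monotonicity in $c_i$.
\end{enumerate}

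This route also makes your side issues unnecessary. It needs neither equal dataset sizes nor the observation that $p_k^{\textnormal{s}}=0$ forces $\epsilon_k^*=0$. Since the inequality holds for any pair of optima when $v_i'>v_i$, it needs no tie-breaking or uniqueness argument either.
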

The significance of Proposition \ref{proposition: blue} lies in demonstrating that the relaxed Problem \ref{prob: relaxed} inherently maintains the crucial monotonicity property of the privacy budget allocation, making it equivalent to the original Problem \ref{prob: reformulated} for practical purposes.
We defer its proof to Appendix \ref{subsec: blue}.

\subsection{Characterization of the Optimal Solution}

Due to the inequality constraints and non-continuity of the L1 norm term $\left\|\boldsymbol{p}^\textnormal{s}-\boldsymbol{p}^\textnormal{u}\right\|_1$ in Problem \ref{prob: relaxed}, we classify the clients into the four distinct categories: $\mathcal{S}^+, \mathcal{S}^\textnormal{u}, \mathcal{S}^-, \mathcal{S}^0$.
This classification facilitates a structured analysis of the optimal solution.
\begin{itemize}
\item $\mathcal{S}^+$ denotes the set of clients with selection probabilities larger than $\frac{1}{N}$, \textit{i.e.}, $\mathcal{S}^+ = \{k\mid p_k^\textnormal{s} > \frac{1}{N} \}$. 

\item $\mathcal{S}^\textnormal{u}$ denotes the set of clients with selection probabilities $\frac{1}{N}$, \textit{i.e.}, $\mathcal{S}^+ = \{k\mid p_k^\textnormal{s} = \frac{1}{N} \}$.

\item $\mathcal{S}^-$ denotes the set of clients with selection probabilities less than $\frac{1}{N}$, \textit{i.e.}, $\mathcal{S}^- = \{k \mid 0 < p^\textnormal{s}_k < \frac{1}{N}\}$.

\item $\mathcal{S}^0$ denotes the set of clients with selection probabilities $0$, \textit{i.e.}, $\mathcal{S}^0 = \{k\mid p^\textnormal{s}_k=0\}$. 
\end{itemize}
\begin{figure}[t]
	\centering
	\includegraphics[width=0.9\linewidth]{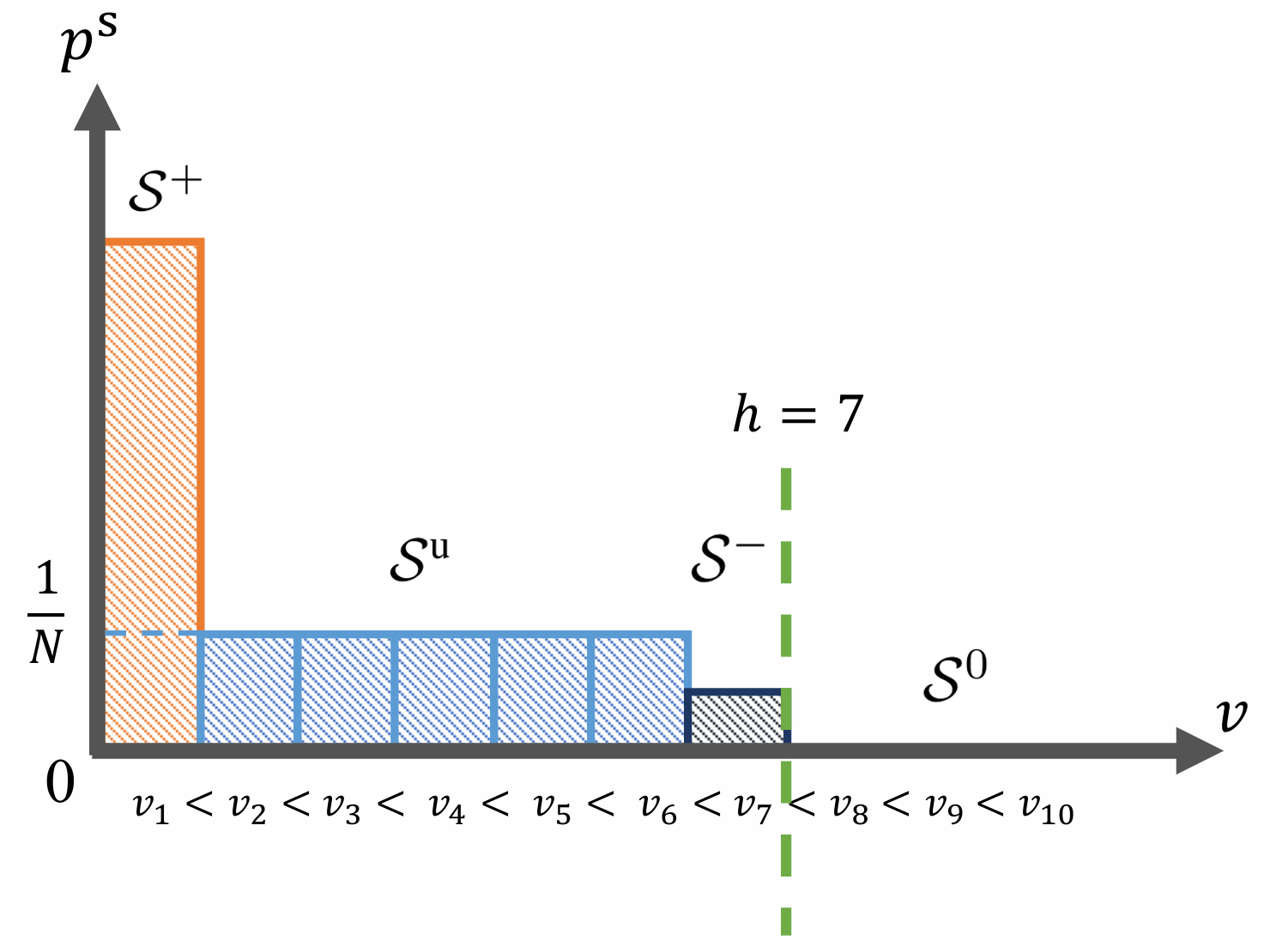}
	\caption{An illustration of the structure of the optimal client selection probabilities with ten clients $(v_1 < \cdots < v_{10})$.}
	\label{fig:probdis}
\end{figure}
We then characterize the optimal client selection $\boldsymbol{p}^{\textnormal{s}*}$ and the corresponding optimal client sets $\mathcal{S}^{+*}, \mathcal{S}^{\textnormal{u}*}, \mathcal{S}^{-*}$, and $\mathcal{S}^{0*}$ in Theorem \ref{theorem: structure}.
Fig. \ref{fig:probdis} illustrates the structure of the optimal solution. 

\begin{theorem}\label{theorem: structure}
	Suppose that we order the clients according to an increasing order of the virtual costs, \textit{i.e.}, $v_1 \le v_2 \le \cdots \le v_N$.
	There exists a threshold $h\in \mathcal{N}$ such that the optimal selection probability $\boldsymbol{p}^{\textnormal{s}*} = [p_k^{\textnormal{s}*}]_{k\in \mathcal{N}}$ of Problem \ref{prob: relaxed} satisfies\footnote{Here $\mathcal{Z}^+$ refers to the set of non-negative integers.} 
	\begin{subnumcases}{p_k^{\textnormal{s}*}} \label{equ: prob_dis 1}
		\ge 1/N, &\text{if } $k=1$,\\\label{equ: prob_dis 2}
		= 1/N, &\text{if } $k \in [2,h-1]\cap \mathcal{Z}^+$, \\ \label{equ: prob_dis 3}
		\le 1/N, &\text{if } $k=h$, \\\label{equ: prob_dis 4}
		= 0, &\text{if } $k\in [h+1,N]\cap \mathcal{Z}^+$.
	\end{subnumcases}
\end{theorem}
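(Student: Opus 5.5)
The plan is to eliminate $\boldsymbol{\epsilon}$ in closed form and then run exchange and concavity arguments on $\boldsymbol{p}^\textnormal{s}$ alone.

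\textbf{Step 1: eliminating $\boldsymbol{\epsilon}$.} Fix $\boldsymbol{p}^\textnormal{s}$ and write $A=\|\boldsymbol{p}^\textnormal{s}-\boldsymbol{p}^\textnormal{u}\|_1$. Clients with $p^\textnormal{s}_k=0$ do not enter the square root, so they get $\epsilon_k^*=0$ (assuming $v_k>0$). For the remaining clients, fix the value $S=\sum_{k:p^\textnormal{s}_k\neq 0}(p^\textnormal{s}_k)^2/\epsilon_k^2$ and minimize $\sum_k\epsilon_k v_k$ by a Lagrange or H\"older argument. The minimizer is $\epsilon_k\propto (p^\textnormal{s}_k)^{2/3}v_k^{-1/3}$, and the minimum value is $S^{-1/2}W^{3/2}$, where
\begin{align}\nonumber
W(\boldsymbol{p}^\textnormal{s})=\sum_{k}(p^\textnormal{s}_k v_k)^{2/3}.
\end{align}
Optimizing over the scalar $S$ then gives a reduced objective $\Phi(A,W)$. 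I will check that $\Phi$ is nondecreasing in each argument. This holds because a larger $W$ raises the cost for every $S$, and a larger $A$ raises both the $\eta A$ term and the square root. Problem \ref{prob: relaxed} is thereby equivalent to minimizing $\Phi(A(\boldsymbol{p}^\textnormal{s}),W(\boldsymbol{p}^\textnormal{s}))$ over the simplex. Existence of a minimizer follows from compactness after the elimination, handling the boundary $p_k\to 0$ by continuity of $W$.

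\textbf{Step 2: monotonicity of $\boldsymbol{p}^{\textnormal{s}*}$.} Take an optimal $\boldsymbol{p}^\textnormal{s}$ and indices $i<j$ (so $v_i\le v_j$) with $p_i<p_j$. Swapping $p_i$ and $p_j$ leaves $A$ unchanged, since $\boldsymbol{p}^\textnormal{u}$ is uniform. By the rearrangement inequality applied to $p^{2/3}$ and $v^{2/3}$, the swap does not increase $W$. Hence some optimal solution is nonincreasing in $k$. With monotone $\boldsymbol{p}^\textnormal{s}$, the sets $\mathcal{S}^+,\mathcal{S}^\textnormal{u},\mathcal{S}^-,\mathcal{S}^0$ occupy consecutive index blocks in that order.

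\textbf{Step 3: concavity pushes to extremes.} Consider perturbations that move mass between two clients in $\mathcal{S}^+$ while keeping both at least $1/N$. Such a move preserves $A=2\sum_{k\in\mathcal{S}^+}(p_k-1/N)$ exactly. Along that segment $W$ is concave in the perturbation parameter, because $p\mapsto p^{2/3}$ is concave. A concave function attains its minimum over a segment at an endpoint, so we can keep pushing until all but one client of $\mathcal{S}^+$ sit at $1/N$. By Step 2, the surviving client can be taken to be client $1$. The same argument applies to clients in $(0,1/N)$. Moving mass between two such clients, while keeping both in $[0,1/N]$, preserves $A$ because it preserves the total deficit. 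Concavity of $W$ then drives every such client to $0$ or $1/N$, except at most one. Monotonicity places that exceptional client last among the nonzero ones; call it $h$. This yields exactly (\ref{equ: prob_dis 1})--(\ref{equ: prob_dis 4}).

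\textbf{Main obstacle.} The delicate point is the nonsmooth $\ell_1$ term. I must restrict every perturbation to the closed region where the sign pattern of $p_k-1/N$ is fixed, so that $A$ is affine there, here constant. I also need to check that stepping onto a face boundary, such as $p_k$ hitting $1/N$ or $0$, only reclassifies clients and never increases $\Phi$. A second subtlety is that $W$ is concave but not necessarily strictly concave, and ties $v_i=v_j$ can occur. For this reason the statement is \emph{existence} of an optimal solution with the threshold structure. I would therefore argue on a minimizer chosen, among all optimal solutions, to have the fewest clients strictly inside $(0,1/N)\cup(1/N,1]$. Each concavity move weakly lowers $\Phi$ while strictly reducing this count, which gives a contradiction unless the structure already holds.
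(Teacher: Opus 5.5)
Your proof is correct, and it takes a genuinely different route from the paper's.

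\textbf{How the paper argues.} The paper never eliminates $\boldsymbol{\epsilon}$ before the structural argument. It proves three lemmas by local two-client exchanges, each freezing the pair's total probability $\tilde P$ and total budget $\tilde B$. Lemma~\ref{lemma: S+} compares the interior KKT point of $\min p_1^2/\epsilon_1^2+p_2^2/\epsilon_2^2$ with the boundary point where one client sits at $1/N$. It does so by explicit expansion ($\Delta_1$ versus $\Delta_2$, plus an auxiliary two-variable minimization). Lemma~\ref{lemma: S-} is only asserted by analogy. Lemma~\ref{lemma: threshold} orders the sets through three separate pairwise comparisons of closed-form two-client values.

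\textbf{How you argue.} You do the H\"older elimination once; the paper uses this computation only later, in Theorem~\ref{theorem: budgets} and Proposition~\ref{proposition: obj}. After that, the objective depends on $\boldsymbol{p}^\textnormal{s}$ only through $(A,W)$, and $\Phi$ is monotone in both. A single rearrangement step then replaces all of Lemma~\ref{lemma: threshold}, because a swap preserves $A$ whichever sets the two clients lie in. Concavity of $p\mapsto(vp)^{2/3}$ along $A$-preserving segments then handles $\mathcal{S}^+$ and $\mathcal{S}^-$ together.

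Your view also explains the paper's Case~1. The interior KKT point $p_k\propto v_k^2$ is exactly the maximizer of $W$ on the reallocation line, so the endpoint comparison done there by hand is automatic.

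\textbf{Trade-off.} Your route is shorter, gives an actual argument for $\mathcal{S}^-$, and establishes existence of a minimizer, which the paper does not address. The paper's local route avoids the global reduction, but at the cost of heavier and less transparent algebra.

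\textbf{Two small remarks.} First, your minimal-count device is needed only for ties in $v$. The map $x\mapsto x^{2/3}$ is strictly concave on $(0,\infty)$, and $\Phi$ is strictly increasing in $W$ when $\eta,Q>0$: evaluate the smaller-$W$ objective at the minimizing $B$ of the larger one. Hence no optimum has two clients in $\mathcal{S}^+$ or two in $\mathcal{S}^-$. With distinct $v_k$ the rearrangement inequality is strict as well, so \emph{every} optimal solution has the stated form, which matches the theorem's phrasing. Second, fix $h$ explicitly when $\mathcal{S}^-$ is empty (e.g.\ $h=2$ if only client~$1$ is positive), so that the cases $k=1$ and $k=h$ do not collide.
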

Here are the insights of the optimal client selection probabilities $\boldsymbol{p}^{\textnormal{s}*}$ (in Theorem \ref{theorem: structure}).
\begin{itemize}
	\item The results (\ref{equ: prob_dis 1}, \ref{equ: prob_dis 2}, \ref{equ: prob_dis 3}) imply that the server selects the client with the smallest virtual cost the most frequently (with a probability over $\frac{1}{N}$). 
	In addition, the server selects the clients with moderate virtual costs (\textit{i.e.}, smaller than $v_h$ and larger than $v_1$) unbiasedly.
	\item The result (\ref{equ: prob_dis 4}) implies that the server excludes the clients with high virtual costs (\textit{i.e.}, larger than $v_h$). 
	This result indicates that the incentive mechanisms with unbiased selection are sub-optimal to decreasing the server's cost when the clients have different privacy sensitivities. 
	\item Solving the client selection problem is equivalent to finding the threshold $h$ and the threshold probability $p^{\textnormal{s}*}_h$.
	This reduces the dimension of the optimal client selection problem from $N$ to $2$.
\end{itemize}

Based on Theorem \ref{theorem: structure}, we can reformulate Problem \ref{prob: relaxed} by focusing on finding the optimal threshold $h^*$, threshold probability $p_h^{\textnormal{s}*}$, and total monetary cost $B^*$. This reformulation significantly reduces the problem's dimensionality, making it more tractable.
\begin{tcolorbox}[left = 0.5pt,top=0.5pt,bottom=0.5pt]
\begin{problem}\label{prob: bilevel}
	\begin{equation}\label{opt: bilevel}
		\begin{aligned}
			\!\min_{h, p^\textnormal{s}_h}		
		\min_{\boldsymbol{\epsilon}}\! &\text{ }\!\eta \!\sqrt{\!4\!\left(p^\textnormal{s}_1\!-\!p_h^\textnormal{s} \right)^2 \!+ \!Q\!\!\left(\frac{(p_1^\textnormal{s})^2}{\epsilon_1^2}\!+\!\!\sum_{k,1< k< h}\frac{1}{N^2\epsilon_k^2}\!+\! \frac{(p^\textnormal{s}_h)^2}{\epsilon_h^2}\!\right)}\!
			\\
			&+ 2\eta \left(p^\textnormal{s}_1-p^\textnormal{s}_h\right)+\sum_{k \in \mathcal{N}}\epsilon_kv_k\\
			\textup{s.t.            }&p^\textnormal{s}_1 = \frac{2\!+\!N\!-\!h\!}{N}-p^\textnormal{s}_h,\\&\epsilon_k \ge 0, \forall k \in \mathcal{N}, 0\le p_h^\textnormal{s} \le \frac{1}{N}, h \in [0,N]\cap \mathcal{Z}^+.
		\end{aligned}
	\end{equation}
\end{problem}
\end{tcolorbox}
The problem of finding the optimal threshold $h^*$ and threshold probability $p_{h}^{\textnormal{s}*}$ can be regarded as a search problem.
Given $h,p_h^\textnormal{s}$ obtained through this search, the next step is to solve the inner problem ($\min_{\boldsymbol{\epsilon}}$) of Problem \ref{prob: bilevel}.
This involves finding the minimum objective value with respect to $\boldsymbol{\epsilon}$ while keeping $h$ and $p_h^\text{s}$ fixed.

\subsection{Solving the Inner Problem of Problem \ref{prob: bilevel}}

We address the inner optimization problem of Problem 4 by establishing the relationship between optimal privacy budgets and monetary costs. Given selection probability vector $\boldsymbol{p}^s$ (determined by $h$ and $p_h^s$), we show that the optimal privacy budgets $\boldsymbol{\epsilon}^*$ are functions of the total monetary cost $B$.

\begin{theorem}\label{theorem: budgets}
Given $\boldsymbol{p}^\textnormal{s},B$, the optimal privacy budget of client $k\in\mathcal{N}$ is
	\begin{align}
		\epsilon^*_k (\boldsymbol{p}^\textnormal{s},B) = \frac{(p_k^\textnormal{s})^{2/3}B}{\left(\sum_{i=1}^{N}v_i^{2/3}(p^\textnormal{s}_i)^{2/3}\right)v_k^{1/3}}.
	\end{align}
\end{theorem}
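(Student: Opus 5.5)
For fixed $\boldsymbol{p}^\textnormal{s}$ and fixed total monetary cost $B=\sum_{k\in\mathcal{N}}\epsilon_k v_k$, the terms $\|\boldsymbol{p}^\textnormal{s}-\boldsymbol{p}^\textnormal{u}\|_1$ and $\sum_k \epsilon_k v_k$ in the objective of Problem \ref{prob: relaxed} (equivalently the inner problem of Problem \ref{prob: bilevel}) are constants. The objective is increasing in the DP term $\sum_{k:p_k^\textnormal{s}\neq 0}(p_k^\textnormal{s})^2/\epsilon_k^2$, because $x\mapsto a+\sqrt{a^2+Qx}$ is increasing. So the plan is to reduce the inner problem at budget level $B$ to
\begin{align}\nonumber
\min_{\boldsymbol{\epsilon}\ge 0}\ \sum_{k:p_k^\textnormal{s}\neq 0}\frac{(p_k^\textnormal{s})^2}{\epsilon_k^2}\quad\text{s.t.}\quad \sum_{k\in\mathcal{N}} v_k\epsilon_k = B .
\end{align}
Clients with $p_k^\textnormal{s}=0$ contribute nothing to the loss and only cost money, so they get $\epsilon_k^*=0$. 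This agrees with the claimed formula, since $(p_k^\textnormal{s})^{2/3}=0$.

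For the remaining clients the objective is strictly convex in $\epsilon_k>0$ and tends to $\infty$ as $\epsilon_k\to 0$. The feasible set is a simplex-like slice of a hyperplane. Hence a unique interior minimizer exists and is characterized by the KKT/Lagrange conditions $-2(p_k^\textnormal{s})^2\epsilon_k^{-3}+\lambda v_k=0$. Solving gives $\epsilon_k=(2/\lambda)^{1/3}(p_k^\textnormal{s})^{2/3}v_k^{-1/3}$. Substituting into $\sum_k v_k\epsilon_k=B$ yields $(2/\lambda)^{1/3}=B/\sum_i v_i^{2/3}(p_i^\textnormal{s})^{2/3}$, which is exactly the stated expression. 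As a cross-check I would use Hölder's inequality with exponents $(3,3/2)$ on $\sum_k (v_k\epsilon_k)^{2/3}\big((p_k^\textnormal{s})^{2}\epsilon_k^{-2}\big)^{1/3}\cdot v_k^{-2/3}\cdots$. This gives a direct lower bound with equality precisely at this allocation, and it avoids differentiability concerns.

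The main obstacle is making the reduction to the fixed-$B$ subproblem rigorous rather than heuristic. I would argue by contradiction: if an optimal $\boldsymbol{\epsilon}$ of the inner problem with cost $B^*=\sum_k v_k\epsilon_k$ differed from the formula at $B^*$, the formula's allocation would have the same monetary cost and strictly smaller DP term. It would therefore strictly lower the objective, a contradiction. So the optimum is the formula evaluated at the optimal $B$, and this is why the statement is phrased "given $\boldsymbol{p}^\textnormal{s},B$". One also needs $v_k>0$, which follows from $c_k>0$ and $F_k/f_k\ge 0$, so that the $1/3$ powers are well defined and the constraint set is bounded.
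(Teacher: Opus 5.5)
Your proposal is correct and follows essentially the same route as the paper. The appendix likewise uses the KKT stationarity condition $(p_k^\textnormal{s})^2/\epsilon_k^3 = c_3 v_k$ and then pins down $c_3 = \big(\sum_i v_i^{2/3}(p_i^\textnormal{s})^{2/3}\big)^3/B^3$ from $\sum_k v_k\epsilon_k = B$. Your extra steps supply justification the paper leaves implicit: the other terms are constant for fixed $\boldsymbol{p}^\textnormal{s}$ and $B$, $a+\sqrt{a^2+Qx}$ is monotone, unselected clients get $\epsilon_k^*=0$, and strict convexity gives uniqueness.

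One small fix to your H\"older cross-check: drop the stray $v_k^{-2/3}$. The identity you want is $(v_k\epsilon_k)^{2/3}\big((p_k^\textnormal{s})^2\epsilon_k^{-2}\big)^{1/3} = (v_kp_k^\textnormal{s})^{2/3}$. With exponents $(3/2,3)$ this gives $\sum_k (p_k^\textnormal{s})^2/\epsilon_k^2 \ge \big(\sum_k (v_kp_k^\textnormal{s})^{2/3}\big)^3/B^2$, with equality exactly at the stated allocation.
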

We provide the proof of Theorem \ref{theorem: budgets} in Appendix \ref{app: budgets}.
The insights of Theorem \ref{theorem: budgets} is
\begin{itemize}
	\item The optimal privacy budget of client $k$ increases with the selection probability $p_k^\text{s}$ and decreases with the virtual cost $v_k$. 
	\item Counter-intuitively, clients with minimal privacy sensitivity may incur the highest monetary costs ($B_k = \epsilon_kv_k$) due to larger privacy budget compensation.
\end{itemize}

Using Theorem \ref{theorem: budgets} and KKT conditions, we express the objective function as follows:
\begin{proposition}\label{proposition: obj}
	With optimal privacy budgets, the objective function of Problem \ref{prob: bilevel} becomes:
	\begin{equation}
	\begin{aligned}
		&f(B,h,p_h^\textnormal{s})=\\
		&\eta\!\left(\!\! \sqrt{4(p_1^\textnormal{s}\!-\!p_h^\textnormal{s})^2 \!+\! \left(\!\!(v_1p_1^\textnormal{s})^{\oldfrac{2}{3}}\!+\!(v_hp_h^\textnormal{s})^{\oldfrac{2}{3}}\!+\!\!\!\!\sum_{1<i<h}\!\frac{v_i^{\oldfrac{2}{3}}}{N^{\oldfrac{2}{3}}}\!\right)^3\!\!/\!B^2\!}\!\right)\\
		&+    2\eta\left(p_1^\textnormal{s}-p_h^\textnormal{s}\right) + B.
	\end{aligned}
	\end{equation}
\end{proposition}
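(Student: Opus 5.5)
The plan is to substitute the closed-form budgets of Theorem~\ref{theorem: budgets} into the objective of Problem~\ref{prob: bilevel}. Here $B$ is read as the total monetary cost $\sum_{k\in\mathcal{N}}\epsilon_k v_k$. Fix $h$ and $p_h^\textnormal{s}$, so that $p_1^\textnormal{s}=\frac{2+N-h}{N}-p_h^\textnormal{s}$ and $\boldsymbol{p}^\textnormal{s}$ is fully determined: $p_k^\textnormal{s}=1/N$ for $1<k<h$ and $p_k^\textnormal{s}=0$ for $k>h$.

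The inner minimization over $\boldsymbol{\epsilon}$ will be split into two stages. First, minimize $\sum_k (p_k^\textnormal{s})^2/\epsilon_k^2$ subject to $\sum_k \epsilon_k v_k = B$. Second, optimize over $B\ge 0$. This split is legitimate because the objective depends on $\boldsymbol{\epsilon}$ only through two pieces. The first is the noise term inside the square root, which is increasing in each $(p_k^\textnormal{s})^2/\epsilon_k^2$. The second is the linear cost $\sum_k\epsilon_kv_k$, and the $L_1$ terms do not involve $\boldsymbol{\epsilon}$. Consequently, for a fixed spending level $B$, the optimal $\boldsymbol{\epsilon}$ minimizes the noise sum. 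Theorem~\ref{theorem: budgets} supplies exactly that minimizer, and it can be checked through the KKT/Lagrangian stationarity condition $-2(p_k^\textnormal{s})^2\epsilon_k^{-3}+\lambda v_k=0$, which gives $\epsilon_k\propto (p_k^\textnormal{s})^{2/3}v_k^{-1/3}$. Clients with $p_k^\textnormal{s}=0$ get $\epsilon_k=0$ and are excluded from the sum, matching the convention $k:p_k^\textnormal{s}\neq 0$. Because the problem is convex in $\boldsymbol{\epsilon}$ on the positive orthant, this stationary point is the global minimizer.

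Next comes the substitution. Write $S=\sum_{i}(v_ip_i^\textnormal{s})^{2/3}$ and use $\epsilon_k^*=(p_k^\textnormal{s})^{2/3}B/(S v_k^{1/3})$. Then
\begin{align*}
\frac{(p_k^\textnormal{s})^2}{(\epsilon_k^*)^2}=\frac{(p_k^\textnormal{s})^{2/3}v_k^{2/3}S^2}{B^2},
\end{align*}
and summing over $k$ gives $S^3/B^2$. The sum $\sum_k\epsilon_k^*v_k$ equals $B$, which verifies that $B$ is indeed the total payment. With the structure of Theorem~\ref{theorem: structure}, $S$ becomes $(v_1p_1^\textnormal{s})^{2/3}+(v_hp_h^\textnormal{s})^{2/3}+\sum_{1<i<h}v_i^{2/3}/N^{2/3}$.

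Then handle the $L_1$ terms. Since the selected probabilities sum to one, $\|\boldsymbol{p}^\textnormal{s}-\boldsymbol{p}^\textnormal{u}\|_1 = (p_1^\textnormal{s}-\frac1N)+(\frac1N-p_h^\textnormal{s})+\frac{N-h}{N}$. Using the constraint on $p_1^\textnormal{s}$, the excess $p_1^\textnormal{s}-\frac1N$ equals the total deficit $\frac{N-h}{N}+\frac1N-p_h^\textnormal{s}$. The norm is therefore $2(p_1^\textnormal{s}-\frac1N)$, which the paper's Problem~\ref{prob: bilevel} records as $2(p_1^\textnormal{s}-p_h^\textnormal{s})$. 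I will simply adopt that form as given in Problem~\ref{prob: bilevel}, since the proposition concerns that problem's objective. Plugging everything in, with $Q$ absorbed into the $v$-normalization as the stated formula suggests (or equivalently rescaling $B$), yields $f(B,h,p_h^\textnormal{s})$.

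The main obstacle is justifying the reduction to a one-dimensional parameter $B$ rigorously, in particular that the outer minimization over $B$ loses nothing. Every feasible $\boldsymbol{\epsilon}$ has some cost $B$ and is weakly dominated by $\boldsymbol{\epsilon}^*(\boldsymbol{p}^\textnormal{s},B)$, so $\min_{\boldsymbol{\epsilon}}=\min_{B\ge0}f$. A secondary obstacle is tracking the constant $Q$ carefully, because the displayed formula omits it.
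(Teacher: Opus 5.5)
Your proposal is correct and follows the paper's route: fix $\boldsymbol{p}^\textnormal{s}$ and the total payment $B$, take the KKT-derived budgets of Theorem~\ref{theorem: budgets}, and substitute them to get $\sum_k (p_k^\textnormal{s})^2/\epsilon_k^2=S^3/B^2$; your monotonicity-plus-convexity argument makes explicit the reduction $\min_{\boldsymbol{\epsilon}}=\min_{B\ge 0}$, which the paper leaves implicit. Two small corrections: first, $2(p_1^\textnormal{s}-1/N)$ and $2(p_1^\textnormal{s}-p_h^\textnormal{s})$ coincide only when $p_h^\textnormal{s}=1/N$, so this is an inconsistency in how Problem~\ref{prob: bilevel} transcribes the $L_1$ term, harmless here because those terms do not involve $\boldsymbol{\epsilon}$ and are copied verbatim; second, the dropped $Q$ is absorbed by $v_k\mapsto\sqrt{Q}\,v_k$, not by rescaling $B$, which would also change the linear $+B$ term.
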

This leads to the reformulated optimization problem:
\begin{tcolorbox}[left = 0.5pt,top=0.5pt,bottom=0.5pt]
\begin{problem}\label{prob: final} 
\begin{equation}\label{opt: P-}
	\begin{aligned}
		\!\min_{h,p^\textnormal{s}_h} \min_{B}&\text{ }\!\eta\!\!\left(\!\! \sqrt{\!4(p_1^\textnormal{s}\!-\!p_h^\textnormal{s})^2 \!\!+\!\! \left(\!\!\!(v_1p_1^\textnormal{s})\!^{\oldfrac{2}{3}}\!\!+\!(\!v_hp_h^\textnormal{s})\!^{\oldfrac{2}{3}}\!\!+\!\!\!\!\!\sum_{1<i<h}\!\!\frac{v_i^{\oldfrac{2}{3}}}{N^{\oldfrac{2}{3}}}\!\right)^3\!\!\!\!/\!B^2\!}\!\right)\\
		&+ 2\eta\left(p_1^\textnormal{s}-p_h^\textnormal{s}\right) + B\\
		\textup{s.t. }
		& p_1^\textnormal{s} = \frac{2+N-h}{N} - p_h^\textnormal{s},\\
		&B\ge 0, 0\le p_h^\textnormal{s} \le \frac{1}{N}, h \in [0,N]\cap \mathcal{Z}^+.
	\end{aligned}
\end{equation}
\end{problem}
\end{tcolorbox}

The optimization structure of Problem \ref{prob: final} reveals that its inner minimization over $B$ exhibits convex properties, making it amenable to efficient solution through standard convex optimization techniques.

This reformulation represents a significant simplification of our original problem: we have successfully transformed the non-convex JSAM design problem from its initial 
$2N$-dimensional form into an equivalent, more tractable 3-dimensional optimization framework.

To solve this reformulated problem, we propose Algorithm \ref{algo: searching}, which operates as follows:

\begin{enumerate}
	\item Initialization (Line 1): Initialize the best objective value $f^*$ to a large number.
	\item Threshold Search (Lines 2-9): Search through feasible combinations of threshold $h$ and probability $p_h^\textnormal{s}$, computing optimal monetary cost $B$ for each pair.
	\item Selection Probability Assignment (Lines 10-20): Assign final selection probabilities and privacy budgets based on Theorems \ref{theorem: structure} and \ref{theorem: budgets}.
\end{enumerate}
The algorithm achieves computational efficiency with complexity $O(1/\Delta)$, where $\Delta$ denotes the search grid granularity.

\renewcommand{\algorithmcfname}{Algorithm}

\begin{algorithm}[t]
	\DontPrintSemicolon
	\caption{JSAM Algorithm}
	\label{algo: searching}
	\KwInput{Virtual costs $v_1, \ldots, v_N$ $(v_1 < v_2 < \cdots < v_N)$, grid $\Delta$, weight $\eta$}
	\KwOutput{$\boldsymbol{p}^*$, $\boldsymbol{\epsilon}^*$}
	
	Initialize $f^* \gets 10^6$
	
	\For{$i = 1, 2, \ldots, N$}{
		$h \gets N + 1 - i$
		
		\For{$m = 0, 1, \ldots, \lfloor 1/(N\Delta) \rfloor$}{
			$p_1^{\text{s}} \gets \dfrac{i}{N} + m\Delta, \quad p_h^{\text{s}} \gets \dfrac{1}{N} - m\Delta$
			
			$c \gets \left( v_1^{2/3} \left(p_1^{\text{s}}\right)^{2/3} + v_h^{2/3} \left(p_h^{\text{s}}\right)^{2/3} + \dfrac{\sum_{1<j<h} v_j^{2/3}}{N^{2/3}} \right)^3$
			
			$B_v \gets \displaystyle\min_B \left[ 2\eta\left(p_1^{\text{s}} - p_h^{\text{s}}\right) + \eta\sqrt{4\left(p_1^{\text{s}} - p_h^{\text{s}}\right)^2 + c/B^2} + B \right]$
			
			Compute objective $f$ with $B = B_v$
			
			\If{$f < f^*$}{
				$f^* \gets f, \quad h^* \gets h, \quad p_h^{\text{s}*} \gets p_h^{\text{s}}, \quad B^* \gets B_v$
			}
		}
	}
	
	\For{$k = 1, 2, \ldots, N$}{
		\uIf{$k = 1$}{
			$p_k^{\text{s}*} \gets \dfrac{2 + N - h^*}{N} - p_{h^*}^{\text{s}*}$
		}
		\uElseIf{$1 < k < h^*$}{
			$p_k^{\text{s}*} \gets \dfrac{1}{N}$
		}
		\uElseIf{$k = h^*$}{
			$p_k^{\text{s}*} \gets p_{h^*}^{\text{s}*}$
		}
		\Else{
			$p_k^{\text{s}*} \gets 0$
		}
	}
	
	\For{$k = 1, 2, \ldots, N$}{
		$\epsilon_k^* \gets \dfrac{\left(p_k^{\text{s}*}\right)^{2/3} B^*}{v_k^{1/3} \sum_{i=1}^{N} v_i^{2/3} \left(p_i^{\text{s}*}\right)^{2/3}}$
	}
	
	\Return{$\boldsymbol{p}^*$, $\boldsymbol{\epsilon}^*$}
\end{algorithm}

\section{Numerical Results}
In this section, we perform numerical simulations to evaluate the performance of the JSAM.
This section is organized as follows: Section \ref{subsec: setup} presents the simulation setup, Section \ref{subsec: baseline} presents the baselines, followed by performance analysis and discussions for Section \ref{subsec: test loss} to Section \ref{subsec: num}.

\subsection{Simulation Setup}\label{subsec: setup}
We evaluate JSAM's performance against baseline mechanisms using MNIST \cite{lecun1998gradient} and CIFAR-10 \cite{krizhevsky2009learning} datasets under realistic non-IID federated settings. Our experiments examine convergence behavior, privacy-utility trade-offs, and mechanism efficiency across varying privacy sensitivity distributions and system parameters.

\subsubsection{Dataset selection and computational constraints}
We use MNIST and CIFAR-10, which represent the current standard for DP-FL evaluation due to fundamental computational constraints. Single-machine DP-FL simulations face two critical bottlenecks: (i) memory limitations from serially loading data for numerous clients, causing cache thrashing that makes the process memory-bound, and (ii) computational overhead from per-example gradient clipping required for differential privacy. These constraints explain why DP-FL research consistently adopts these smaller datasets \cite{zhang2025locally,zhu2025randomized,wang2025codp}.
\subsubsection{Federated data distribution}
We distribute datasets among $N = 100$ clients in a non-IID fashion, allocating 600 samples per client for MNIST and 500 for CIFAR-10. 
The non-IID distribution follows a two-stage process \cite{karimireddy2020scaffold}: each client first receives $s\%$ of data uniformly sampled from the training set, then the remaining $(100-s)\%$ is allocated based on sorted labels, with each client receiving data from at most two distinct classes. We set the data similarity parameter $s\in\{0,30,70,100\}$ to simulate realistic heterogeneity.
\subsubsection{Privacy and mechanism parameters}
Client privacy sensitivities $c_k$ are sampled from  a uniform distribution $\mathcal{U}(0,1)$, following standard mechanism design conventions \cite{fallah2022optimal}. 
We configure $T = 1000$ total iterations, select $10$ clients per round, set clipping threshold $C = 6$, and use differential privacy parameter $\delta = 10^{-5}$.

\subsubsection{Model architectures and implementation}
 The model 
architecture for MNIST consists of a convolutional neural network (CNN) with two convolution layers, followed by $2 \times 2$ max pooling and two fully connected layers with ReLU activation. \acused{cnn}
For CIFAR-10, we adopt the \ac{cnn} architecture from \cite{tramer2020differentially}.
All training uses differentially private SGD via the Opacus framework \cite{yousefpour2021opacus}.

\subsection{Baselines}\label{subsec: baseline}
We consider the following widely studied mechanisms as baselines and compare JSAM with them.
\begin{itemize}
	\item\textbf{Unbiased selection based mechanism (USBM):}
	USBM sets the selection probability of each client proportional to their dataset size ($\boldsymbol{p}^\text{s} = \boldsymbol{p}^\textnormal{u}$) \cite{sun2021pain, lin2023heterogeneous}.
	It incorporates methods with importance compensation, using a compensation parameter to weight each client's update for unbiased learning objectives \cite{liao2024optimal}. 
	The mechanism aims to maximize the server's payoff while satisfying \ac{ic} and \ac{ir} conditions.
	
	\item\textbf{Fixed selection based mechanism (FSBM-$M$):} 
	FSBM selects a fixed subset of $M$ clients with lower privacy costs.
	The mechanism maximizes the server's payoff while \ac{ic} and \ac{ir} conditions are satisfied for the selected clients.
	
	\item \textbf{JSAM with complete information (JSAM-CI):} JSAM-CI operates in an ideal setting where the server has complete knowledge of all clients' privacy costs. 
	The server compensates each client $k \in \mathcal{N}$ based on their true costs $c_k$ rather than virtual costs $v_k$. 
	The mechanism maximizes the server's payoff while satisfying the \ac{ir} condition.
	
	\item \textbf{Biased selection based mechanism (BBM):}
	BBM selects clients in a non-proportional manner relative to their dataset sizes. 
	Given that most biased selection methods don't consider privacy protection, we adopt a widely-studied approach that selects clients with the largest local training losses \cite{cho2022towards}.
	For compatibility, client selection frequency is predetermined before training begins. 
	BBM maximizes the server's payoff while satisfying \ac{ic} and \ac{ir} conditions for the selected clients.
\end{itemize}
\begin{figure*}[t]
	\begin{subfigure}{0.33\linewidth}
		\centering
		\includegraphics[width=1\linewidth]{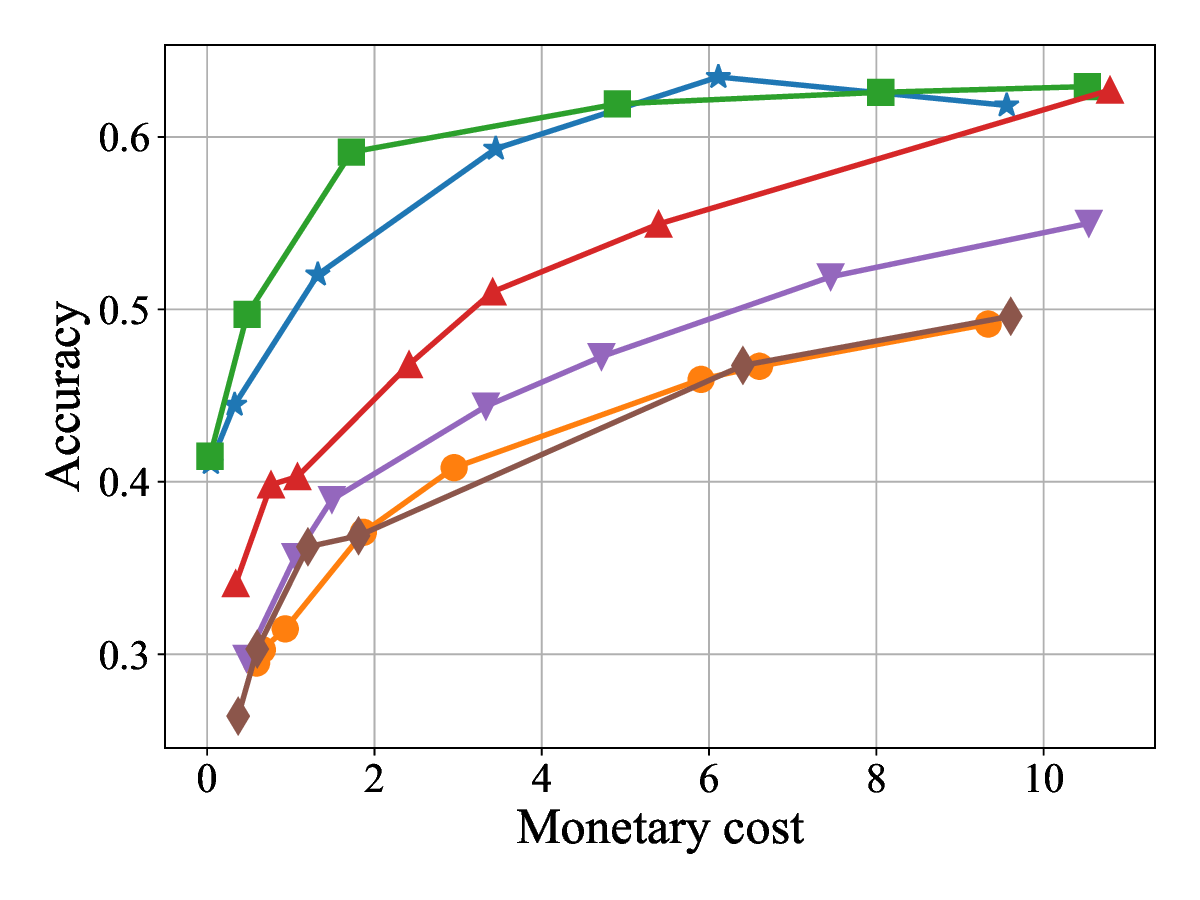}
		\caption{CIFAR-10, Similarity: $s=100$}
		\captionsetup{justification=centering,margin=2cm}
		\label{fig: test_loss_c_iid}
	\end{subfigure}
	\begin{subfigure}{0.33\linewidth}
		\centering
		\includegraphics[width=1\linewidth]{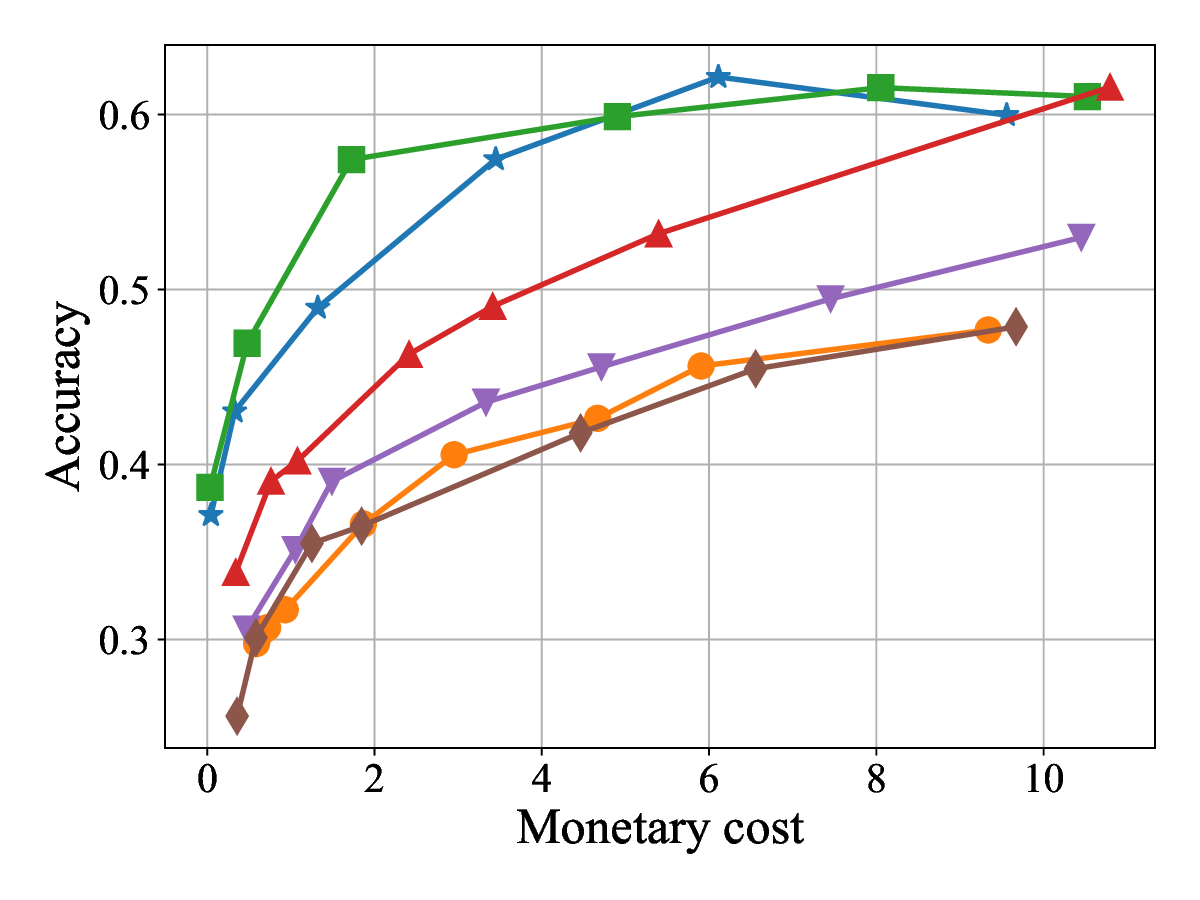}
		\caption{CIFAR-10, Similarity: $s=70$}
		\captionsetup{justification=centering}
		\label{fig: test_loss_c_sim7}
	\end{subfigure}
	\begin{subfigure}{0.33\linewidth}
		\centering
		\includegraphics[width=1\linewidth]{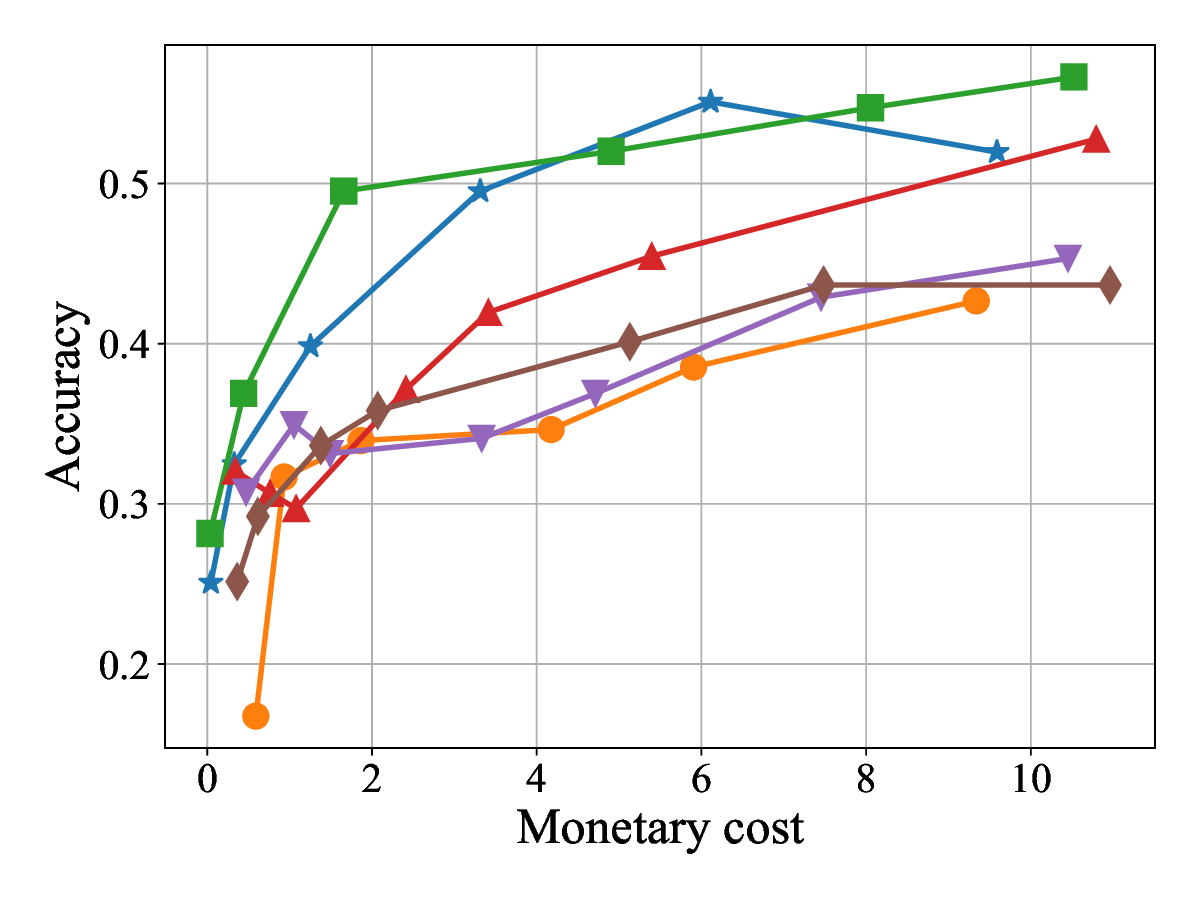}
		\caption{CIFAR-10, Similarity: $s=30$}
		\captionsetup{justification=centering}
		\label{fig: test_loss_c_sim3}
	\end{subfigure}
	\begin{subfigure}{0.33\linewidth}
		\centering
		\includegraphics[width=1\linewidth]{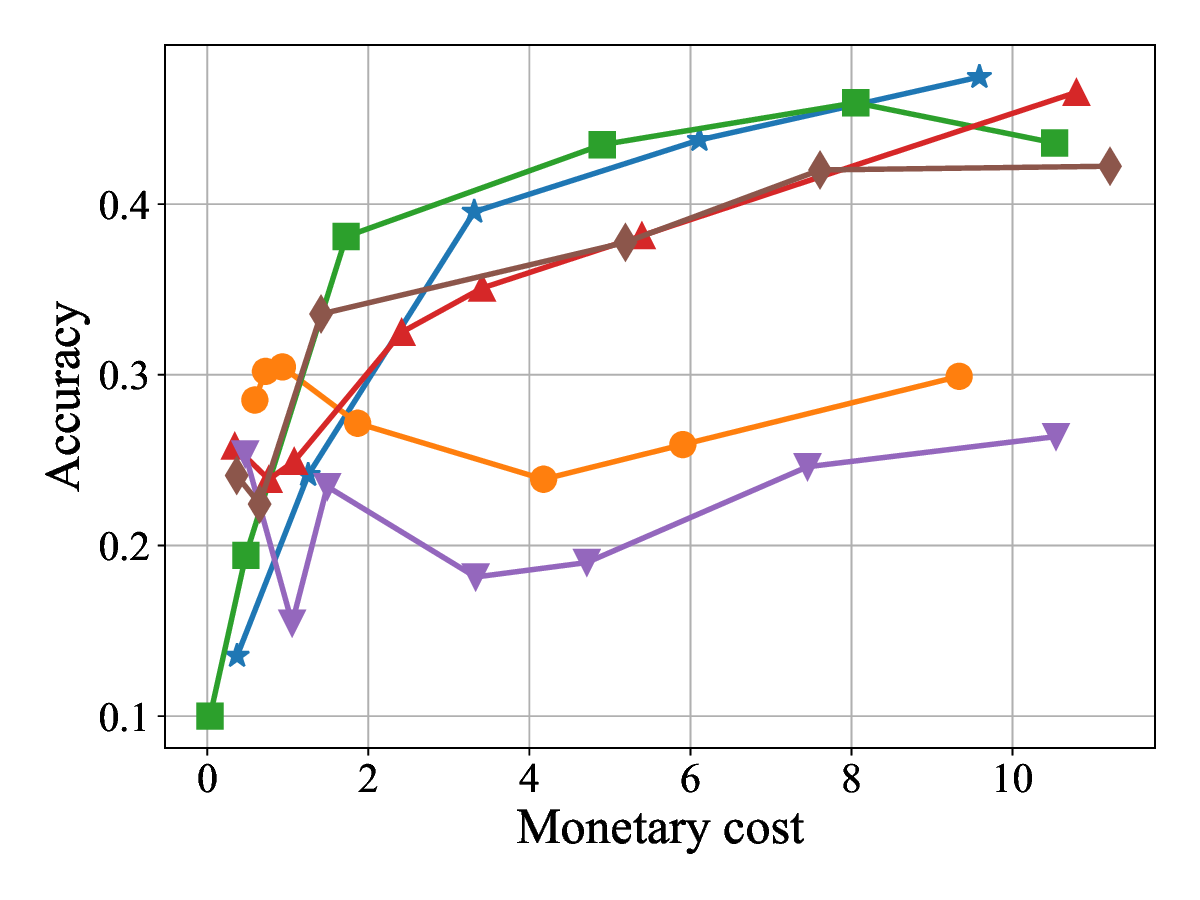}
		\caption{CIFAR-10, Similarity: $s=0$}
		\captionsetup{justification=centering}
		\label{fig: test_loss_c_sim0}
	\end{subfigure}
	\begin{subfigure}{0.33\linewidth}
		\centering
		\includegraphics[width=1\linewidth]{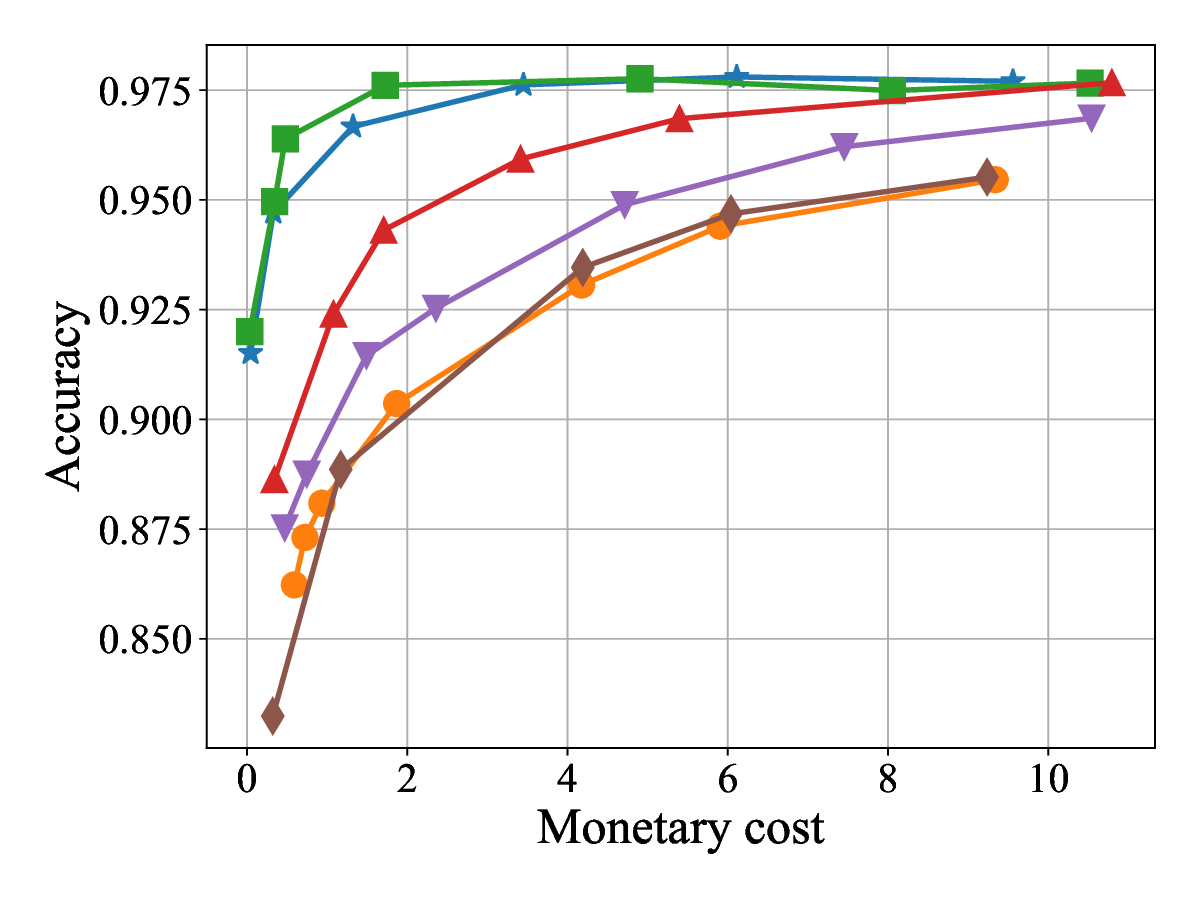}
		\caption{MNIST, Similarity: $s=100$}
		\captionsetup{justification=centering}
		\label{fig: test_loss_m_iid}
	\end{subfigure}
	\begin{subfigure}{0.33\linewidth}
		\centering
		\includegraphics[width=1\linewidth]{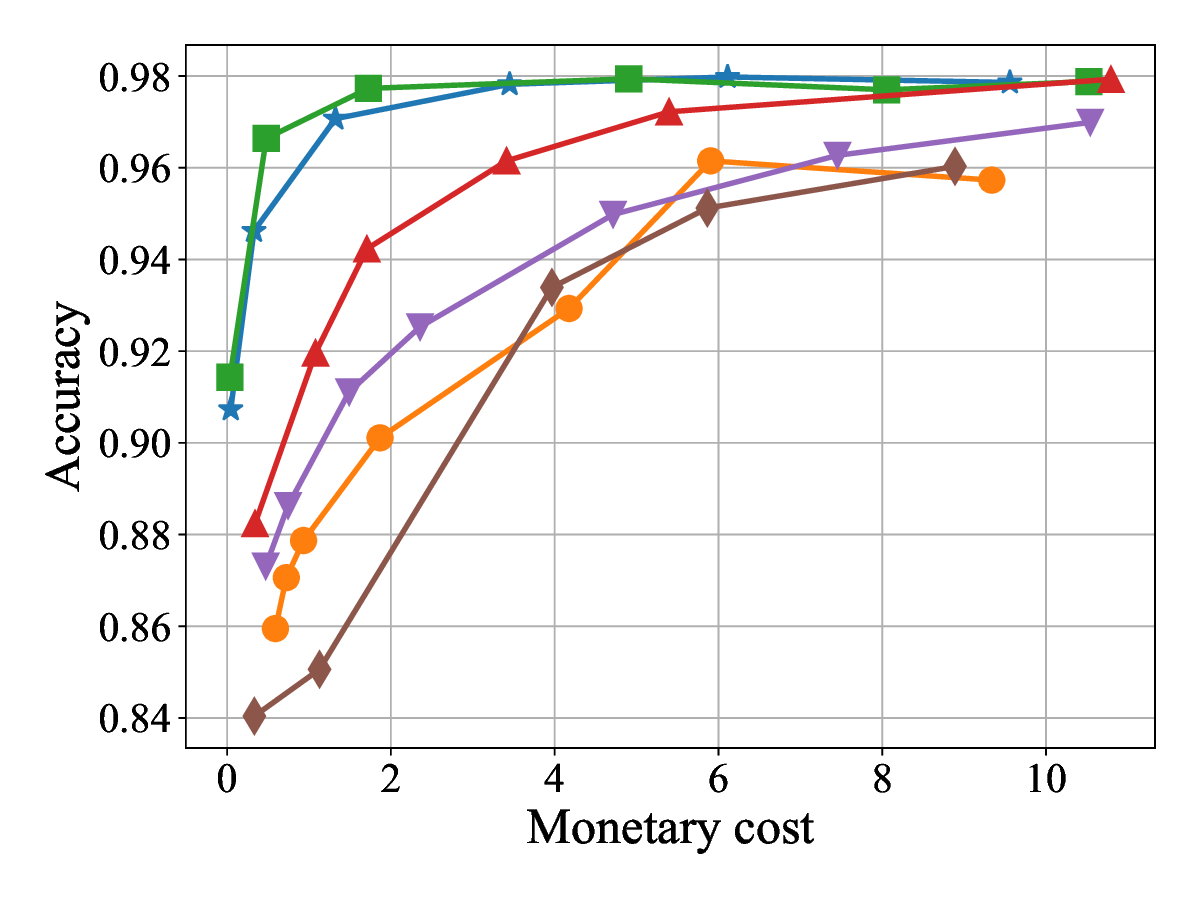}
		\caption{MNIST, Similarity: $s=70$}
		\captionsetup{justification=centering}
		\label{fig: test_loss_m_sim7}
	\end{subfigure}
	\begin{subfigure}{0.33\linewidth}
		\centering
		\includegraphics[width=1\linewidth]{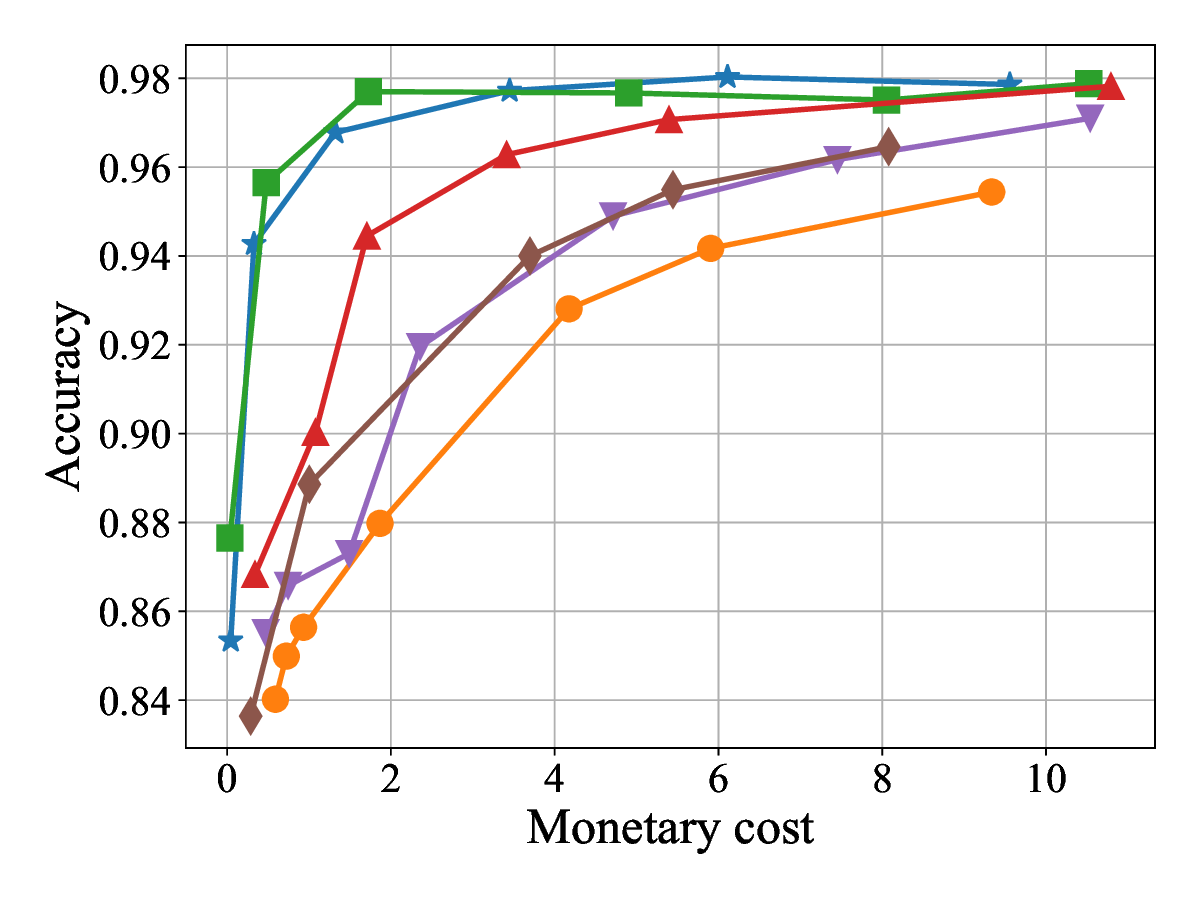}
		\caption{MNIST, Similarity: $s=30$}
		\captionsetup{justification=centering}
		\label{fig: test_loss_m_sim3}
	\end{subfigure}
	\begin{subfigure}{0.33\linewidth}
		\centering
		\includegraphics[width=1\linewidth]{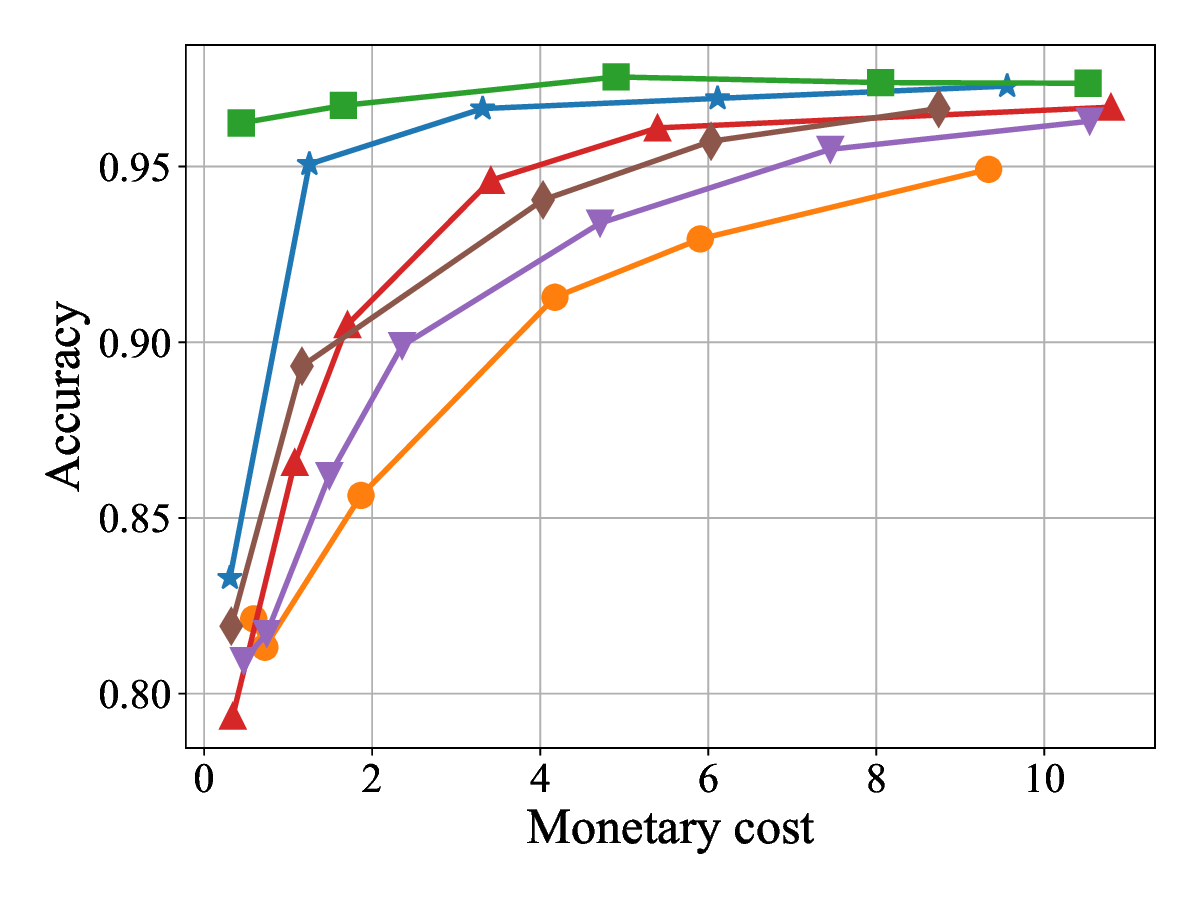}
		\caption{MNIST, Similarity: $s=0$}
		\captionsetup{justification=centering}
		\label{fig: test_loss_m_sim0}
	\end{subfigure}
	\begin{subfigure}{0.33\linewidth}
		\centering
		\includegraphics[width=0.5\linewidth]{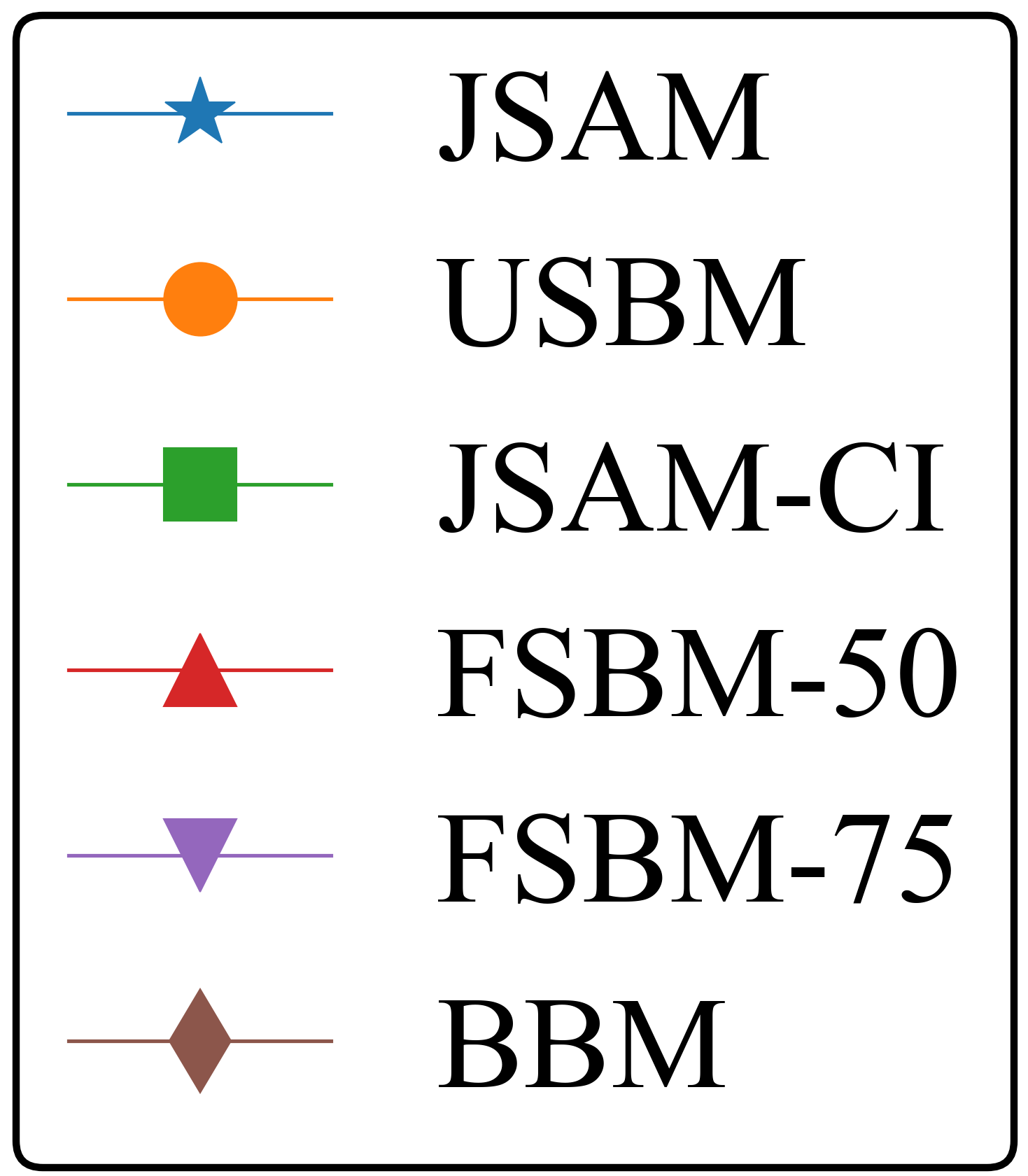}
		\vspace{38pt}
		\captionsetup{justification=centering}
		\label{fig: test_loss_tn}
	\end{subfigure}
	\caption{Test accuracy versus monetary cost.}
	\label{fig: test_acc}
\end{figure*}

\subsection{Test Loss Performance} \label{subsec: test loss}
We evaluate the test accuracy performance of JSAM and the baseline mechanisms under varying monetary costs.
The results are depicted in Figure \ref{fig: test_acc}, illustrating performance on both CIFAR-10 and MNIST datasets across four levels of data similarity ($s\% = 100\%, 70\%, 30\%, 0\%$).
We compare the test accuracy performance with small to moderate monetary costs (about 0-10) in Fig. \ref{fig: test_acc} and list the test loss performance with large monetary costs (about 500) in Tables \ref{table: test acc 1} and \ref{table: test acc 2}.

We have the following observations:
\begin{observation}[JSAM Performance]\label{ob: jsam}
	As demonstrated in Fig. \ref{fig: test_acc}(a-h), JSAM and JSAM-CI demonstrate superior test accuracies (by up to 8\%) compared to other baselines under low to moderate monetary costs.
	As monetary costs increase, shown in Tables \ref{table: test acc 1} and \ref{table: test acc 2}, JSAM's performance converges with USBM, confirming our theoretical analysis that unbiased selection becomes optimal at higher monetary costs.
\end{observation}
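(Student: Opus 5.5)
The statement is empirical, so my plan splits it into two parts. The first is the part that follows from the theory developed above: ordering of the \emph{surrogate} objective, and the large-budget limit. The second part is the experimental evidence shown in Fig.~\ref{fig: test_acc} and Tables~\ref{table: test acc 1}--\ref{table: test acc 2}, which I do not expect to prove. The honest goal is to show that the theory predicts the observed ordering of the methods and the observed convergence to USBM, and then let the experiments confirm the accuracy numbers.

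\textbf{Step 1 (dominance in the surrogate objective).} I would show that every baseline corresponds to a feasible point of Problem~\ref{prob: reformulated} that pays the same virtual costs. USBM fixes $\boldsymbol{p}^\textnormal{s}=\boldsymbol{p}^\textnormal{u}$. FSBM-$M$ fixes uniform probabilities $1/M$ on the $M$ cheapest clients. BBM fixes some predetermined $\boldsymbol{p}^\textnormal{s}$. Each of these mechanisms then optimizes only over $\boldsymbol{\epsilon}$ under IC and IR. By Theorem~\ref{theorem: reformulation} and Proposition~\ref{proposition: blue}, JSAM minimizes the same objective jointly over $(\boldsymbol{p}^\textnormal{s},\boldsymbol{\epsilon})$. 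Hence, at any fixed total payment $B$, the JSAM value of the convergence-bound term is no larger than that of any baseline.

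To make the comparison at equal monetary cost, I would use Proposition~\ref{proposition: obj}. This expresses the loss term as a function of $B$ once the budgets of Theorem~\ref{theorem: budgets} are plugged in. The comparison then becomes a pointwise inequality in $B$ between two minimizations over nested feasible sets.

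JSAM-CI replaces $v_k$ by $c_k\le v_k$, so its feasible cost is pointwise smaller. This explains why it is at least as good as JSAM and is best overall.

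\textbf{Step 2 (large-budget limit).} In Problem~\ref{prob: final}, let $\Delta=p_1^\textnormal{s}-p_h^\textnormal{s}\ge 0$ and let $c(h,p_h^\textnormal{s})$ denote the cubed term.
\begin{itemize}
\item As $B\to\infty$, the term $c/B^2\to 0$, so the loss part tends to $2\eta\Delta+\eta\sqrt{4\Delta^2}=4\eta\Delta$.
\item This limit is minimized uniquely at $\Delta=0$.
\item Together with the constraint $p_1^\textnormal{s}=\frac{2+N-h}{N}-p_h^\textnormal{s}$, $\Delta=0$ forces $h=N$ and $p_h^\textnormal{s}=1/N$. That is exactly $\boldsymbol{p}^\textnormal{u}$.
\end{itemize}
I would then argue by continuity of the optimal value that the optimal $(h^*,p_h^{\textnormal{s}*})$ equals the unbiased choice for all sufficiently large $B$. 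The comparison would be the gain $O(c/B^2)$ from biasing against the loss $4\eta\Delta$, which is linear in $\Delta$. The conclusion is that JSAM coincides with USBM at high budgets, which is the claimed convergence.

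\textbf{Step 3 (empirical confirmation).} I would verify the quantitative claims on the simulated training runs: gains of up to $8\%$ at budgets in $[0,10]$, and near-equality at budgets around $500$. These depend on how well the convergence bound tracks the actual test accuracy, so the experiments are the only evidence for them.

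\textbf{Main obstacle.} The hard step is exactly this link between the bound and test accuracy. Step 1 only shows that JSAM minimizes the surrogate bound, and smaller bounds need not mean higher accuracy. I would not attempt to prove this link. Instead I would state the theoretical ordering of the bound and the large-$B$ unbiasedness as the claims actually justified, and present the accuracy gaps as empirical consistency with them.
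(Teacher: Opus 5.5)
The paper gives no argument for this observation beyond reading it off Fig.~\ref{fig: test_acc} and Tables~\ref{table: test acc 1}--\ref{table: test acc 2}. In particular, the ``theoretical analysis that unbiased selection becomes optimal at higher monetary costs'' that it invokes is not derived anywhere in the paper. You agree that the quantitative content (the 8\% gap, the matching accuracies at cost $\approx 590$) is purely empirical. On top of that you add two facts about the surrogate bound: at a common budget JSAM's bound is no larger than any baseline's, and JSAM coincides exactly with USBM beyond a finite budget. The paper's route delivers the accuracy numbers but no explanation of them. Yours explains the ordering and supplies the missing high-budget argument, at the price, which you correctly flag, of assuming the bound tracks test accuracy. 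This is a sound and more informative route, but a few steps need tightening.

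First, the nesting in Step 1 needs Theorem~\ref{theorem: structure}. Proposition~\ref{proposition: obj} and the search in Problem~\ref{prob: final} range only over the $(h,p_h^\textnormal{s})$ family, and e.g.\ FSBM-$50$ (fifty clients at $1/50>1/N$) is not in it. You must apply Theorem~\ref{theorem: structure} at a fixed total budget $\sum_k v_k\epsilon_k=B$ to know the family attains the fixed-$B$ minimum over all $\boldsymbol{p}^\textnormal{s}$. This is legitimate because its exchange lemmas hold $\tilde{B}$ fixed.

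Second, in Step 2, continuity of the optimal value only gives convergence of the minimizer to $\boldsymbol{p}^\textnormal{u}$. Exact equality comes from the $\ell_1$ kink. Moreover, the gain from biasing is first order in $\Delta$ with a factor $1/B$, not $O(c/B^2)$, since at $\Delta=0$ the loss is $\eta\sqrt{c}/B$. To fix this, bound the bracket below by $2\Delta+\sqrt{c}/B$. Then use $\sqrt{c_\textnormal{u}}-\sqrt{c}\le K\Delta$ on the feasible set, where $c_\textnormal{u}$ is the value at $\boldsymbol{p}^\textnormal{u}$; this holds because $\sqrt{c}$ is smooth near $\boldsymbol{p}^\textnormal{u}$ and $\Delta$ is bounded away from $0$ elsewhere. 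Every biased point is then worse by at least $\eta\Delta(2-K/B)>0$ once $B>K/2$.

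Third, the experiments raise the cost by raising $\eta$, not $B$. For fixed $B$ the minimizer over $(h,p_h^\textnormal{s})$ does not depend on $\eta$. Since $\sqrt{c}\ge v_1p_1^\textnormal{s}\ge v_1/N$, the objective is at least $\eta v_1/(NB)$. Comparing this with USBM's optimal value $2\sqrt{\eta\sqrt{c_\textnormal{u}}}$ gives $B^*(\eta)\ge \sqrt{\eta}\,v_1/(2Nc_\textnormal{u}^{1/4})\to\infty$. Hence JSAM equals USBM for all large $\eta$.

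Fourth, ``JSAM-CI is best overall'' holds for the bound only. Observation~\ref{ob: dpnosie} reports JSAM occasionally beating it in accuracy.

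Incidentally, under Theorem~\ref{theorem: structure} one has $\|\boldsymbol{p}^\textnormal{s}-\boldsymbol{p}^\textnormal{u}\|_1=2(p_1^\textnormal{s}-1/N)$, not the $2(p_1^\textnormal{s}-p_h^\textnormal{s})$ used in Problem~\ref{prob: bilevel}. Your argument is unaffected, since both vanish only at $h=N$, $p_h^\textnormal{s}=1/N$.
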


\begin{observation}[FSBM Characteristics]\label{ob: fsbm}	
	As shown in Fig. \ref{fig: test_acc}(a, b, c, e, f, g, h), FSBM-50 and FSBM-75 show better test accuracy than USBM (by up to 12\%) under low and moderate monetary costs by prioritizing clients with lower privacy costs.
	At higher monetary costs, shown in Tables \ref{table: test acc 1} and \ref{table: test acc 2}, FSBM's performance improves with increased client participation, highlighting the balance between monetary investment and client participation.
\end{observation}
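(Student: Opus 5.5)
Observation~\ref{ob: fsbm} is an empirical statement, so I would not prove it outright. Instead I would give a theoretical justification inside the model of Problem~\ref{prob: relaxed}. The plan is to compare the server objective of USBM and of FSBM-$M$ as functions of the total monetary cost $B$, and to read off which mechanism has the smaller training-loss bound for small and for large $B$. The tool is Theorem~\ref{theorem: budgets} together with the closed form in Proposition~\ref{proposition: obj}. Under optimal budgets $\epsilon_k^*(\boldsymbol{p}^\textnormal{s},B)$, a direct substitution (the same computation that yields Proposition~\ref{proposition: obj}) shows that the noise term $Q\sum_k (p_k^\textnormal{s})^2/\epsilon_k^2$ equals $\bigl(\sum_k (v_kp_k^\textnormal{s})^{2/3}\bigr)^3/B^2$, up to the constant $Q$, which I absorb into the $v_k$ as the paper does. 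This formula holds for any fixed selection vector, not only the threshold structure of Theorem~\ref{theorem: structure}, so both baselines fit into it.

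\textbf{Step 1: specialize to uniform selection on a set $\mathcal{S}$ of size $m$.} Take $p_k^\textnormal{s}=1/m$ for $k\in\mathcal{S}$ and $p_k^\textnormal{s}=0$ otherwise. Then $\|\boldsymbol{p}^\textnormal{s}-\boldsymbol{p}^\textnormal{u}\|_1=2(1-m/N)$. The noise term becomes $K_\mathcal{S}^2/B^2$ with
\begin{align}\nonumber
K_\mathcal{S}=\sqrt{m}\,\Bigl(\frac{1}{m}\sum_{k\in\mathcal{S}}v_k^{2/3}\Bigr)^{3/2}.
\end{align}
USBM is the case $m=N$, $\mathcal{S}=\mathcal{N}$, with zero bias. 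FSBM-$M$ is the case where $\mathcal{S}$ consists of the $M$ clients with the smallest virtual costs, with bias $2(1-M/N)$.

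\textbf{Step 2: the low-budget regime.} As $B\to 0$, the objective behaves like $\eta K_\mathcal{S}/B$ plus bounded terms. Choosing the cheapest $M$ clients makes the mean of $v_k^{2/3}$ over $\mathcal{S}$ no larger than the mean over $\mathcal{N}$, and the factor $\sqrt{M}$ is smaller than $\sqrt{N}$. Hence $K_{\mathcal{S}}<K_{\mathcal{N}}$, with strict inequality whenever $M<N$. It follows that there is a threshold $B_0$ such that FSBM-$M$ beats USBM for every $B<B_0$. This explains the accuracy gap at low and moderate cost. By the same argument, smaller $M$ is preferred in the small-$B$ limit.

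\textbf{Step 3: the high-budget regime, which I expect to be the main obstacle.} As $B\to\infty$, the noise term vanishes. The objective then tends to $4\eta(1-M/N)$ for FSBM-$M$ and to $0$ for USBM. This limit is decreasing in $M$, so at large cost FSBM improves as more clients are admitted, and USBM eventually dominates, consistent with Observation~\ref{ob: jsam}. The difficulty is that this argument controls only the upper bound~(\ref{equ: convergence_smooth}), not the test accuracy itself. The best available route is to state the conclusion as monotonicity of the bound and to note that the crossover point $B_0$ solves a scalar equation. I would locate $B_0$ by an intermediate-value argument, using that both objectives are continuous in $B$ and that their ordering reverses between the two limits.
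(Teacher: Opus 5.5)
Your algebra inside the surrogate model is right. For any fixed $\boldsymbol{p}^\textnormal{s}$, the budgets of Theorem~\ref{theorem: budgets} make the noise term $Q\bigl(\sum_k (v_kp_k^\textnormal{s})^{2/3}\bigr)^3/B^2$. Uniform selection on $m$ clients gives bias $2(1-m/N)$ and $K_\mathcal{S}=m^{-1}\bigl(\sum_{k\in\mathcal{S}}v_k^{2/3}\bigr)^{3/2}$, and both limits in Steps 2 and 3 are correct. You also do not need the intermediate-value argument. With $b=2(1-M/N)$, the FSBM bound $b+\sqrt{b^2+K_\mathcal{S}^2/B^2}$ lies below the USBM bound $K_\mathcal{N}/B$ exactly when $B<B_0=(K_\mathcal{N}^2-K_\mathcal{S}^2)/(2bK_\mathcal{N})$, so the crossover is unique and explicit.

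The gap is that this cannot establish the statement, and the paper does not attempt a derivation. The observation's only support in the paper is the measured test accuracy in Fig.~\ref{fig: test_acc} and Tables~\ref{table: test acc 1}--\ref{table: test acc 2}. Your argument instead orders an upper bound that is blind to exactly the features the statement is about, as three concrete mismatches show.

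\textbf{(i) No dependence on data heterogeneity.} The surrogate objective has no dependence on the similarity level $s$, and across datasets it differs only through constants. Step 2 therefore predicts the low-cost FSBM advantage in all eight panels, including panel (d), which the statement deliberately excludes. It is also silent on the magnitude ``up to 12\%''. Nor does it show that $B_0$ lies beyond the roughly 0--10 cost range plotted, since that depends on $Q$ and the sampled $v_k$.

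\textbf{(ii) Step 3 does not match what the tables measure.} Step 3 compares FSBM-50 and FSBM-75 at a common large $B$. The tables report them at different costs ($341$ vs.\ $471$). At $s=0$, FSBM-50 is actually more accurate than FSBM-75 on both datasets. A monotone-in-$M$ result would therefore assert more than the data show.

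\textbf{(iii) The model does not isolate the stated cause.} Part of your Step~2 advantage comes from the factor $\sqrt{M}<\sqrt{N}$, i.e.\ from concentrating the budget on fewer clients. This persists even when all $v_k$ equal $v$, since then $K_\mathcal{S}=\sqrt{M}\,v$. So the model does not single out ``prioritizing clients with lower privacy costs'' as the explanation.

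What you have is a sound model-level heuristic consistent with the qualitative trend. Support for the observation as stated has to come from the experiments, which is the only route the paper uses.
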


\begin{observation}[BBM Limitations]\label{ob: bbm}
	As shown in Fig. \ref{fig: test_acc}(a, b, c, e, f, g, h), BBM generally underperforms compared to other mechanisms, primarily due to its failure to account for client privacy sensitivity heterogeneity.
	It only achieves optimal performance in cases where selected clients with high local losses happen to have low privacy sensitivities.
\end{observation}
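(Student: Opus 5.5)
The statement is an empirical observation, so a proof in the strict sense is not available. What I would give instead is a mechanistic justification. It explains the observed ranking through the structure of Problem \ref{prob: relaxed}, and it is checked against the curves in Fig. \ref{fig: test_acc}. The plan is to treat BBM as a feasible but generally suboptimal choice of $\boldsymbol{p}^\textnormal{s}$ in Problem \ref{prob: relaxed}. BBM uses a fixed, loss-driven selection vector $\boldsymbol{p}^\textnormal{BBM}$ and then optimizes $\boldsymbol{\epsilon}$ (equivalently $B$) under IC and IR. Its server cost is therefore the inner value of Problem \ref{prob: final}, evaluated at this vector rather than at the minimizer over $(h,p^\textnormal{s}_h)$.

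First, I would invoke Theorem \ref{theorem: budgets} and Proposition \ref{proposition: obj}, applied to a general $\boldsymbol{p}^\textnormal{s}$ rather than the threshold form. For fixed $\boldsymbol{p}^\textnormal{s}$ and budget $B$, the DP-noise contribution equals
\begin{align}\nonumber
\Phi(\boldsymbol{p}^\textnormal{s}) = \frac{\bigl(\sum_{i} (v_i p^\textnormal{s}_i)^{2/3}\bigr)^3}{B^2},
\end{align}
and the bias contribution is $\|\boldsymbol{p}^\textnormal{s}-\boldsymbol{p}^\textnormal{u}\|_1$. The second step is a rearrangement-type argument, in the spirit of the exchange argument behind Theorem \ref{theorem: structure}. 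Among all selection vectors with the same multiset of probabilities, hence the same $\ell_1$ deviation, $\Phi$ is minimized by assigning the largest probabilities to the smallest virtual costs. BBM assigns its probabilities according to local losses, which are statistically independent of $c_k$ because $c_k\sim\mathcal{U}(0,1)$ is sampled independently of the data split. Consequently its expected $\Phi$ strictly exceeds the sorted value unless the ordering by loss happens to agree with the ordering by $v_k$.

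Third, the same bound identifies the equality case. $\Phi(\boldsymbol{p}^\textnormal{BBM})$ matches the sorted value exactly when the high-loss clients are the low-$v_k$ clients. By Assumption \ref{assumption: increasing}, these are the low-$c_k$ clients. This is precisely the exception named in the observation. I would then connect the cost gap to accuracy via the convergence bound (\ref{equ: convergence_smooth}): at equal monetary cost $B$, a larger noise term yields a larger non-vanishing loss. This matches BBM lying below JSAM, JSAM-CI and FSBM in panels (a, b, c, e, f, g, h).

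The main obstacle is the final link from the surrogate objective to actual test accuracy. The bound (\ref{equ: convergence_smooth}) is only an upper bound, and BBM's loss-based selection may reduce the true non-IID bias more than $\|\boldsymbol{p}^\textnormal{s}-\boldsymbol{p}^\textnormal{u}\|_1$ indicates. This is presumably why panel (d) ($s=0$) is excluded from the claim. I would therefore state the argument as a comparison of the surrogate objective. The empirical consistency would then be confirmed by reporting, for each run, the realized rank correlation between BBM's selected losses and $v_k$, and checking that BBM's good runs coincide with high correlation.
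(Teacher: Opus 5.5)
The paper offers no argument for this observation beyond the plotted curves in Fig.~\ref{fig: test_acc}. The clause about privacy-sensitivity heterogeneity is an interpretation there, not something derived. Your route is therefore genuinely different: you derive that interpretation inside the surrogate of Problem~\ref{prob: relaxed}.

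Your two technical steps are sound.
\begin{itemize}
\item With the budgets of Theorem~\ref{theorem: budgets}, the noise term is exactly $Q\bigl(\sum_i (v_i p^{\textnormal{s}}_i)^{2/3}\bigr)^3/B^2$.
\item Permuting the entries of $\boldsymbol{p}^{\textnormal{s}}$ leaves $\|\boldsymbol{p}^{\textnormal{s}}-\boldsymbol{p}^{\textnormal{u}}\|_1$ unchanged, since dataset sizes are equal. If some pair carries the larger probability on the larger virtual cost, swapping them strictly lowers $\sum_i v_i^{2/3}(p^{\textnormal{s}}_i)^{2/3}$.
\end{itemize}
What this buys over the paper is a model-level proof that a $v$-blind selection rule is strictly dominated at equal budget. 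It also proves the ``only if'' half of the second sentence. Plain dominance of JSAM over BBM needs no rearrangement at all: BBM's vector is feasible for Problem~\ref{prob: relaxed}, and JSAM's curve is the lower envelope of the surrogate at every budget it attains. The rearrangement is what supplies strictness and the equality case.

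The gap is that ``other mechanisms'' includes USBM, which is just as blind to privacy sensitivity, and your list silently drops it. Your mechanism cannot explain a BBM-versus-USBM gap. Worse, the surrogate predicts the opposite ordering in the low-budget regime of Fig.~\ref{fig: test_acc}, where the noise term dominates. Suppose BBM is uniform over a $v$-independent set of $M$ clients, and let $\bar w$ be the population mean of $v_i^{2/3}$.
\begin{itemize}
\item Then $\sum_{\textnormal{sel}} v_i^{2/3}\approx M\bar w$, so BBM's noise term is about $M\bar w^3/B^2$, against $N\bar w^3/B^2$ for USBM.
\item Only the bias term $2(1-M/N)$ favours USBM. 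For $M=N/2$, the surrogate already prefers BBM once $Q\Phi_{\textnormal{USBM}}>16$.
\end{itemize}
That part of the claim therefore rests on the figure alone, or on effects outside the model, not on your argument.

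There are three further limits.
\begin{itemize}
\item The FSBM comparison follows from rearrangement only if BBM's sorted vector is itself an FSBM vector, i.e., uniform over exactly $50$ or $75$ clients. You would need to verify this.
\item The equality case is necessary but not sufficient for optimality. An aligned BBM that is uniform over $2\le M<N$ clients has $M$ entries above $1/N$. By Lemma~\ref{lemma: S+} it is still strictly beaten by JSAM: $p_1\mapsto (v_1p_1)^{2/3}+(v_2(\tilde P-p_1))^{2/3}$ is strictly concave, so some reallocation pushes one client down to $1/N$. Cases where BBM comes out on top, such as MNIST with $s=100$ in Table~\ref{table: test acc 2}, are therefore outside what the surrogate can explain.
\item Assumption~\ref{assumption: increasing} is a within-client monotonicity. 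Equating low-$v_k$ with low-$c_k$ across clients uses the identical $\mathcal{U}(0,1)$ costs of the experiments ($v_k=2c_k$), not the assumption.
\end{itemize}
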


\begin{observation}[DP Noise Impact]\label{ob: dpnosie}
	Interestingly, as shown in Fig. \ref{fig: test_acc}(a-g), JSAM occasionally outperforms JSAM-CI despite the latter's complete information advantage.
	This can be attributed to the randomness introduced by \ac{dp} noise.
 	Under conditions of high data heterogeneity and significant \ac{dp} noise, performance shows increased variability, as evident in Figure \ref{fig: test_acc}(d).
\end{observation}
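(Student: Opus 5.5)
Observation~\ref{ob: dpnosie} is an empirical statement, so I would support it with an argument explaining why it must be possible, followed by a targeted experiment, rather than a formal derivation. The key structural point comes first. Both JSAM and JSAM-CI choose $(\boldsymbol{p}^\textnormal{s},\boldsymbol{\epsilon})$ by minimizing the objective of Problem~\ref{prob: relaxed}. That objective contains the non-vanishing term of the convergence bound (\ref{equ: convergence_smooth}), which upper-bounds the \emph{expected} gradient norm rather than the realized test accuracy of a single run. JSAM-CI replaces $v_k$ by $c_k\le v_k$. Through Theorem~\ref{theorem: budgets}, this gives it a weakly better value of this surrogate at every monetary cost $B$. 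Nothing in that comparison, however, orders the \emph{realized} accuracy of one training trajectory. My first step is therefore to make this gap explicit. I would write the final iterate as a deterministic part plus a zero-mean perturbation that accumulates the injected noise $\sum_t \alpha\,\mathcal{N}(0,\sigma_k^2C^2\boldsymbol{I})/|\mathcal{S}_t|$. I would then show that the variance of this perturbation is of order $Q\sum_k (p^\textnormal{s}_k)^2/\epsilon_k^2$. This is the same quantity that appears inside the square root of Problem~\ref{prob: final}.

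Second, I would quantify when the sign of the accuracy difference can flip. Let $\Delta$ denote the gap in expected performance between JSAM-CI and JSAM at a fixed $B$. Let $s$ denote the run-to-run standard deviation of test accuracy, which grows with the DP noise term above. I would note that JSAM outperforming JSAM-CI in a single run has non-negligible probability roughly when $s\gtrsim\Delta$. In the low-to-moderate monetary cost regime of Fig.~\ref{fig: test_acc}, $B$ is small, so by Theorem~\ref{theorem: budgets} the budgets $\epsilon_k^*$ are small and the noise term dominates. In addition, $\Delta$ is small under the uniform prior on $c_k$, since $v_k=2c_k$ only rescales costs and largely preserves the selection structure of Theorem~\ref{theorem: structure}. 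Both effects push the regime toward $s\gtrsim\Delta$.

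Third, for the heterogeneity claim, I would argue that under low similarity $s$ the selection bias term $\|\boldsymbol{p}^\textnormal{s}-\boldsymbol{p}^\textnormal{u}\|_1$ interacts with label-skewed clients. As a result, which few clients dominate the updates, and therefore which noise realizations matter, changes the learned classes substantially, which inflates variance as seen in Fig.~\ref{fig: test_acc}(d).

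Finally, I would validate this empirically by repeating each configuration over several seeds, then reporting means with confidence intervals and the fraction of seeds in which JSAM beats JSAM-CI. I expect the main obstacle to be the second step. Turning the upper bound into a statement about the \emph{variance} of test accuracy is not rigorous, because the bound controls gradient norms rather than accuracy. I would therefore first try to phrase the argument purely in terms of the gradient-norm surrogate, and rely on the multi-seed experiment to connect it to accuracy.
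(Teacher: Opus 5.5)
The paper gives no argument for this observation. It is read directly off Fig.~\ref{fig: test_acc}, and the attribution to DP noise is simply asserted, so your plan is necessarily a different route, and your multi-seed experiment is the only part that would actually test the claim.

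\textbf{The premise of your second step is false.} Under the uniform prior, $v_k=2c_k$ is not a harmless rescaling. The objective of Problem~\ref{prob: relaxed} with weight $\eta$ and costs $v_k=2c_k$ is exactly twice the JSAM-CI objective with weight $\eta/2$. So JSAM with weight $\eta$ returns the same $(\boldsymbol{p}^\textnormal{s},\boldsymbol{\epsilon})$ as JSAM-CI with weight $\eta/2$, while its cost $\sum_k v_k\epsilon_k$ is twice JSAM-CI's $\sum_k c_k\epsilon_k$. Table~\ref{table: test acc 1} is consistent with this: with $B^*\propto\sqrt{\eta}$ at high cost, equal $\eta$ gives a cost ratio of $\sqrt{2}$, and indeed $590.52\approx\sqrt{2}\times 417.57$. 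Hence at a common cost $B$, JSAM runs exactly the configuration JSAM-CI uses at $B/2$, and your gap is $\Delta(B)=\mathbb{E}[\mathrm{Acc}^{\mathrm{CI}}(B)]-\mathbb{E}[\mathrm{Acc}^{\mathrm{CI}}(B/2)]$. Doubling the budget at fixed $\boldsymbol{p}^\textnormal{s}$ divides the DP term $Q\sum_k(p^\textnormal{s}_k)^2/\epsilon_k^2$ by four. That term dominates precisely in the low-cost, noise-dominated regime you invoke, where by Observation~\ref{ob: num} the threshold $h$ also moves substantially. Your two effects therefore pull in opposite directions, and step two does not deliver your $s\gtrsim\Delta$.

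\textbf{What the experiment must control.} The same fact gives you a sharper target. JSAM beating JSAM-CI at equal cost means a JSAM-CI configuration at $B/2$ beating the one at $B$, i.e., a non-monotonicity of a single curve. That can come from run-to-run randomness, but also deterministically. The surrogate bounds a gradient norm and need not order expected accuracy. For example, at similarity $s=100$ the $\|\boldsymbol{p}^\textnormal{s}-\boldsymbol{p}^\textnormal{u}\|_1$ penalty has no real counterpart, so JSAM's more concentrated selection can be genuinely better in expectation. Your experiment separates these explanations only under two conditions:
\begin{itemize}
\item You fix the decision rule in advance: a statistically significant mean advantage for JSAM would make the effect systematic rather than DP randomness.
\item You vary only the DP-noise seed, holding fixed the per-round client draws, the initialization, and the draw of $\boldsymbol{c}$. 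Otherwise ``several seeds'' confounds all of these with DP noise.
\end{itemize}

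\textbf{Step three.} The extra variability at $s=0$ stems largely from drawing only ten clients per round, each holding at most two classes. That effect is present even when $\boldsymbol{p}^\textnormal{s}=\boldsymbol{p}^\textnormal{u}$, hence for USBM too, and DP noise amplifies it. The term $\|\boldsymbol{p}^\textnormal{s}-\boldsymbol{p}^\textnormal{u}\|_1$ is a deterministic bias and does not itself create variance. Finally, rename your standard deviation, since $s$ is already the paper's similarity parameter.
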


\begin{table}[ht]
	\centering
	\caption{Test accuracy performance with large monetary costs (CIFAR-10).}
	\label{table: test acc 1}
	\begin{tabular}{|c|c|c|c|}
		\hline
		\textbf{Mechanism} & \textbf{Monetary Cost} & \textbf{Non-iid Degree} & \textbf{Test Accuracy} \\
		\hline
		& \multirow{4}{*}{590.5221} & $s=100$ & 77.37\%\\
		\cline{3-4}
		JSAM &  & $s=70$ & 76.13\%\\
		\cline{3-4}
		(Proposed) &  & $s=30$ & 68.20\% \\
		\cline{3-4}
		&  & $s=0$ & 50.06\% \\
		\hline
		\multirow{4}{*}{USBM} & \multirow{4}{*}{590.5437} & $s=100$ & 77.45\% \\
		\cline{3-4}
		 &  & $s=70$ & 75.93\% \\
		\cline{3-4}
		 &  & $s=30$ & 68.06\% \\
		\cline{3-4}
		 &  & $s=0$ & 48.60\% \\
		\hline
		\multirow{4}{*}{JSAM-CI} & \multirow{4}{*}{417.5662} & $s=100$ & 77.66\% \\
		\cline{3-4}
		 &  & $s=70$ & 74.96\% \\
		\cline{3-4}
		 &  & $s=30$ & 68.75\% \\
		\cline{3-4}
		 &  & $s=0$ & 44.27\% \\
		\hline
		\multirow{4}{*}{FSBM-50} & \multirow{4}{*}{341.3135} & $s=100$ & 75.35\% \\
		\cline{3-4}
		 &  & $s=70$ & 74.40\% \\
		\cline{3-4}
		 &  & $s=30$ & 65.03\% \\
		\cline{3-4}
		 &  & $s=0$ & 48.79\% \\
		\hline
		\multirow{4}{*}{FSBM-75} & \multirow{4}{*}{471.2816} & $s=100$ & 76.82\% \\
		\cline{3-4}
		 &  & $s=70$ & 75.50\% \\
		\cline{3-4}
		 &  &$s=30$ & 65.61\% \\
		\cline{3-4}
		 &  & $s=0$ & 43.37\% \\
		\hline
		\multirow{4}{*}{BBM} & 477.8747 & $s=100$ & 64.51\% \\
		\cline{2-4}
		& 468.9910 & $s=70$ & 63.20\% \\
		\cline{2-4}
		& 521.8094 &$s=30$ & 55.22\% \\
		\cline{2-4}
		& 545.6140 & $s=0$ & 49.04\% \\
		\hline
	\end{tabular}
\end{table}

\begin{table}[ht]
	\centering
	\caption{Test accuracy performance with large monetary costs (MNIST).}
	\label{table: test acc 2}
	\begin{tabular}{|c|c|c|c|}
		\hline
		\textbf{Mechanism} & \textbf{Monetary Cost} & \textbf{Non-iid Degree} & \textbf{Test Accuracy} \\
		\hline
		\multirow{4}{*} & \multirow{4}{*}{590.5221} & $s=100$ & 99.05\%\\
		\cline{3-4}
		JSAM &  & $s=70$ & 98.93\%\\
		\cline{3-4}
		(proposed) &  & $s=30$ & 98.88\% \\
		\cline{3-4}
		&  & $s=0$ & 98.13\% \\
		\hline
		\multirow{4}{*}{USBM} & \multirow{4}{*}{590.5437} & $s=100$ & 98.99\% \\
		\cline{3-4}
		&  & $s=70$ & 98.94\% \\
		\cline{3-4}
		&  & $s=30$ & 98.88\% \\
		\cline{3-4}
		&  & $s=0$ & 98.44\% \\
		\hline
		\multirow{4}{*}{JSAM-CI} & \multirow{4}{*}{417.5662} & $s=100$ & 99.04\% \\
		\cline{3-4}
		&  & $s=70$ & 98.95\% \\
		\cline{3-4}
		&  & $s=30$ & 98.92\% \\
		\cline{3-4}
		&  & $s=0$ & 97.78\% \\
		\hline
		\multirow{4}{*}{FSBM-50} & \multirow{4}{*}{341.3135} & $s=100$ & 98.91\% \\
		\cline{3-4}
		&  & $s=70$ & 98.85\% \\
		\cline{3-4}
		&  & $s=30$ & 98.75\% \\
		\cline{3-4}
		&  & $s=0$ & 97.96\% \\
		\hline
		\multirow{4}{*}{FSBM-75} & \multirow{4}{*}{471.2816} & $s=100$ & 98.97\% \\
		\cline{3-4}
		&  & $s=70$ & 98.96\% \\
		\cline{3-4}
		&  &$s=30$ & 98.76\% \\
		\cline{3-4}
		&  & $s=0$ & 97.91 \% \\
		\hline
		\multirow{4}{*}{BBM} & 478.2449 & $s=100$ & 99.06\% \\
		\cline{2-4}
		& 485.7700 & $s=70$ & 98.89\% \\
		\cline{2-4}
		& 511.2514 &$s=30$ & 97.27\% \\
		\cline{2-4}
		& 512.2826 & $s=0$ & 98.22\% \\
		\hline
	\end{tabular}
\end{table}

\subsection{Impact of the Weighted Parameter $\eta$}\label{subsec: monetary cost}
The weighted parameter $\eta$ plays a crucial role in balancing the trade-off between the selection probability adjustment and the monetary compensation.
Figure \ref{fig: weightcost} illustrates how the total monetary cost varies with different values of $\eta$ on the CIFAR-10 dataset.
 
\begin{observation}\label{ob: weighted}
	As the weighted parameter $\eta$ increases, the total monetary cost increases convexly.
\end{observation}
Observation \ref{ob: weighted} highlights that the total monetary cost not only rises with $\eta$ but does so at an accelerating rate.
This convex behavior suggests that higher $\eta$ values disproportionately favor monetary compensation over selection probability adjustments, potentially amplifying costs due to compounding effects in the compensation mechanism.
\begin{figure}
	\centering
	\includegraphics[width=0.7\linewidth]{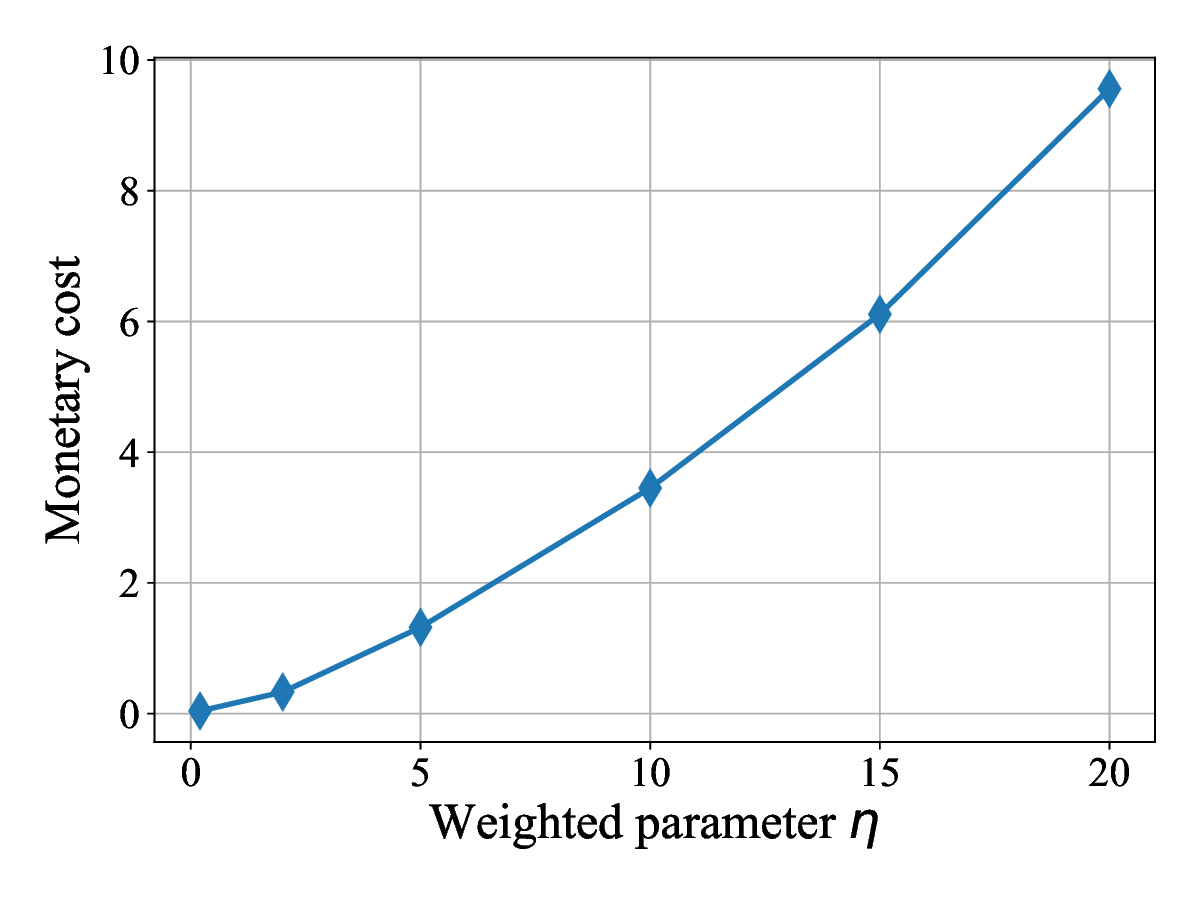}
	\caption{Monetary cost versus weighted parameter $\eta$.}
	\label{fig: weightcost}
\end{figure}

\subsection{Comparing the Number of Participating Clients}\label{subsec: num}
Figure \ref{fig: num} illustrates the relationship between total monetary cost and the number of participating clients across different mechanisms on the CIFAR-10 dataset.
\begin{observation}\label{ob: num}
	JSAM and JSAM-CI exhibit a concave increase in the number of selected clients as the monetary cost rises (\textit{e.g.}, from 27 at cost 1.32 to 41 at cost 3.45), whereas other non-adaptive baseline mechanisms maintain a constant number of participating clients regardless of the monetary budget. 
\end{observation}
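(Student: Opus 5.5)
Since this statement is an empirical observation, I would back it with a comparative-statics argument on Problem \ref{prob: final} rather than attempt an exact proof. By Theorem \ref{theorem: structure}, the number of participating clients is the threshold $h^*$, or $h^*-1$ when $p_h^{\textnormal{s}*}=0$. I would therefore study how $h^*$ depends on the total monetary cost $B$. I treat $B$ as the exogenous parameter, which is how Fig. \ref{fig: num} is generated by sweeping $\eta$. I also replace the pair $(h,p_h^\textnormal{s})$ by a single continuous ``mass'' variable $x=h-2+Np_h^\textnormal{s}\in[0,N-1]$, which counts the clients other than client $1$. With this choice the bias is $d:=p_1^\textnormal{s}-p_h^\textnormal{s}=(N-x)/N$, which is linear and decreasing in $x$. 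The privacy term is
\begin{align}\nonumber
S(x):=(v_1p_1^\textnormal{s})^{2/3}+\sum_{1<i<h}v_i^{2/3}N^{-2/3}+(v_hp_h^\textnormal{s})^{2/3},
\end{align}
and I will argue that $S$ is increasing in $x$.

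First, fix $B$ and define $g(x,B)=2\eta d(x)+\eta\sqrt{4d(x)^2+S(x)^3/B^2}$. I would show that $\partial_x\partial_B g\le 0$ (decreasing differences), so that by Topkis-type monotone comparative statics the minimizer $x^*(B)$, and hence $h^*$, is weakly increasing in $B$. The intuition is that $S^3/B^2$ is the only term that penalizes adding clients, and its weight $1/B^2$ shrinks as $B$ grows, while the benefit of reduced bias does not depend on $B$. The cross-derivative is computed directly. The sign follows from $S'(x)>0$, $d'(x)<0$, and the fact that $\partial_B\big(S^3/B^2\big)=-2S^3/B^3$ is more negative when $S$ is larger.

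Second, for concavity, I would write the first-order condition $\partial_x g(x^*,B)=0$ and apply the implicit function theorem to get $dx^*/dB=-\partial_{xB}g/\partial_{xx}g$. The marginal privacy cost of including the next client is roughly proportional to $v_{h}^{2/3}$, and $v_h$ increases in $h$ by the ordering and Assumption \ref{assumption: increasing}. The marginal bias reduction, by contrast, is a constant $1/N$ per client. Hence each additional client requires a strictly larger increment of $B$ than the previous one, so $x^*(B)$ grows at a decreasing rate.

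To make this concrete, I would use the simulation setting $c_k\sim\mathcal{U}(0,1)$. Then $v_k=2c_k$, so $v_i\approx 2i/N$ and $S(x)\approx \mathrm{const}\cdot x^{5/3}$. Substituting into the first-order condition should give a power law $x^*(B)\propto B^{\gamma}$ with $\gamma<1$, which is concave. Finally, I would note that the baselines have a fixed number of participants by construction: USBM uses all $N$, FSBM-$M$ uses $M$, and BBM uses a fixed loss-ranked set.

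I expect the main obstacle to be the discreteness of $h$ together with the nonsmooth contribution of $p_h^\textnormal{s}$. The term $(v_hp_h^\textnormal{s})^{2/3}$ is concave in $p_h^\textnormal{s}$, so the optimum tends to sit at the corners $p_h^\textnormal{s}\in\{0,1/N\}$. That breaks the continuous relaxation and makes $h^*(B)$ a step function, so concavity can only hold for its envelope. I would address this in two steps. First, I would compare the objective values at consecutive integers $h$ and $h+1$ to obtain the switching budgets $B_h$. Second, I would show that the gaps $B_{h+1}-B_h$ are increasing. This reproduces the observed concave growth, for example from 27 to 41 clients, up to integer rounding.
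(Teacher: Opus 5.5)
The paper gives no derivation for this statement. It is read off Fig.~\ref{fig: num} for one draw of $c_k\sim\mathcal{U}(0,1)$, so the values $27$, $41$, $1.32$, $3.45$ depend on that draw and on $Q$. No qualitative argument can ``reproduce'' them; at most you can show the curve has the predicted shape.

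Your monotonicity half is sound and genuinely adds something. Treating $B$ as the parameter is legitimate because $\eta$ multiplies every $x$-dependent term, so $(B^*(\eta),x^*(\eta))$ lies on the graph of $B\mapsto\arg\min_x g(x,B)$. You need neither the cross-partial nor smoothness, which in fact fails at integer $x$, where $S$ has an infinite one-sided slope. The derivative $\partial_B g=-\eta S^3B^{-3}(4d^2+S^3/B^2)^{-1/2}$ is nonincreasing in $x$ whenever $S$ is nondecreasing and $d\ge 0$ is nonincreasing, and Topkis on the chain $[0,N-1]$ then finishes the argument. $S$ is indeed nondecreasing: on each segment, $dS/dp_h=\tfrac{2}{3}\bigl(v_h^{2/3}p_h^{-1/3}-v_1^{2/3}p_1^{-1/3}\bigr)\ge 0$, because $v_h\ge v_1$ and $p_h\le 1/N\le p_1$.

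You do need to fix the bias term. $(N-x)/N$ equals $p_1$, not $p_1-p_h$. Under Theorem~\ref{theorem: structure}, $\|\boldsymbol{p}^\textnormal{s}-\boldsymbol{p}^\textnormal{u}\|_1=2(p_1-1/N)=2(N-1-x)/N$, which is the affine decreasing function your argument needs. Use it rather than the expression $2(p_1-p_h)$ written in Problem~\ref{prob: final}. That expression jumps up by $2/N$ each time $h$ increments, and with it decreasing differences fail at those jumps.

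The genuine gap is in the concavity half. The implicit-function formula only gives the sign of $dx^*/dB$. Concavity concerns $d^2x^*/dB^2$, which you never compute. The heuristic you substitute also misidentifies the mechanism. The marginal bias benefit is $2/N+4d/(Nw)$ with $w=\sqrt{4d^2+S^3/B^2}$, so it is not constant. The marginal privacy cost is $3S^2S'/(2B^2w)$, driven by the cumulative $S^2$ as much as by $v_h^{2/3}$. So increasing $v_h$ does not by itself force increasing increments of $B$.

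More fundamentally, concavity is not a property of Problem~\ref{prob: final} for arbitrary sorted virtual costs. A block of clients whose virtual costs sit far above the rest produces a plateau in the client count, followed by renewed growth. Random spacings of the order statistics $2c_{(k)}$ can likewise make the switching gaps $B_{h+1}-B_h$ non-monotone. So your final step, showing that these gaps increase, cannot be proved for a realization.

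What you can prove is concavity of the continuous relaxation for the smoothed profile $v_i\approx 2i/N$. Squaring the stationarity condition $\partial_x g=0$ inverts it explicitly:
\begin{equation*}
B(x)=\frac{N R'(x)}{4\sqrt{N R'(x)\,d(x)+R(x)}},\qquad R=S^3 .
\end{equation*}
Take $R\propto x^5$ (neglecting client $1$'s term, which is lower order in the range of interest) and $d=(N-1-x)/N$. This gives $B(x)\propto x^2\bigl(5(N-1)-4x\bigr)^{-1/2}$, which is increasing and convex on $[0,N-1]$. There $g$ is also convex in $x$: it is an affine term plus the Euclidean norm of an affine and a convex nonnegative function, so the stationary point is the minimizer. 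Hence $x^*(B)$ is concave. It is not a power law, and it reaches $N-1$ at finite $B$, consistent with Observation~\ref{ob: jsam}. State the result in this idealized form, and read the simulated curve as only approximately concave.
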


Observation \ref{ob: num} reveals that JSAM adapts client participation to the budget in a concave manner, with the number of clients growing rapidly at first but at a diminishing rate as costs increase.
At low costs, it minimizes monetary cost by excluding high-cost clients, while at high costs, it maximizes data contributions by including more clients.
This flattening trend suggests a saturation effect: while JSAM leverages initial monetary budgets to expand participation, further budget increases yield smaller gains, possibly due to a limited pool of cost-effective clients or escalating per-client costs.
\begin{figure}[t]
	\centering
	\includegraphics[width=0.7\linewidth]{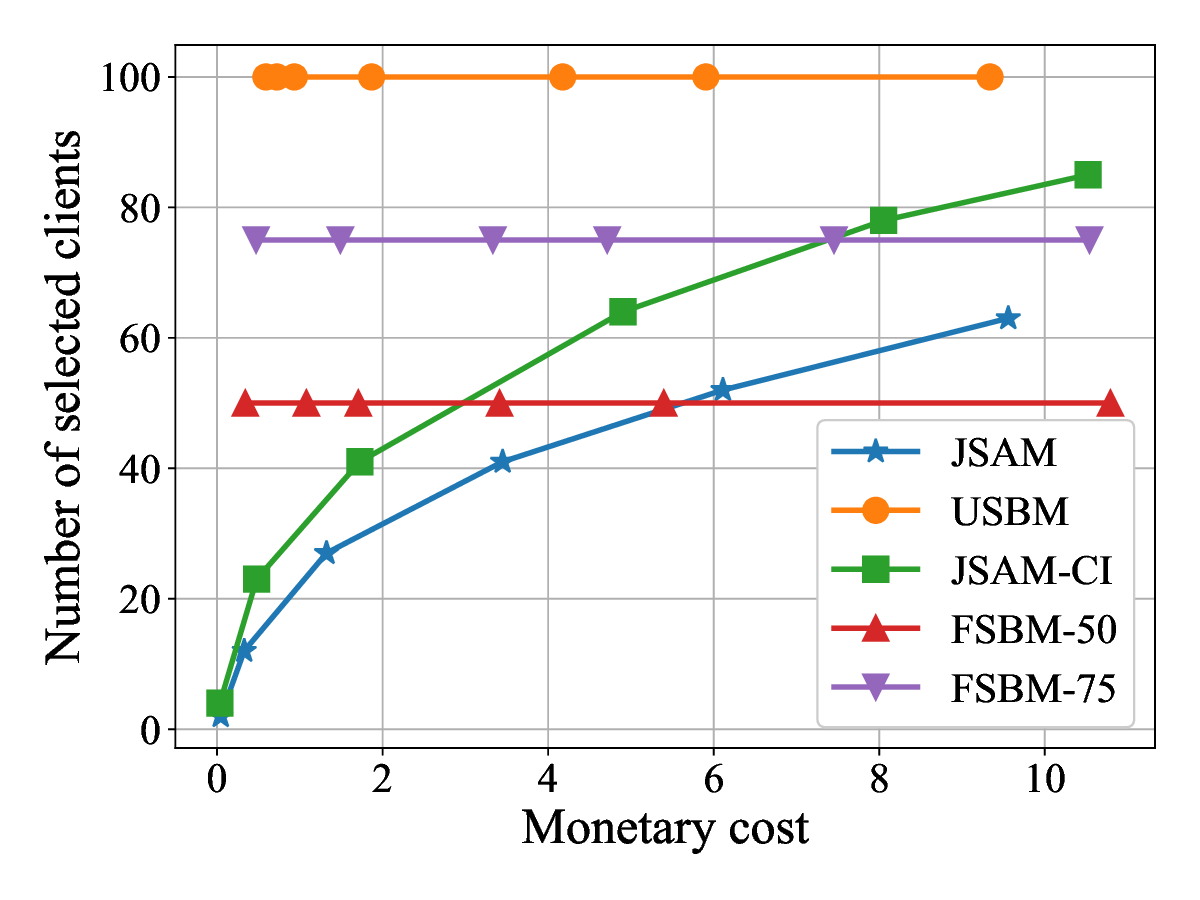}
	\caption{Number of selected clients versus monetary cost on CIFAR-10.}
	\label{fig: num}
\end{figure}
\section{Conclusion}
This paper addresses a fundamental challenge in privacy-preserving federated learning: the intricate balance between model performance, privacy protection, and monetary costs.
We begin by formulating the joint client selection and privacy compensation mechanism (JSAM) as a Bayesian-optimal mechanism design problem, capturing the inherent uncertainties and incentives in client participation.  
Our most significant finding reveals that the seemingly complex 2N-dimensional non-convex optimization problem can be reduced to just three parameters, leading to an elegant yet practical solution.
Leveraging these structural insights, we developed an efficient algorithm capable of solving the JSAM design problem effectively.
This algorithm systematically searches for the optimal threshold and corresponding selection probabilities, ensuring that the mechanism operates optimally under varying monetary constraints.
Our numerical simulations demonstrated the superior performance of JSAM compared to existing baseline mechanisms
For example, our method increases the test accuracy by up to $15\%$ when the server only wants to spend small monetary costs.

Future research will focus on relaxing JSAM's core assumptions and expanding its practical applications. Specifically, we plan to address scenarios with correlated client data distributions, moving beyond the current independent distribution assumption. We will also incorporate fairness constraints to ensure equitable treatment of participants in terms of both selection frequency and compensation rates. These extensions will help bridge the gap between theoretical mechanism design and the practical challenges faced in real-world federated learning deployments.

\bibliographystyle{IEEEtran}
\bibliography{ref}

\newpage
\appendix

\subsection{Mechanism design problem reformulation proof}\label{sec: mechanism design proof}
\subsubsection{Reformulating \ac{ic} and \ac{ir} constraints}\label{subsubsec: reformulate ic ir}
For simplicity in notation, let us first define the expected payment and expected allocation for a client $k$ with cost $c_k$ as:
\begin{align}
	&\pi_k(c_k) = \mathbb{E}_{\boldsymbol{c}_{-k}}\left[\pi_k(c_k,\boldsymbol{c}_{-k})\right],\\
	&\epsilon_{k}(c_k) = \mathbb{E}_{\boldsymbol{c}_{-k}}\left[\epsilon_k(c_k,\boldsymbol{c}_{-k})\right].
\end{align}
First, we prove the following lemma that provides a sufficient and necessary condition for the \ac{ic} constraint.

\begin{lemma}\label{lemma: ic}
	For any client $k \in \mathcal{N}$, the payment function $\pi_k$ satisfies the \ac{ic} constraint if and only if:
	\begin{enumerate}
		\item The expected privacy budget $\epsilon_k(c_k)$ is non-increasing in $c_k$.
		\item The payment function is of the form:
		\begin{align}\label{equ: payment ic}
			\pi_k(c_k) = \pi_k(0) + c_k\epsilon_k(c_k) - \int_{0}^{c_k}\epsilon_k(z)dz.
		\end{align}
	\end{enumerate}
\end{lemma}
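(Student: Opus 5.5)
We will prove this by the standard Myerson-type envelope argument, applied to the interim (expected) quantities $\pi_k(c_k)$ and $\epsilon_k(c_k)$ defined just above. Writing $u_k(c_k) = \pi_k(c_k) - c_k\epsilon_k(c_k)$ for the truthful expected utility, the \ac{ic} constraint reads $u_k(c_k) \ge \pi_k(c_k') - c_k\epsilon_k(c_k')$ for all $c_k, c_k'$. Equivalently,
\begin{align}\nonumber
u_k(c_k) \ge u_k(c_k') - (c_k - c_k')\epsilon_k(c_k').
\end{align}
This says $u_k$ is the pointwise maximum over $c_k'$ of the affine functions $c \mapsto \pi_k(c_k') - c\,\epsilon_k(c_k')$, so we will argue via convexity of $u_k$.

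\textbf{Necessity.} First, write \ac{ic} for the pair $(c_k, c_k')$ and for the swapped pair $(c_k', c_k)$, and add the two inequalities. This gives $(c_k' - c_k)\bigl(\epsilon_k(c_k) - \epsilon_k(c_k')\bigr) \ge 0$, so $\epsilon_k$ is non-increasing, which is condition 1. Next, the same two inequalities sandwich the difference quotient:
\begin{align}\nonumber
-\epsilon_k(c_k) \le \frac{u_k(c_k') - u_k(c_k)}{c_k' - c_k} \le -\epsilon_k(c_k') \quad \text{for } c_k' > c_k .
\end{align}
Hence $u_k$ is convex (as a supremum of affine functions). It is therefore absolutely continuous on compact subintervals, with derivative $u_k'(c_k) = -\epsilon_k(c_k)$ at every point where $\epsilon_k$ is continuous. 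Since $\epsilon_k$ is monotone, this holds almost everywhere. Integrating from $0$ gives $u_k(c_k) = u_k(0) - \int_0^{c_k}\epsilon_k(z)\,dz$. Substituting $u_k(0) = \pi_k(0)$ (the $c_k\epsilon_k$ term vanishes at $0$) yields the payment form (\ref{equ: payment ic}).

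\textbf{Sufficiency.} Assume $\epsilon_k$ is non-increasing and $\pi_k$ has the form (\ref{equ: payment ic}). The gain from truthful reporting over reporting $c_k'$ is
\begin{align}\nonumber
u_k(c_k) - \bigl[\pi_k(c_k') - c_k\epsilon_k(c_k')\bigr] = \int_{c_k}^{c_k'}\epsilon_k(z)\,dz - (c_k' - c_k)\epsilon_k(c_k').
\end{align}
For $c_k' > c_k$ this is $\int_{c_k}^{c_k'}\bigl(\epsilon_k(z) - \epsilon_k(c_k')\bigr)dz \ge 0$, because $\epsilon_k(z) \ge \epsilon_k(c_k')$ on that interval. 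For $c_k' < c_k$ it equals $\int_{c_k'}^{c_k}\bigl(\epsilon_k(c_k') - \epsilon_k(z)\bigr)dz \ge 0$ by the same monotonicity. So \ac{ic} holds.

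\textbf{Main obstacle.} The delicate step is regularity in the necessity direction. We must justify differentiating $u_k$ and recovering it by integration with no continuity assumption on $\epsilon_k$. We will handle this by relying only on the convexity and monotonicity established above. A monotone function is Riemann integrable and has at most countably many discontinuities. A convex function is absolutely continuous on compact intervals and equals the integral of its right derivative, and that right derivative is sandwiched between $-\epsilon_k(c_k)$ and $-\epsilon_k(c_k^+)$. This makes the integral identity hold exactly, not just almost everywhere. A minor technical point is that we need $\epsilon_k$ integrable near $0$, which we will assume, since the support of $c_k$ lies in $\mathbb{R}^+$.
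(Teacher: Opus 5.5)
Your proof is correct and follows the same route as the paper: condition 1 comes from adding the two swapped IC inequalities, the payment form comes from integrating the envelope identity $u_k'(c_k) = -\epsilon_k(c_k)$ from $0$, and sufficiency comes from the direct computation $\int_{c_k}^{c_k'}\epsilon_k(z)\,dz - (c_k'-c_k)\epsilon_k(c_k') \ge 0$. The one difference is that the paper invokes the envelope theorem as though $U_k$ were differentiable, while you justify that step with the difference-quotient sandwich and convexity; moreover $0 \le \epsilon_k \le \epsilon_k(0)$ by monotonicity, so $u_k$ is Lipschitz on bounded intervals including $0$, and your integrability caveat near $0$ holds automatically.
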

\begin{proof}
	Let $U_k(\hat{c}_k; c_k) = \pi_k(\hat{c}_k) - c_k\epsilon_k(\hat{c}_k)$ be the utility of an agent with true cost $c_k$ who reports a cost of $\hat{c}_k$. The \ac{ic} constraint requires that for all $c_k$ and $\hat{c}_k$, $U_k(c_k; c_k) \ge U_k(\hat{c}_k; c_k)$. Let $U_k(c_k) \equiv U_k(c_k; c_k)$ be the equilibrium utility.

\paragraph{Necessity (IC $\implies$ Conditions 1 and 2)}
Assuming the mechanism is \ac{ic}, the agent's utility $U_k(\hat{c}_k; c_k)$ is maximized at $\hat{c}_k=c_k$. By the envelope theorem, we have:
\begin{equation}
\begin{aligned}
	\frac{d U_k(c_k)}{d c_k} =& \left. \frac{\partial U_k(\hat{c}_k; c_k)}{\partial c_k} \right|_{\hat{c}_k=c_k}\\
	=& \left. \frac{\partial}{\partial c_k} \left( \pi_k(\hat{c}_k) - c_k\epsilon_k(\hat{c}_k) \right) \right|_{\hat{c}_k=c_k}\\
	=& -\epsilon_k(c_k).
\end{aligned}
\end{equation}
Integrating both sides from $0$ to $c_k$ yields:
\begin{equation}
\begin{aligned}
	&U_k(c_k) - U_k(0) = -\int_{0}^{c_k}\epsilon_k(z)dz\\
	\implies& U_k(c_k) = U_k(0) - \int_{0}^{c_k}\epsilon_k(z)dz.
\end{aligned}
\end{equation}
By substituting the definitions $U_k(c_k) = \pi_k(c_k) - c_k\epsilon_k(c_k)$ and $U_k(0) = \pi_k(0)$, we obtain the payment rule in \eqref{equ: payment ic}.

To prove that $\epsilon_k(c_k)$ is non-increasing, consider two costs $c_k$ and $c'_k$. The \ac{ic} constraint implies:
\begin{align}\label{equ: ic_proof_1}
	\pi_k(c_k) - c_k\epsilon_k(c_k) &\ge \pi_k(c'_k) - c_k\epsilon_k(c'_k), \\ \label{equ: ic_proof_2}
	\pi_k(c'_k) - c'_k\epsilon_k(c'_k) &\ge \pi_k(c_k) - c'_k\epsilon_k(c_k).
\end{align}
Adding \eqref{equ: ic_proof_1} and \eqref{equ: ic_proof_2} and rearranging gives:
\begin{align}
	c_k\epsilon_k(c_k)-c_k\epsilon_k(c'_k)&\le c'_k\epsilon_k(c_k)-c'_k\epsilon_k(c'_k)\\
	(c_k - c'_k) \left[\epsilon_k(c_k) - \epsilon_k(c'_k)\right] &\le 0.
\end{align}
This inequality implies that if $c_k > c'_k$, we must have $\epsilon_k(c_k) \le \epsilon_k(c'_k)$. Thus, $\epsilon_k(c_k)$ is non-increasing in $c_k$.

\paragraph{Sufficiency (Conditions 1 and 2 $\implies$ IC)}

By the form of the payment, we have 
\begin{equation}\label{equ: ic ne 1}
\begin{aligned}
	&\left[\pi_k(x) - c_k\epsilon_k(x)\right] - \left[\pi_k(c_k) - c_k\epsilon_k(c_k)\right] \\
	=&\,\, x\epsilon_k(x) - \int_{0}^{x}\epsilon_k(z)dz - c_k\epsilon_k(x) + \int_{0}^{c_k}\epsilon_k(z)dz\\
	=&\,\, (x-c_k) \epsilon_k(x) - \int_{c_k}^{x}\epsilon_{k}(z)dz.
\end{aligned}
\end{equation}
Combining the condition that $\epsilon_k(c_k)$ is weakly decreasing with $c_k$ with (\ref{equ: ic ne 1}), we have 
\begin{align}
	(x-c_k)\cdot\epsilon_k(x) - \int_{c_k}^{x}\epsilon_{k}(z)dz \le 0.
\end{align}
Thus, the \ac{ic} constraint holds.
This completes the proof.
\end{proof}

Next, we incorporate the \ac{ir} constraint to derive the final payment structure in Lemma \ref{lemma: ir ic}.

\begin{proof}
Following the payment function in Lemma \ref{lemma: ic}, the \ac{ir} condition implies
\begin{align}\label{equ: ir}
	\pi_k(c_k) - c_k\epsilon_k(c_k) = \pi_k(0)- \int_{0}^{c_k}\epsilon_{k}(z)dz \ge 0,
\end{align}
for all $k \in \mathcal{N}$.

This means the \ac{ir} constraint is most difficult to satisfy for the agent with the highest possible cost. 
To ensure the constraint holds for all $c_k \in \mathbb{R}^+$, we only need to verify it for the worst case, as $c_k \to \infty$:
Thus, we focus on the case that $c_k = \infty$, 
\begin{align}
	\pi_k(0)- \int_{0}^{\infty}\epsilon_{k}(z)dz \ge 0, \forall k \in \mathcal{N}.
\end{align}
Substituting it back to (\ref{equ: payment ic}) yields that the payment structure for any $k \in \mathcal{N}$,
\begin{equation}
\begin{aligned}
	\pi_k(c_k) &= \int_{0}^{\infty}\epsilon_{k}(z)dz + c_k\epsilon_{k}(c_k) - \int_{0}^{c_k}\epsilon_{k}(z)dz + d_k\\
	& = \int_{c_k}^{\infty}\epsilon_{k}(z)dz + c_k\epsilon_{k}(c_k) + d_k,
\end{aligned}
\end{equation}
where $d_k$ is non-negative.
\end{proof}

\subsubsection{Proof of Theorem \ref{theorem: reformulation}}
\begin{proof}
Let $f_k(\cdot)$ and $F_k(\cdot)$ be the probability density and cumulative distribution functions of client $k$'s cost $c_k$, respectively.

From Lemma \ref{lemma: ir ic}, the expected payment for agent $k$ that satisfies \ac{ic} and \ac{ir} is given by:
\begin{equation}\nonumber
\begin{aligned}
	&\mathbb{E}_{c_k}[\pi_k(c_k)] \\
	=&\,\,\mathbb{E}_{c_k}\left[\left( \int_{c_k}^{\infty}\epsilon_{k}(z)dz + c_k\epsilon_{k}(c_k)\right)\right] + d_k\\
	=& \,\,\int_{\boldsymbol{x}_{-k}}\!\int_{0}^{\infty}\!\int_{0}^{z}\! \epsilon_{k}(z,\boldsymbol{x}_{-k})f_{-k}(\boldsymbol{x}_{-k})f_k(x_k)dx_kdzd\boldsymbol{x}_{-k}\!\\
	&\,\,+\! \int_{\boldsymbol{x}_{-k}}\!\int_{0}^{\infty}\! x_k\epsilon_{k}(x_k,\boldsymbol{x}_{-k})f_{-k}(\boldsymbol{x}_{-k})f_k(x_k)dx_kd\boldsymbol{x}_{-k} \!+\! d_k\\
	=&\,\, \int_{\boldsymbol{x}_{-k}}\!\int_{0}^{\infty}\!\epsilon_{k}(z,\boldsymbol{x}_{-k})(F_k(z)-F_k(0))f_{-k}(\boldsymbol{x}_{-k})dzd\boldsymbol{x}_{-k}\!\\
	&\,\,+\! \int_{\boldsymbol{x}_{-k}}\int_{0}^{\infty}\!\! x_k\epsilon_{k}(x_k,\boldsymbol{x}_{-k})f_{-k}(\boldsymbol{x}_{-k})f_k(x_k)dx_kd\boldsymbol{x}_{-k} \!+\! d_k\\
	=&\,\, \int_{\boldsymbol{x}}\!\!\epsilon_{k}(x_k,\boldsymbol{x}_{-k})\!\!\left[x_k\!+\!\frac{F_k(x_k)-F_k(0)}{f_k(x_k)}\right]\!f_k(x_k)f_{-k} (\boldsymbol{x}_{-k})d\boldsymbol{x} \\
	&\,\,+ d_k.
\end{aligned}
\end{equation}

For any distribution of the costs $\boldsymbol{c}$, we have 
\begin{equation}\label{equ: payment2}
\begin{aligned}
	&\mathbb{E}_{\boldsymbol{c}} [\pi_k(c_k)] = d_k\\
	&\!+\!\mathbb{E}_{\boldsymbol{c}}\!\left[\epsilon_{k}(c_k,\boldsymbol{c}_{-k})\!\!\left[c_k\!+\!\frac{F_k(c_k)\!-\!F_k(0)}{f_k(c_k)}\!\right]\!f_k(c_k)f_{-k} (\boldsymbol{c}_{-k})d\boldsymbol{x}\right].
\end{aligned}
\end{equation}
Substituting (\ref{equ: payment2}) into Problem \ref{prob: opt}, we have
\begin{equation}\label{opt: reformulate 1}
	\begin{aligned}\nonumber
		\min_{\begin{array}{c}\scriptstyle 			
			\boldsymbol{\epsilon}(\boldsymbol{c}),\\
			[-4pt]\scriptstyle 	 \boldsymbol{p}^\text{s}(\boldsymbol{c})\end{array}}&\!\!\mathbb{E}_{\boldsymbol{c}}\!\!\left[\!\eta\!\!\left(\!\!\left\| \boldsymbol{p}^\text{s}(\boldsymbol{c})\!-\!\boldsymbol{p}^\text{u}\right\|_1\!\! +\!\! \sqrt{\left\| \boldsymbol{p}^\text{s}(\boldsymbol{c})\!-\!\boldsymbol{p}^\text{u}\right\|_1^2\! +\! Q\! \sum_{k=1}^{N}\! \frac{(p^\text{s}_{k}(\boldsymbol{c}))^2}{\epsilon_k^2(\boldsymbol{c})}}\!\right)\!\right] \\
		&+ \mathbb{E}_{\boldsymbol{c}}\left[\sum_{k=1}^{N}\left(\epsilon_{k}(\boldsymbol{c})\left(c_k+\frac{F_k(c_k)}{f_k(c_k)} \right) + d_k\right)\right]\\
		\text{ }\text{s.t.}& \sum_{k=1}^{N}p^\text{s}_{k}(\boldsymbol{c}) = 1,p^\text{s}_{k}, \epsilon_k,d_k \ge 0, \forall k \in \mathcal{N},\\
		&\epsilon_{k} \text{ is weakly decreasing with }c_k, \forall k \in \mathcal{N}.
	\end{aligned}
\end{equation}

Obviously, at the optimal solution, we always have $d_k = 0$, for all $k \in \mathcal{N}$.
Setting $d_k = 0$ for all $k \in \mathcal{N}$ and considering the point-wise optimization form yields the reformulated optimization problem, \textit{i.e.},
\begin{equation}
	\begin{aligned}
		\min_{\boldsymbol{\epsilon}, \boldsymbol{p}_\text{s}}&\text{ } \eta\left( \left\| \boldsymbol{p}^\text{s}-\boldsymbol{p}^\text{u}\right\|_1 + \sqrt{\left\| \boldsymbol{p}^\text{s}-\boldsymbol{p}^\text{u}\right\|_1^2 + Q \sum_{k=1}^{N} \frac{(p^\text{s}_{k})^2}{\epsilon_k^2}}\right)\\ &+\sum_{k=1}^{N}\epsilon_{k}\left(c_k+\frac{F_k(c_k)}{f_k(c_k)} \right)\\
		\text{ }\text{s.t.}& \sum_{k=1}^{N}p^\text{s}_{k} = 1, p^\text{s}_{k} \ge 0, \forall k\in \mathcal{N},\\
		& \epsilon_{k} \text{ is weakly decreasing with }c_k,\forall k\in \mathcal{N}.
	\end{aligned}
\end{equation}
This completes the proof.
\end{proof}

\subsection{Proof of Theorem \ref{theorem: structure}}
We prove Theorem \ref{theorem: structure} by establishing three lemmas: Lemma \ref{lemma: S+}, Lemma \ref{lemma: S-}, and Lemma \ref{lemma: threshold}.
We prove them respectively.
\begin{lemma}\label{lemma: S+}
	At the optimal solution of Problem \ref{prob: relaxed}, there is $\underline{\text{at most one}}$ client in the set $\mathcal{S}^{+*}$.
\end{lemma}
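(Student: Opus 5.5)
We argue by contradiction with an exchange argument. Suppose an optimal solution $(\boldsymbol{p}^{\textnormal{s}*},\boldsymbol{\epsilon}^*)$ of Problem \ref{prob: relaxed} has two clients $i\neq j$ in $\mathcal{S}^{+*}$, so that $p_i^{\textnormal{s}*},p_j^{\textnormal{s}*}>\frac{1}{N}$. We will build a perturbed feasible point with strictly smaller objective.

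The key observation is that the $L_1$ deviation $\|\boldsymbol{p}^\textnormal{s}-\boldsymbol{p}^\textnormal{u}\|_1$ is locally linear on $\mathcal{S}^{+}$. If we move mass $\delta$ from $j$ to $i$, keeping both probabilities above $\frac{1}{N}$, the deviation does not change. The objective then changes only through the privacy-noise term $Q\left(\frac{p_i^2}{\epsilon_i^2}+\frac{p_j^2}{\epsilon_j^2}\right)$ and the monetary term, and we are free to reallocate $\epsilon_i,\epsilon_j$ as well.

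First fix the two-client budget $b=\epsilon_iv_i+\epsilon_jv_j$ and minimize $\frac{p_i^2}{\epsilon_i^2}+\frac{p_j^2}{\epsilon_j^2}$ over $\epsilon$. This is the computation behind Theorem \ref{theorem: budgets}, and it gives the value $\left((v_ip_i)^{2/3}+(v_jp_j)^{2/3}\right)^3/b^2$.

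Next, minimize $g(p_i,p_j)=(v_ip_i)^{2/3}+(v_jp_j)^{2/3}$ subject to $p_i+p_j=s$ with $s$ fixed. This function is strictly \emph{concave} in $p_i$ along the segment. Its minimum over the admissible interval $p_i,p_j\ge\frac{1}{N}$ is therefore attained only at an endpoint, where one of the two probabilities equals $\frac{1}{N}$. An interior point with both $p_i,p_j>\frac{1}{N}$ cannot be a minimizer: some endpoint strictly beats it, or ties only in a degenerate case that we handle with the tie-breaking below.

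Move toward the better endpoint, and take the endpoint itself so the $L_1$ term stays unchanged. Every other coordinate and the total budget $b$ are held fixed. The noise term inside the square root then weakly decreases, and strictly decreases unless the concave function happens to be equal at the current point and the endpoint. Strict concavity rules out that equality at an interior point. The square root is strictly increasing, so the whole objective strictly decreases. This contradicts optimality, so $|\mathcal{S}^{+*}|\le1$.

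The main obstacle is justifying the reduction step. We must check that, at the optimum, $\boldsymbol{\epsilon}^*$ restricted to $\{i,j\}$ already minimizes the noise term for its budget $b$, or else we simply replace it by that minimizer, which only helps. We must also confirm the $p^{2/3}$ concavity claim carefully, including the case $v_i=v_j$. An alternative route keeps $\boldsymbol{\epsilon}$ fixed and uses that $p\mapsto p^2/\epsilon^2$ is convex, but that pushes toward balancing and fails. This is why the budget must be co-optimized: it turns the per-client contribution into the concave $p^{2/3}$ form.
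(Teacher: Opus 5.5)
Your proof is correct. It shares the paper's skeleton but replaces the key step.

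\textbf{Shared skeleton.} Both arguments take two clients in $\mathcal{S}^{+*}$. Both note that shifting probability between them, while both stay at or above $1/N$, leaves $\|\boldsymbol{p}^\textnormal{s}-\boldsymbol{p}^\textnormal{u}\|_1$ unchanged. Both then freeze the pair budget $b=v_i\epsilon_i+v_j\epsilon_j$ (the paper's $\tilde B$), so that only $p_i^2/\epsilon_i^2+p_j^2/\epsilon_j^2$ matters.

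\textbf{The paper's key step.} It writes joint KKT conditions in $(p,\epsilon)$ and finds the interior stationary point $p_k\propto v_k^2$, $\epsilon_k\propto v_k$. It evaluates the objective there and at the boundary point where the higher-cost client sits at $1/N$. It then shows the boundary value is smaller by expanding both values and bounding the cross terms $\Delta_1,\Delta_2$ through an auxiliary two-variable minimization.

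\textbf{Your key step.} You eliminate $\epsilon$ first and observe that $(v_ip_i)^{2/3}+(v_j(s-p_i))^{2/3}$ is strictly concave in $p_i$. This shows that the paper's interior KKT point is actually the \emph{maximizer} of the reduced objective on the segment, so every endpoint beats it, and the heavy inequality becomes unnecessary.

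\textbf{What each route buys.} Yours is shorter and needs no KKT analysis of the optimum. It also covers $v_i=v_j$, whereas the paper assumes $v_1<v_2$. Comparing the two endpoints by a rearrangement step immediately shows that the $1/N$ should go to the higher-virtual-cost client, which is the content of the paper's subsequent ordering lemma. The paper's route, in exchange, exhibits the explicit stationary allocation.

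\textbf{Your stated worries.} Two of them dissolve:
\begin{itemize}
\item The pair-level elimination is H\"older's inequality $\sum_k (v_kp_k)^{2/3}\le\bigl(\sum_k p_k^2/\epsilon_k^2\bigr)^{1/3}\bigl(\sum_k v_k\epsilon_k\bigr)^{2/3}$. Hence the current $\boldsymbol{\epsilon}^*$ satisfies $p_i^{*2}/\epsilon_i^{*2}+p_j^{*2}/\epsilon_j^{*2}\ge g(p_i^*,p_j^*)^3/b^2$ whether or not it is pair-optimal.
\item Strict concavity of $x\mapsto x^{2/3}$ excludes ties at interior points for all $v_i,v_j>0$, so you can drop the ``degenerate case'' hedge.
\end{itemize}
Do state explicitly that $\epsilon_i^*,\epsilon_j^*>0$ (otherwise the objective is infinite) and that $v_i,v_j>0$. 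Together these give $b>0$ and make the new budgets feasible.
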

\begin{proof}
	We prove Lemma \ref{lemma: S+} by contradiction. Suppose for the sake of contradiction that the optimal set $\mathcal{S}^{+*}$ contains more than one client. Let clients 1 and 2 be two distinct clients in this set, with their allocations in the optimal solution being $(p^*_1, \epsilon^*_1)$ and $(p^*_2, \epsilon^*_2)$, respectively. Without loss of generality, assume their virtual costs satisfy $v_1 < v_2$.
	
	Now, consider reallocating the total probability $P = p^*_1 + p^*_2$ between only these two clients, while keeping all other allocations fixed. For any new probabilities $p_1, p_2$ such that $p_1 + p_2 = P$ and $p_1, p_2 \ge 1/N$, the $L_1$-norm term $\|\boldsymbol{p}^\textnormal{s} - \boldsymbol{p}^\textnormal{u}\|_1$ of the objective function remains constant. Therefore, for the original allocation to be optimal, it must also be an optimal solution to the subproblem of minimizing the remaining objective terms by allocating the fixed resources between just clients 1 and 2. This subproblem is formulated in \eqref{opt: 2-client +} as follows:
	\begin{equation}\label{opt: 2-client +}
		\begin{aligned}
			\min& \frac{p_1^2}{\epsilon_1^2} + \frac{p_2^2}{\epsilon_2^2},\\
			\textnormal{Var.}& \text{ }\epsilon_1, p_1, \epsilon_2, p_2,\\
			\textnormal{s.t.}& \text{ } p_1 + p_2 = p^*_1 + p^*_2 = \tilde{P},\\
			& v_1\epsilon_1 + v_2\epsilon_2 = B_1^* + B_2^* = \tilde{B},\\
			&p_1,p_2 \ge 1/N,\\
			&\epsilon_1,\epsilon_2 > 0.
		\end{aligned}
	\end{equation}
	
	We characterize the optimal solution of (\ref{opt: 2-client +}) using the KKT conditions.
	The Lagrangian is given by:
	\begin{equation}
		\begin{aligned}
			L_\ell =& \sum_{k=1}^{2} \frac{p_k^2}{\epsilon_k^2} + \lambda_1 (p_1 + p_2 - \tilde{P}) + \lambda_2 (v_1\epsilon_1 + v_2\epsilon_2 - \tilde{B})\\
			&+ \mu^p_1 (1/N-p_1) + \mu^p_2(1/N-p_2) - \mu^e_1 \epsilon_1 - \mu^e_2 \epsilon_2.
		\end{aligned}
	\end{equation}
	The KKT conditions of Problem (\ref{opt: 2-client +}) are
	
	\begin{align}
		\label{global kkt: 1}
		&\frac{\partial L_\ell}{\partial p_k} = \frac{2p_k}{\epsilon_k^2} + \lambda_1 - \mu_k^p=0, \forall k \in \{1,2\},\\	\label{global kkt: 2}
		& \frac{\partial L_\ell}{\partial \epsilon_k} = \frac{-2p_k^2}{\epsilon_k^3} + \lambda_2v_k=0,\forall k \in \{1,2\},\\	\label{global kkt: 3}
		& \mu^p_1 (1/N-p_1) = 0,\\	\label{global kkt: 4}
		& \mu^p_2 (1/N-p_2) = 0,\\
		\label{global kkt: 5}
		& p_1 + p_2  = \tilde{P},\\
		\label{global kkt: 6}
		& v_1\epsilon_1 + v_2\epsilon_2 = \tilde{B},\\
		&\mu^p_1,\mu^p_2\ge 0.
	\end{align}
We derive a contradiction by analyzing the following two cases.
	\paragraph{Case 1: $p_1, p_2 > 1/N$}
	In this scenario, the inequality constraints on the probability allocations are inactive. 
	The KKT conditions therefore imply that their corresponding dual variables are zero, i.e., $\mu^p_1 = \mu^p_2 = 0$. 
	Substituting this result into the first-order optimality conditions from \eqref{global kkt: 1} and \eqref{global kkt: 2} yields the following relations for $k \in \{1, 2\}$:
	\begin{align}\label{global kkt: sol1}
		&\frac{p_k}{\epsilon_k^2} = -\lambda_1/2 = c_3,\\\label{global kkt: sol2}
		&\frac{p_k^2}{\epsilon_k^3} = \frac{\lambda_2}{2}v_k = c_4v_k,
	\end{align}
	for some constants $c_3, c_4$.
	Substituting (\ref{global kkt: sol1}) to (\ref{global kkt: sol2}), we obtain 
	\begin{align}
		\frac{p_k^2}{\epsilon_k^3} =&\; c_3\frac{p_k}{\epsilon_k} = c_4v_k,\\\label{global kkt: sol3}
		p_k =&\; \frac{c_4}{c_3}v_k\epsilon_k,
	\end{align}
	for both $k\in\{1,2\}$.
	Using the constraints $p_1+p_2 = \tilde{P}$ and $v_1\epsilon_1 + v_2\epsilon_2 = \tilde{B}$, we have
	\begin{align}\label{global kkt: sol4}
		\frac{c_4}{c_3} = \frac{\tilde{P}}{\tilde{B}}.
	\end{align}
	Substituting (\ref{global kkt: sol3}) and (\ref{global kkt: sol4}) into (\ref{global kkt: sol1}) and (\ref{global kkt: sol2}), and solving the equations (\ref{global kkt: sol1}) and (\ref{global kkt: sol2}) yields
	\begin{align}
		c_3 =& \sum_{k} \frac{v_k^2\tilde{P}}{\tilde{B}^2},	c_4 = \sum_{k} \frac{v_k^2\tilde{P}^2}{\tilde{B}^3}.
	\end{align}
	Thus, the optimal solution of Problem (\ref{opt: 2-client +}) in case 1 is
	\begin{align}
		&p_k = \frac{v_k^2}{\sum_{k}v_k^2}\tilde{P}, \forall k \in\{1,2\},\\
		&\epsilon_k = \frac{v_k}{\sum_{k}v_k^2}\tilde{B}, \forall k \in\{1,2\}.
	\end{align}
	The corresponding objective of Problem (\ref{opt: 2-client +}) in case 1 is 
	\begin{align}
		f^\textnormal{case1} = c_4\tilde{B} = \sum_{k} \frac{v_k^2\tilde{P}^2}{\tilde{B}^3}.
	\end{align}
	
	\paragraph{Case 2: $p_1 = p_{1,r} > 1/N$ and $p_2 = p_{2,r} = 1/N$}
	By the KKT conditions (\ref{global kkt: 2}) and (\ref{global kkt: 5}),  We have
	\begin{align}
		&\frac{p_k^2}{\epsilon_k^3} = \frac{\lambda_2}{2}v_k = c_4'v_k,\\
		& p_1 = p_{r,1} = \tilde{P} - 1/N,\\
		& p_2 = p_{r,2} = 1/N,
	\end{align}
	for a constant $c_4'$.
	Solving the above equations yields
	\begin{align}
		c_4' =& \left(\frac{\sum_{k}(p_{r,k}^2v_k^2)^{1/3}}{\tilde{B}}\right)^3.
	\end{align}
	Thus, the corresponding objective of Problem (\ref{opt: 2-client +}) in case 2 is
	\begin{align}
		f^\textnormal{case2} = c_4'\tilde{B} = \frac{(\sum_{k}(p_{r,k}^2v_k^2)^{1/3})^3}{\tilde{B}^2}.
	\end{align}
	We then the compare the objectives $f^\textnormal{case1}$ and $f^\textnormal{case2}$ to show that reallocation (case 2) can achieve better results than case 1.
	The objectives of Problem (\ref{opt: 2-client +}) in cases 1 and 2 are
	\begin{equation}
		\begin{aligned}
			f^\textnormal{case1}\tilde{B}^2 =&\; (p_1+p_{2})^2(v_1^2 + v_2^2)\\
			=&\; (p_{r,1}+p_{r,2})^2(v_1^2 + v_2^2)\\
			=& \;(p_{r,1}^2 + 2p_{r,1}p_{r,2} + p_{r,2}^2)(v_1^2 + v_2^2)\\
			=&\;p_{r,1}^2v_1^2 + p_{r,2}^2v_2^2\\
			&\;+  p_{r,1}p_{r,2}v_1v_2\underbrace{\left(\oldfrac{p_{r,1}v_2}{p_{r,2}v_1}+2\oldfrac{v_1}{v_2} + 2\oldfrac{v_2}{v_1} + \oldfrac{p_{r,2}v_1}{p_{r,1}v_2} \right)}_{\Delta_1},
		\end{aligned}
	\end{equation}
and
	\begin{equation}
		\begin{aligned}
			&f^\textnormal{case2}\tilde{B}^2\\
			=&\; p_{r,2}^2v_2^2 + p_{r,1}^2v_1^2 + 3p_{r,2}^{4/3}v_2^{4/3}p_{r,1}^{2/3}v_1^{2/3}+ 3p_{r,2}^{2/3}v_2^{2/3}p_{r,1}^{1/3}v_1^{1/3}\\
			=&\;p_{r,1}^2v_1^2 + p_{r,2}^2v_2^2 + p_{r,1}p_{r,2}v_1v_2\underbrace{\left(3 \oldfrac{p_{r,2}^{1/3}v_2^{1/3}}{p_{r,1}^{1/3}v_1^{1/3}} + 3 \oldfrac{p_{r,1}^{1/3}v_1^{1/3}}{p_{r,2}^{1/3}v_2^{1/3}}\right)}_{\Delta_2}.
		\end{aligned}
	\end{equation}
	Because the terms except $\Delta_1$ and $\Delta_2$ are the same in $f^\textnormal{case1}\tilde{B}^2$ and $f^\textnormal{case2}\tilde{B}^2$, the key is to compare $\Delta_1$ and $\Delta_2$.
	
	For $\Delta_2$, by the fact that $f(x) = x + 1/x $ increases with $x(x\ge1)$, we have 
	\begin{equation}
		\begin{aligned}
			&\Delta_2 = 3\left( \oldfrac{p_{r,2}^{1/3}v_2^{1/3}}{p_{r,1}^{1/3}v_1^{1/3}} + \oldfrac{p_{r,1}^{1/3}v_1^{1/3}}{p_{r,2}^{1/3}v_2^{1/3}}\right)= 3f\left(\max\{\oldfrac{p_{r,2}v_2}{p_{r,1}v_1}\}^{1/3}\right)
			\\\le&\;
			2\sqrt{2}f(\max\{\oldfrac{p_{r,2}v_2}{p_{r,1}v_1}\}^{1/2}) +\max(3f(x^{1/3})- 2\sqrt{2}f(x^{1/2}))\\
			=&\; 2\sqrt{2}\left( \oldfrac{p_{r,2}^{1/2}v_2^{1/2}}{p_{r,1}^{1/2}v_1^{1/2}} +  \oldfrac{p_{r,1}^{1/2}v_1^{1/2}}{p_{r,2}^{1/2}v_2^{1/2}}\right) + 6 - 4\sqrt{2}.
		\end{aligned}
	\end{equation}
	
	For $\Delta_1$, note that $\frac{p_{r,1}}{p_{r,2}, \frac{v_{2}}{v_{1}}} > 1$.
	Then by the fact that $f(x) = x + 1/x$ is increasing with $x(x\ge1)$, we have
	\begin{equation}
		\begin{aligned}
			\Delta_1 =&\; \left(\oldfrac{p_{r,1}v_2}{p_{r,2}v_1}+2\oldfrac{v_1}{v_2} + 2\oldfrac{v_2}{v_1} + \oldfrac{p_{r,2}v_1}{p_{r,1}v_2} \right)\\
			>&\; \left(\oldfrac{p_{r,1}}{p_{r,2}}+2\oldfrac{v_1}{v_2} + 2\oldfrac{v_2}{v_1} + \oldfrac{p_{r,2}}{p_{r,1}} \right).
		\end{aligned}
	\end{equation}
	Therefore, the difference between $\Delta_1$ and $\Delta_2$ is given by
	\begin{equation}\label{equ:delta}
		\begin{aligned}
			&\Delta_1 - \Delta_2\\
			\ge& \left(\oldfrac{p_{r,1}v_2}{p_{r,2}v_1}+2\oldfrac{v_1}{v_2} + 2\oldfrac{v_2}{v_1} + \oldfrac{p_{r,2}v_1}{p_{r,1}v_2} \right)\\
			&- 2\sqrt{2}\left( \oldfrac{p_{r,2}^{1/2}v_2^{1/2}}{p_{r,1}^{1/2}v_1^{1/2}} +  \oldfrac{p_{r,1}^{1/2}v_1^{1/2}}{p_{r,2}^{1/2}v_2^{1/2}}\right) - (6 - 4\sqrt{2})\\
			>& \left(\oldfrac{p_{r,1}}{p_{r,2}}+2\oldfrac{v_1}{v_2} + 2\oldfrac{v_2}{v_1} + \oldfrac{p_{r,2}}{p_{r,1}} \right)\\
			&- 2\sqrt{2}\left( \oldfrac{p_{r,2}^{1/2}v_2^{1/2}}{p_{r,1}^{1/2}v_1^{1/2}} +  \oldfrac{p_{r,1}^{1/2}v_1^{1/2}}{p_{r,2}^{1/2}v_2^{1/2}}\right) - (6 - 4\sqrt{2})\\
			=& \underbrace{\left(\sqrt{\oldfrac{p_{r,1}}{p_{r,2}}} - \sqrt{2}\sqrt{\frac{v_1}{v_2}}\right)^2\! \!+\! \left(\sqrt{\oldfrac{p_{r,2}}{p_{r,1}}} \!-\! \sqrt{2}\sqrt{\frac{v_2}{v_1}}\right)^2}_{A} \!- (6 \!-\! 4\sqrt{2}).
		\end{aligned}
	\end{equation}
	Then, we bound the term $A$.
	For simplicity, consider a general form of $A$ as
	\begin{align}
		f(x,y) = (x-\sqrt{2}y)^2 +\left(\frac{1}{x}-\frac{\sqrt{2}}{y}\right)^2,
	\end{align}
	and the following optimization problem,
	\begin{align}
		\min_{x,y} f(x,y), \text{ s.t. } x>0, y>0.
	\end{align} 
	
	By the first-order condition, we have
	\begin{align}\nonumber
		&\frac{\partial f(x,y)}{\partial x} \!=\! 2(x-\sqrt{2}y) + 2\left(\frac{1}{x}-\frac{\sqrt{2}}{y}\right)\cdot\left(-\frac{1}{x^2}\right)\!=\!0.\\\nonumber
		&\frac{\partial f(x,y)}{\partial y} \!=\! -2\sqrt{2}(x-\sqrt{2}y) \!-\! 2\sqrt{2}\left(\frac{1}{x}-\frac{\sqrt{2}}{y}\right)\!\cdot\!\left(-\frac{1}{y^2}\right)\!=\!0.
	\end{align}
	The solution is $x=y=1$.
	Then, we verify the second order condition.
	\begin{align}
		&\frac{\partial^2 f(x,y)}{\partial x^2} = 2 + \frac{12}{x^4},\\
		&\frac{\partial^2 f(x,y)}{\partial x \partial y} = -2\sqrt{2} - \frac{2\sqrt{2}}{x^2y^2},\\
		&\frac{\partial^2 f(x,y)}{\partial y^2} = 4 + 2\cdot\frac{12}{y^4},\\
		&\frac{\partial^2 f(x,y)}{\partial y\partial x} = -2\sqrt{2}-\frac{2\sqrt{2}}{x^2y^2}.	
	\end{align}
	Because the Hessian of $f(x,y)$ at $(1,1)$ is positive definite, $(1,1)$ is the solution and $f(1,1) = 2\cdot(1-\sqrt{2})^2 = 6-4\sqrt{2}$.
	So, we have
	\begin{align}\label{equ: A}
		A \ge 6 - 4\sqrt{2}.
	\end{align}
	Substituting (\ref{equ: A}) back to (\ref{equ:delta}), we have
	\begin{align}
		\Delta_1 - \Delta_2 > 0.
	\end{align}
	Thus, we have $f^\textnormal{case1} > f^\textnormal{case2}$, which contradicts with the assumption that $|\mathcal{S}^{+*}|>1$.
	This completes the proof.
\end{proof}
\begin{lemma}\label{lemma: S-}
	At the optimal solution of Problem \ref{prob: relaxed}, there is at most one clients in the set $\mathcal{S}^{-*}$.
\end{lemma}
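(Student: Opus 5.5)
We argue by contradiction, mirroring the proof of Lemma \ref{lemma: S+}. Suppose the optimal solution of Problem \ref{prob: relaxed} has two distinct clients in $\mathcal{S}^{-*}$, say clients $i$ and $j$ with $0<p_i^*,p_j^*<1/N$ and $v_i\le v_j$, and let $B_i^*=v_i\epsilon_i^*$, $B_j^*=v_j\epsilon_j^*$. We keep all other allocations fixed, together with $\tilde P=p_i^*+p_j^*$ and $\tilde B=B_i^*+B_j^*$. Any reallocation with $p_i+p_j=\tilde P$ and both probabilities in $[0,1/N]$ leaves $\sum_k|p_k-1/N|$ unchanged, since both deviations $1/N-p$ are nonnegative and their sum is fixed. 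It also leaves the monetary term $\sum_k \epsilon_k v_k$ unchanged. Hence the pair $(p_i^*,\epsilon_i^*),(p_j^*,\epsilon_j^*)$ must minimize $\frac{p_i^2}{\epsilon_i^2}+\frac{p_j^2}{\epsilon_j^2}$ subject to $p_i+p_j=\tilde P$, $v_i\epsilon_i+v_j\epsilon_j=\tilde B$ and $0\le p_i,p_j\le 1/N$. One subtlety is the summation $\sum_{k:p_k\neq0}$: if some $p_k=0$, we drop that term and set $\epsilon_k=0$, which only lowers the cost. So the reduced objective is consistent on the boundary.

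Next we solve the reduced problem in closed form. For fixed $p$, minimizing over $\epsilon$ with the budget constraint gives, exactly as in Case 2 of Lemma \ref{lemma: S+}, the value
\[
g(p_i,p_j)=\frac{\left((v_ip_i)^{2/3}+(v_jp_j)^{2/3}\right)^3}{\tilde B^2}.
\]
The key observation is that $x\mapsto x^{2/3}$ is strictly concave. Therefore $\phi(p_i)=(v_ip_i)^{2/3}+(v_j(\tilde P-p_i))^{2/3}$ is strictly concave on the feasible interval $[\max(0,\tilde P-1/N),\min(\tilde P,1/N)]$. Since $u\mapsto u^3$ is increasing, $g$ is a strictly increasing transform of a strictly concave function of one variable. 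Its minimum over the interval is therefore attained only at an endpoint, and any interior point is strictly worse than the better endpoint.

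Both $p_i^*$ and $p_j^*$ lie strictly inside $(0,1/N)$, so $p_i^*$ is an interior point of the feasible interval. Moving to the better endpoint strictly decreases the DP-noise term, leaving everything else fixed. The square root in the objective is strictly increasing in that term and $\eta>0$, so the full objective of Problem \ref{prob: relaxed} strictly decreases. This contradicts optimality, so $|\mathcal{S}^{-*}|\le 1$. At an endpoint one probability becomes $0$ or $1/N$, so the pair leaves $\mathcal{S}^-$, which is consistent with the conclusion.

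The main obstacle is making the reduction to the two-client subproblem rigorous. We must check that the $L_1$ term is genuinely invariant over the whole feasible interval; this holds because both probabilities stay at most $1/N$. We must also handle the endpoint $p=0$, where the term disappears from the sum and $\epsilon$ should be set to $0$. Once these points are settled, the concavity argument is direct and avoids the delicate inequality comparisons used in Lemma \ref{lemma: S+}.
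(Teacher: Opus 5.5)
Your proof is correct, and it takes a different and cleaner route than the paper.

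The paper gives no separate argument for this lemma. It only says the proof mirrors Lemma~\ref{lemma: S+}: set up the two-client subproblem, write its KKT conditions, compute the interior stationary allocation ($p_k\propto v_k^2$, $\epsilon_k\propto v_k$), and compare its value against a boundary allocation through the explicit inequality $\Delta_1>\Delta_2$.

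You instead eliminate $\boldsymbol{\epsilon}$ first, which gives the value $\bigl((v_ip_i)^{2/3}+(v_jp_j)^{2/3}\bigr)^3/\tilde B^2$. You then use strict concavity of $x\mapsto x^{2/3}$ along the reallocation segment. This buys several things:
\begin{itemize}
\item It shows that every interior allocation, not just a KKT point, is strictly dominated by an endpoint.
\item It handles the binding boundary whether it is $p=0$ (when $\tilde P\le 1/N$) or $p=1/N$ (when $\tilde P>1/N$) in one stroke. A KKT case analysis would have to treat these separately, and both can occur in $\mathcal{S}^-$, unlike in $\mathcal{S}^+$.
\item It makes transparent that the interior stationary point in the paper's template is actually the maximizer of the noise term on the segment. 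By H\"older, $\sum_k (v_kp_k)^{2/3}\le\bigl(\sum_k v_k^2\bigr)^{1/3}\bigl(\sum_k p_k\bigr)^{2/3}$, with equality iff $p_k\propto v_k^2$. So the delicate $\Delta_1$ versus $\Delta_2$ comparison is unnecessary.
\item The same argument, run on the segment $[1/N,\tilde P-1/N]$, also proves Lemma~\ref{lemma: S+}.
\end{itemize}
In exchange, the KKT route produces explicit candidate allocations. Your argument does not need them, and deciding which endpoint wins is rightly left to Lemma~\ref{lemma: threshold}.

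Two small details are worth stating explicitly. First, $\epsilon_i^*,\epsilon_j^*>0$, since otherwise the objective is infinite, so $\tilde B>0$. Second, $v_i,v_j>0$ is needed for strict concavity.
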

The proof of Lemma \ref{lemma: S-} follows a similar structure to that of Lemma \ref{lemma: S+}. 
By assuming multiple clients in $\mathcal{S}^{-*}$ and deriving a contradiction through optimization sub-problems and KKT conditions, we can establish Lemma \ref{lemma: S-}.
\begin{lemma}\label{lemma: threshold}
	At the optimal solution of Problem \ref{prob: relaxed}, there exists thresholds $h_1 < h_2 < h_3$ such that
	\begin{align}
		k \in
		\begin{cases}
			\mathcal{S}^{+*}, & \text{if } v_k \le h_1, \\
			\mathcal{S}^{\textnormal{u}*}, &
			\text{if } h_1 < v_k \le h_2,\\
			\mathcal{S}^{-*}, &
			\text{if } h_2 < v_k \le h_3,\\
			\mathcal{S}^{0*}, &
			\text{if } h_3 < v_k.
		\end{cases}
	\end{align}
\end{lemma}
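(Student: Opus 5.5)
The statement is a monotonicity claim: at an optimum of Problem \ref{prob: relaxed}, the selection probability $p_k^{\textnormal{s}*}$ is (weakly) decreasing in the virtual cost $v_k$. The four regimes $\mathcal{S}^{+*},\mathcal{S}^{\textnormal{u}*},\mathcal{S}^{-*},\mathcal{S}^{0*}$ are ordered by $p$, so this monotonicity yields the thresholds directly. I would prove it by a pairwise exchange argument: if $v_i < v_j$ but $p_i^* < p_j^*$, swapping the two clients' allocations $(p,\epsilon)$ does not increase the objective. With a slightly more careful argument, it strictly decreases it whenever the $\epsilon$'s differ.

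The key steps are as follows.

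First, fix an optimal pair $(\boldsymbol{p}^{\textnormal{s}*},\boldsymbol{\epsilon}^*)$ and two clients $i,j$ with $v_i<v_j$ and $p_i^*<p_j^*$. Consider the swapped solution $\tilde p_i=p_j^*$, $\tilde p_j=p_i^*$, $\tilde\epsilon_i=\epsilon_j^*$, $\tilde\epsilon_j=\epsilon_i^*$, with all other coordinates unchanged.

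Second, check how each term changes. Since $\boldsymbol{p}^\textnormal{u}$ is uniform ($1/N$ for every client), the term $\|\boldsymbol{p}^\textnormal{s}-\boldsymbol{p}^\textnormal{u}\|_1$ is invariant under the swap. The constraint $\sum_k p_k=1$ is preserved. The noise sum $\sum p_k^2/\epsilon_k^2$ is also invariant, because the pairs $(p,\epsilon)$ are moved together. Only the monetary term changes, by
\[
v_i\epsilon_j^*+v_j\epsilon_i^*-v_i\epsilon_i^*-v_j\epsilon_j^*=(v_j-v_i)(\epsilon_i^*-\epsilon_j^*).
\]

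Third, show $\epsilon_i^*\ge\epsilon_j^*$ is impossible unless the swap is neutral, and in fact show $\epsilon_j^*>\epsilon_i^*$. For this I would use the optimality structure of Theorem \ref{theorem: budgets}, or equivalently the KKT condition $p_k^2/\epsilon_k^3=\lambda v_k$ obtained for fixed $\boldsymbol{p}$ (as in the proof of Lemma \ref{lemma: S+}). This gives $\epsilon_k\propto p_k^{2/3}v_k^{-1/3}$, so $p_i^*<p_j^*$ and $v_i<v_j$ do not immediately compare the $\epsilon$'s. I therefore expect to need a second step in which the $\epsilon$'s are re-optimized after the swap rather than relying on the raw swap.

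The cleaner route is to compare optimal values with $\boldsymbol{p}$ fixed, using Proposition \ref{proposition: obj}. For fixed $\boldsymbol{p}$ and budget $B$, the DP term equals $\bigl(\sum_k (v_kp_k)^{2/3}\bigr)^3/B^2$. Swapping $p_i$ and $p_j$ changes $(v_ip_i)^{2/3}+(v_jp_j)^{2/3}$ into $(v_ip_j)^{2/3}+(v_jp_i)^{2/3}$. By the rearrangement inequality, pairing the larger $p$ with the smaller $v$ strictly decreases the sum when $v_i<v_j$ and $p_i<p_j$. All other terms are unchanged, since the $L_1$ deviation is symmetric. This contradicts optimality, so $v_i<v_j\Rightarrow p_i^*\ge p_j^*$.

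Finally, define $h_1,h_2,h_3$ as the largest virtual costs of clients in $\mathcal{S}^{+*},\mathcal{S}^{\textnormal{u}*},\mathcal{S}^{-*}$ respectively. Monotonicity then places the clients of each set in consecutive blocks of the $v$-ordering. Ties in $v$ are handled by reindexing, and empty sets by choosing thresholds between neighbouring values.

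The main obstacle is the third step: it requires a careful argument that optimizing over $\boldsymbol{\epsilon}$ yields the closed form of Proposition \ref{proposition: obj} for arbitrary $\boldsymbol{p}$, not just threshold-structured ones. Clients with $p_k=0$ need separate care, but they contribute zero to the rearrangement sum, so the inequality still applies.
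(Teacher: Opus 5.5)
Your final argument (swap $p_i,p_j$, re-optimize $\boldsymbol{\epsilon}$, and apply the rearrangement inequality to $\sum_k (v_kp_k)^{2/3}$) is correct and is essentially the paper's proof. The paper establishes the three orderings ($\mathcal{S}^{+*}$ vs.\ $\mathcal{S}^{\textnormal{u}*}$, $\mathcal{S}^{\textnormal{u}*}$ vs.\ $\mathcal{S}^{-*}$, $\mathcal{S}^{-*}$ vs.\ $\mathcal{S}^{0*}$) separately by swapping the two offending clients' probabilities, re-optimizing their budgets under a fixed combined spend $\tilde B$, and comparing $\bigl((p_iv_i)^{2/3}+(p_jv_j)^{2/3}\bigr)^3/\tilde B^2$, i.e.\ your rearrangement step done case by case. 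Your single swap lemma has the small advantage of not needing Lemmas \ref{lemma: S+} and \ref{lemma: S-}, and the obstacle you flag is not real if you keep $B=\sum_k v_k\epsilon_k^*$ fixed, since the H\"older closed form holds for arbitrary $\boldsymbol{p}$; just drop the raw $(p,\epsilon)$ swap from your opening paragraph, whose cost change $(v_j-v_i)(\epsilon_i^*-\epsilon_j^*)$ has no definite sign.
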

\begin{proof}
	We prove this lemma by establishing the ordering of the virtual costs across the optimal sets. Specifically, we show that the following three inequalities hold:
	\begin{itemize}
		\item For any client $i \in \mathcal{S}^{+*}$ and $j \in \mathcal{S}^{\textnormal{u}*}$, their virtual costs satisfy $v_i \le v_j$.
		\item For any client $j \in \mathcal{S}^{\textnormal{u}*}$ and $k \in \mathcal{S}^{-*}$, their virtual costs satisfy $v_j \le v_k$.
		\item For any client $k \in \mathcal{S}^{-*}$ and $l \in \mathcal{S}^{0*}$, their virtual costs satisfy $v_k \le v_l$.
	\end{itemize}
	Each of these statements will be proven individually by contradiction. 
	Throughout the proof, recall that the sets $\mathcal{S}^{+*}$ and $\mathcal{S}^{-*}$ contain at most one client.
	\subsubsection{Proof of $v_i \le v_j$ for all $i \in \mathcal{S}^{+*}$ and $j \in \mathcal{S}^{\textnormal{u}*}$}
	We proceed by contradiction. Assume there exists a client $i \in \mathcal{S}^{+*}$ and a client $j \in \mathcal{S}^{\textnormal{u}*}$ who violate the condition, such that their virtual costs satisfy $v_i > v_j$.
	
	To show this leads to a contradiction, we analyze a reallocation of resources between only these two clients, holding the allocations for all other clients fixed. 
	
	By definition of the sets $\mathcal{S}^{+*}$ and $\mathcal{S}^{\textnormal{u}*}$, we have $p_i > 1/N$ and $p_j = 1/N$. Now, consider any reallocation of their combined probability, $P = p_i + p_j$, that keeps the new probabilities above the $1/N$ threshold. For any such reallocation, the $L_1$-norm term of the objective function, $\|\boldsymbol{p}^\textnormal{s} - \boldsymbol{p}^\textnormal{u}\|_1$, remains constant. This allows us to find a contradiction by analyzing the subproblem concerned only with the remaining objective terms.
	
	Without the $L_1$-norm term, we only to consider the following minimization problem,
	\begin{equation}
		\begin{aligned}
			\min_{p_i, p_j, \epsilon_i,\epsilon_j}&  \frac{p_i^2}{\epsilon_i^2} + \frac{p_j^2}{\epsilon_j^2},\\
			\text{s.t. }& p_i+p_j = \tilde{P},\\
			&v_i\epsilon_i + v_j\epsilon_j = \tilde{B}.
		\end{aligned}
	\end{equation}
	
	As proved in Lemma \ref{lemma: S+}, we only need to consider the cases that one client has probability $1/N$.
	\begin{itemize}
	\item Case 1 (assumption): $p_i = 1/N$.
	In Case 1, the objective value is
	\begin{align}
		f^\text{case1} =  \frac{\left(\left(\oldfrac{1}{N^2}v_1^2\right)^{1/3}+\left((\tilde{P}-\oldfrac{1}{N})^2v_2^2\right)^{1/3}\right)^3}{\tilde{B}^2}.
	\end{align}

	\item Case 2: $p_j = 1/N$.
	In Case 2, objective value is 
	\begin{align}
		f^\text{case2} =  \frac{\left(\left(\oldfrac{1}{N^2}v_2^2\right)^{1/3}+\left((\tilde{P}-\oldfrac{1}{N})^2v_1^2\right)^{1/3}\right)^3}{\tilde{B}^2}.
	\end{align}
	\end{itemize}
	Because $\tilde{P} - \oldfrac{1}{N}\ge \oldfrac{1}{N}, v_1 < v_2$, we have
	\begin{align}
		f^\text{case2} < f^\text{case1}.
	\end{align}
	
	This contradicts with the assumption.
	
	\subsubsection{Proof of $v_j \le v_k$ for all $j \in \mathcal{S}^{\textnormal{u}*}$ and $k \in \mathcal{S}^{-*}$}
	We again use a proof by contradiction. Assume there exists a client $j \in \mathcal{S}^{\textnormal{u}*}$ and a client $k \in \mathcal{S}^{-*}$ whose virtual costs violate the stated condition, such that $v_j > v_k$.
	
	To derive a contradiction, we consider a reallocation of the combined resources between clients $j$ and $k$, holding all other client allocations fixed.
	
	By definition of the sets $\mathcal{S}^{\textnormal{u}*}$ and $\mathcal{S}^{-*}$, we have $p_j = 1/N$ and $p_k < 1/N$, respectively. Consequently, both allocations satisfy the condition $p \le 1/N$. Under any reallocation of their total probability, $P = p_j + p_k$, that preserves this condition for the new allocations, the $L_1$-norm term $\|\boldsymbol{p}^\textnormal{s} - \boldsymbol{p}^\textnormal{u}\|_1$ remains constant. This allows us to find a contradiction by analyzing the optimality of the subproblem that involves only the remaining objective terms.

	Without the L1 norm term, we only to consider the following minimization problem,
	\begin{equation}
		\begin{aligned}
			\min_{p_j, p_k, \epsilon_j,\epsilon_k}& \frac{p_j^2}{\epsilon_j^2} + \frac{p_k^2}{\epsilon_k^2},\\
			\text{s.t. }& p_j+p_k = \tilde{P},\\
			&v_j\epsilon_j + v_k\epsilon_k = \tilde{B}.
		\end{aligned}
	\end{equation}
	
	As proved in Lemma \ref{lemma: S-}, we only need to consider the cases that one client has probability $1/N$.
	\begin{itemize}
	\item Case 1: $p_1 = 1/N$:
	In this case, the objective value is
	\begin{align}
		f^\text{case1} =  \frac{\left(\left(\oldfrac{1}{N^2}v_1^2\right)^{1/3}+\left((\tilde{P}-\oldfrac{1}{N})^2v_2^2\right)^{1/3}\right)^3}{\tilde{B}^2}.
	\end{align}

	\item Case 2: $p_2 = 1/N$ (assumption).
	In this case, the objective value is 
	\begin{align}
		f^\text{case2} =  \frac{\left(\left(\oldfrac{1}{N^2}v_2^2\right)^{1/3}+\left((\tilde{P}-\oldfrac{1}{N})^2v_1^2\right)^{1/3}\right)^3}{\tilde{B}^2}.
	\end{align}
	Because $\tilde{P} - \oldfrac{1}{N}< \oldfrac{1}{N}, v_1 < v_2$, we have
	\begin{align}
		f^\text{case2} > f^\text{case1}.
	\end{align}
	\end{itemize}
	This contradicts with the assumption.
		
	\subsubsection{Proof of $v_k \le v_l$ for all $k \in \mathcal{S}^{-*}$ and $l \in \mathcal{S}^{0*}$}
	
	Finally, we prove the third inequality, again by contradiction. Assume there exists a client $k \in \mathcal{S}^{-*}$ and a client $l \in \mathcal{S}^{0*}$ for whom the condition is violated, such that their virtual costs are ordered $v_k > v_l$.
	
	We will show that this assumption allows for an exchange of the allocations between clients $k$ and $l$ that decreases the value of the objective function. Such a decrease would contradict the presumed optimality of the original solution. Let the original allocations be $(p_k, \epsilon_k)$ and $(p_l, \epsilon_l)$. Consider a new allocation where we simply swap their roles. We will now analyze the resulting change in the objective function.
	
	\begin{itemize}
	\item Case 1: $p_1 = p_k, p_2 =0$.
	In this case, the DP error term is
	\begin{align}
		f^\text{case1} = \frac{p_k^2}{B_k^2}v_1^2.
	\end{align}
	
	\item Case 2 (assumption): $p_2 = p_k, p_1=0$.
	In case 2,
	\begin{align}
		f^\text{case2} = \frac{p_k^2}{B_k^2}v_2^2.
	\end{align}
	Because $v_1 < v_2$, we have
	\begin{align}
		f^\text{case2} > f^\text{case1}.
	\end{align}
	\end{itemize}
	This contradicts with the assumption.
	This completes the proof.
\end{proof}

Combining Lemmas \ref{lemma: S+}, \ref{lemma: S-}, and \ref{lemma: threshold} together, we arrive at the Theorem \ref{theorem: structure}.

\subsection{Proof of Proposition \ref{proposition: blue}}\label{subsec: blue}
\begin{proof}
We denote the virtual cost of clients as $\boldsymbol{v} = [v_1, \cdots, v_i, \cdots, v_N], \boldsymbol{v}' = [v_1, \cdots, v_i', \cdots, v_N]$ ($\boldsymbol{v}$ and $\boldsymbol{v}'$ differs only in the $i^{th}$ entry).
We represent the optimal solutions under $\boldsymbol{v}$ and $\boldsymbol{v}'$ as $(\boldsymbol{p}^*,\boldsymbol{\epsilon}^*)$ and $(\boldsymbol{p}'^*,\boldsymbol{\epsilon}'^*)$, respectively.

Then, by the optimality of $(\boldsymbol{p}^*,\boldsymbol{\epsilon}^*)$ and $(\boldsymbol{p}'^*,\boldsymbol{\epsilon}'^*)$, we have
\begin{align}\label{equ: prove con 1}
	&\eta L(\boldsymbol{p}^*,\boldsymbol{\epsilon}^*) + \sum_{k=1}^{N} v_k \epsilon_k^* \le \eta L(\boldsymbol{p}'^*,\boldsymbol{\epsilon}'^*) + \sum_{k=1}^{N} v_k \epsilon_k'^*,  \\\label{equ: prove con 2}
	&\eta L(\boldsymbol{p}'^*,\boldsymbol{\epsilon}'^*) \!+\! \sum_{k\neq i} v_k \epsilon_k'^* \!+\! v_i'\epsilon_i'^* \!\le\! \eta L(\boldsymbol{p}^*,\boldsymbol{\epsilon}^*) \!+\! \sum_{k\neq i} v_k \epsilon_k^* \!+\! v_i'\epsilon_i^*.
\end{align}
Combining (\ref{equ: prove con 1}) and (\ref{equ: prove con 2}) yields
\begin{equation}
\begin{aligned}
	v_i' \epsilon_i'^* - v_i' \epsilon_i^* &\le\eta L(\boldsymbol{p}^*,\boldsymbol{\epsilon}^*) \!-\! \eta L(\boldsymbol{p}'^*,\boldsymbol{\epsilon}'^*) \!+\! \sum_{k\neq i} v_k \epsilon_k^* \!-\! \sum_{k\neq i} v_k \epsilon_k'^*\\
	&\le v_i \epsilon_i'^* - v_i \epsilon_i^*.
\end{aligned}
\end{equation}
Thus, we have
\begin{align}\label{equ: prove con 3}
	(v_i'-v_i)(\epsilon_i'^* - \epsilon_i^*) \le 0.
\end{align}
By the inequality (\ref{equ: prove con 3}) and Assumption \ref{assumption: increasing}, we can conclude that $\epsilon_k$ is weakly decreasing with $c_k$.
This completes the proof.
\end{proof}

\subsection{Proof of Theorem \ref{theorem: budgets}}\label{app: budgets}
\begin{proof} 
 When the client sets $\mathcal{S}^+,\mathcal{S}^\text{u},\mathcal{S}^-,\mathcal{S}^0$ are fixed, by KKT conditions, for any client $k\in\mathcal{N}$, we have 
	\begin{align}\label{equ: aux 1}
		\frac{(p_{k}^\text{s})^2}{\epsilon_k^3} = c_3 v_k, 
	\end{align}
	where $c_3$ is a constant.
	Additionally, given budget $B$, we have
	\begin{align}\label{equ: aux 2}
		c_3 = \frac{(\sum_{i=1}^{N}v_i^{2/3}(p_i^\text{s})^{2/3})^3}{B^3}.
	\end{align}
	Substituting (\ref{equ: aux 2}) into (\ref{equ: aux 1}) yields Theorem \ref{theorem: budgets}.
	This completes the proof.
\end{proof}

\end{document}